\newcommand{\ASGD}{ASGD\xspace}
\newcommand{\cifar}{\emph{CIFAR-10}\xspace}
\newcommand{\Hier}{\emph{Hier-AVG}\xspace}
\newcommand{\KAVG}{\emph{K-AVG}\xspace}
\newcommand{\resnet}{\emph{ResNet-18}\xspace}
\newcommand{\imagenet}{\emph{ImageNet-1K}\xspace}
\newcommand{\mobilenet}{\emph{MobileNet}\xspace}
\newcommand{\vgg}{\emph{VGG19}\xspace}
\newcommand{\googlenet}{\emph{GoogLeNet}\xspace}
\def\singlespace{\def\baselinestretch{1}\@normalsize}
\def\singlespace{\def\baselinestretch{1}\@normalsize}
\numberwithin{equation}{section}
\newcommand{\bfm}[1]{\ensuremath{\mathbf{#1}}}
     \def\EE{\mathbb{E}}
     \def\NN{\mathbb{N}}
     \def\RR{\mathbb{R}}
\def\bw{\bfm w}
\def\today{\ifcase\month\or
  January\or February\or March\or April\or May\or June\or
  July\or August\or September\or October\or November\or December\fi
  \space\number\day, \number\year}
\newdimen\biblioindent    \biblioindent=30pt
\newcommand{\beq}{\begin{equation}}
  \newcommand{\eeq}{\end{equation}}
\newcommand{\beqn}{\begin{eqnarray}}
  \newcommand{\eeqn}{\end{eqnarray}}
\newcommand{\beqnn}{\begin{eqnarray*}}
  \newcommand{\eeqnn}{\end{eqnarray*}}
\renewcommand{\baselinestretch}{1.66}
\numberwithin{equation}{section}
\theoremstyle{plain}
\newtheorem{theorem}{Theorem}[section]
\theoremstyle{definition}
\theoremstyle{definition}
\theoremstyle{definition}
\newtheorem{lemma}{Lemma}
\theoremstyle{definition}
\newtheorem{assumption}{Assumption}
\theoremstyle{definition}
\theoremstyle{definition}
\begin{document}
\title{A Distributed Hierarchical Averaging SGD Algorithm: Trading Local Reductions for Global Reductions}
\author{
Fan Zhou$^1$\thanks{Supported by NSF Grants DMS-1509739 and CCF-1523768}, 
Guojing Cong$^2$
\\ 
$^1$ School of Mathematics, Georgia Institute of Technology \\
$^2$ IBM Thomas J. Watson Research Center\\
fzhou40@math.gatech.edu,
gcong@us.ibm.com
}

\date{\today}
\maketitle

\begin{abstract}
Reducing communication in training large-scale machine learning applications on distributed platform is still a big challenge.
To address this issue, we propose a distributed hierarchical averaging stochastic gradient descent (\Hier) algorithm with infrequent global reduction by introducing local reduction.
As a general type of parallel SGD, \Hier can reproduce several popular synchronous parallel SGD variants by adjusting its parameters.
We show that \Hier with infrequent global reduction can still achieve standard convergence rate for non-convex optimization problems.
In addition, we show that more frequent local averaging with more participants involved can lead to faster training convergence. 
By comparing \Hier with another popular distributed training algorithm \KAVG, we show that through deploying local averaging with fewer number of global averaging, \Hier can still achieve comparable training speed while frequently get better test accuracy. This indicates that local averaging can serve as an alternative remedy to effectively reduce communication overhead when the number of learners is large. 
Experimental results of \Hier with several state-of-the-art deep neural nets on \cifar and \imagenet are presented to validate our analysis and show its superiority.
\end{abstract}

\section{Introduction}

Since current deep learning applications such as video action
recognition and speech recognition with huge inputs can take days
even weeks to train on a single GPU, efficient parallelization at
scale is critical to accelerating training of such longtime running machine learning applications. 
Instead of using the classical stochastic gradient descent (SGD)
algorithm originated from the seminal paper by \cite{robbins1951stochastic} as a solver, a number of
parallel and distributed stochastic gradient descent algorithms have
been proposed during the past decade (e.g., see
\cite{zinkevich2010parallelized,recht2011hogwild,dean2012large,dekel2012optimal}). The
first synchronous parallel SGD \cite{zinkevich2010parallelized} is a
naive parallelization of the sequential mini-batch SGD. Global
reductions (averaging) after each local SGD step can incur costly communication
overhead when the number of learners is large. The scaling of
synchronous SGD is fundamentally limited by the batch size. 
Asynchronous SGD (ASGD) algorithms such as
\cite{recht2011hogwild,dean2012large,dekel2012optimal} have
been popular recently for training deep-learning applications.  With ASGD, each learner independently
computes gradients for their data samples, and updates asynchronously
relative to other learners (hence the name ASGD) the parameters
maintained at the parameter server (e.g., see
\cite{dean2012large,li2014scaling}). ASGD algorithms face their own challenges when
the number of learners is large. A single parameter server oftentimes
does not serve the aggregation requests fast enough.
On the other hand, a sharded server though alleviates the aggregation bottleneck but introduces
inconsistencies for parameters distributed on multiple shards. It is
also challenging for ASGD implementations to manage the staleness of gradients which is proprotional to the number
of learners~\cite{li2014scaling}. 

Many recent studies adopt new variants of synchronous parallel SGD
algorithms (see
\cite{hazan2014beyond,johnson2013accelerating,smith2016cocoa,zhang2016parallel,loshchilov2016sgdr,chen2016revisiting,wang2017memory,fan2018kavg}). \cite{fan2018kavg}
analyzed a $K$ step averaging SGD (\KAVG) algorithm, and their
analysis shows that synchrnous parallel SGD with less frequent global averaging 
can sometimes provide faster traning speed and can constantly result in better test accuracies.
Since then a number of variants of \KAVG have been proposed and studied, see \cite{lin2018don,wang2018adaptive} and references therein. 

Although \KAVG demonstrates better scaling behavior than ASGD implementations, the communication cost of global reductions for \KAVG may not be amortized by the local SGD steps when the number of learners is very large. For this reason, we propose a new generic distributed, hierarchical averaging SGD algorithm (\Hier) which can 
reproduce several popular parallel SGD variants by adjusting its parameters.
As \Hier is bulk-synchronous, it allows for infrequent global gradient averaging among learners to effectively minimize
communication overhead just like \KAVG. Instead of using a parameter
server, the learners in \Hier communicate their learned gradients with
their local neighbors at regular intervals for several rounds before global averaging. The
staleness of gradients which can result in divergence of ASGD
methods, can be precisely controlled in \Hier.  Meanwhile, 
it maps well to current and future large
distributed platforms since a single node typically employ multiple GPUs. \Hier intersperse global averaging
with local ones to manage the staleness of
gradients and utilize the natural communtication hierarchy in
the distributed platforms effectively. 

The main contributions of this article are summarized as follows:
1. In section 3.2, we derive several non-asymptotic bounds on the
expected average squared gradient norms for \Hier under different metrics. 
We show that \Hier with infrequent global averaging can still achieve standard convergence rate for non-convex optimization problems. As a byproduct of parallelization, \Hier can deploy larger step size schedule.
2. In section 3.3, we analytically show that \Hier with less frequent global averaging can sometimes have faster training convergence and can constantly have better test accuracy.
3. In section 3.4, by analyzing the bounds we derived, we show that the training speed of \Hier can be improved by deploying more frequent local averaging with more participants.
4. In section 3.5, We compare \Hier with \KAVG and show that local averaging can be used to reduce global averaging
frequency without deterioating traning speed and test accuracy.

The experimental results used to validate our analysis are presented in section 4 on various popular deep neural nets.
To sum up, our analysis and experiments suggest that \Hier with local averaging deployed can use infrequent global reduction, which sheds light on an alternative way to effectively reduce communication overhead without deterioating training speed, and oftentimes provide better test accuracy.

\section{Preliminaries and Notations}
In this section, we introduce some standard assumptions used in the analysis of non-convex optimization algorithms and key notations frequently used throughout this paper. We use $\|\cdot\|_2$ to denote the $\ell_2$ norm of a vector in $\RR^d$; $\langle \cdot \rangle$ to denote the general inner product in $\RR^d$. For the key parameters we use:
\begin{itemize}
	\item $P$ denotes the total number of learners for global averaging.
	\item $S$ denotes the number of learners in a local node for local averaging; we further assume that $S | P$ and $S \geq 1$.
	\item $K_2$ denotes the length of global averaging interval;
	\item $K_1$ denotes the length of local averaging interval and $1 \leq K_1 \leq  K_2$. 
	\item $B_n$ or $B$ denotes the size of mini-batch for the $n$-th global update;
	\item $\gamma_n$ or $\gamma$ denotes the learning rate (step size) for the $n$-th global update;
	\item $\xi^j_{k,s}$ with $j=1,...,P$, $k=1,...,K_2$, and $s=1,...,B$. are i.i.d. realizations of a random variable $\xi$ generated by the algorithm by different learners and in different iterations.
\end{itemize}

We study the following optimization problem:
\begin{equation}
\min\limits_{\bw \in \mathcal{X}} F(\bw)
\end{equation}
where objective function $F:\RR^d \rightarrow \RR$ is continuously differentiable but not necessarily convex over $\mathcal{X}$, and $\mathcal{X}\subset \RR^d$ is a non-empty open subset. Since our analysis is in a very general setting, $F$ can be understood as 
both the expected risk $F(\bw) = \EE f(\bw;\xi) $ or the empirical risk $F(\bw) = n^{-1}\sum_{i=1}^n f_i(\bw)$.  
The following assumptions (see \cite{bottou2018optimization}) are standard to analyze such problems.

\begin{assumption}
	\label{Lipschitz}
	The objective function $F:\RR^d \rightarrow \RR$ is continuously differentiable and the gradient function of $F$ is Lipschitz continuous with Lipschitz constant $L>0$, i.e.
	$$
	\big\| \nabla F(\bw) - \nabla F(\widetilde{\bw})\big\|_2 \leq L\big\| \bw - \widetilde{\bw}\big\|_2
	$$
	for all $\bw$, $\widetilde{\bw}\in \RR^d$. 
\end{assumption}
This assumption is essential to convergence analysis of our algorithm as well as most gradient based ones. Under such an assumption, the gradient of $F$ serves as a good indicator for how far
to move to decrease $F$.

\begin{assumption}
	\label{lowerbound}
	The sequence of iterates $\{\bw_j\}$ is contained in an open set over which $F$ is bounded below by a scalar $F^*$.
\end{assumption}
Assumption \ref{lowerbound} requires that objective function to be bounded from below, which guarantees the problem we study is well defined.
\begin{assumption}
	\label{unbias}
	For any fixed parameter $\bw$, the stochastic gradient $\nabla F(\bw; \xi)$ is an unbiased estimator of the true gradient 
	corresponding to the parameter $\bw$, namely, 
	$$
	\EE_{\xi} \nabla F(\bw; \xi) = \nabla F(\bw).
	$$
\end{assumption}
One should notice that the unbiasedness assumption here can be replaced by a weaker version which is called the First Limit Assumption (see \cite{bottou2018optimization}) that can still be applied to 
our analysis. For simplicity, we just assume that the stochastic gradient is an unbiased estimator of the true one.

\begin{assumption}
	\label{variance}
	There exist scalars $M \geq 0$ such that,
	$$
	\EE_{\xi} \big\| \nabla F(\bw;\xi)\big\|_2^2 - \big\|\EE_{\xi} \nabla F(\bw;\xi) \big\|_2^2 \leq M.
	$$
\end{assumption}
Assumption \ref{variance} characterizes the variance of the stochastic gradients. 

\begin{assumption}
	\label{moment}
	There exist scalars $M_G \geq 0$ such that,
	$$
	\EE_{\xi} \big\| \nabla F(\bw;\xi)\big\|_2^2 \leq M_G.
	$$
\end{assumption}
Assumption \ref{moment} defines a uniform bound on the second order moment of the stochastic gradients.

\section{Main Results}

In this section, firstly we present \Hier as Algorithm 1. \Hier works as follows: each local worker individually runs $K_1$ steps of local SGD; then each group of $S$ workers locally average and synchronize their updated parameter; after each local worker runs a total count of $K_2$ local SGD steps, all $P$ workers globally average and synchronize their parameters and repeat this cycle until convergence. Then we establish the standard convergence results of \Hier and analyze the impact of $K_2$, $S$ and $K_1$ on convergence. Finally, we compare \Hier with \KAVG and show that local averaging can be used to reduce global averaging frequency to achieve communication overhead reduction without deterioating traning speed and test accuracy.

\subsection{\Hier Algorithm}
Assume that $K_2 = K_1 * \beta$ with $\beta \geq 1$. For simplicity of analysis and presentation, we assume that $\beta$ is an integer, which means that 
the length of global averaging interval is multiple of the length of the local one. In practice, it can be implemented at the practitioner's will rather than using $\beta$ as an integer. The performance and results should be consistent with our analysis in this work.
\begin{algorithm}[!htb]
	\caption{\Hier}
	initialize the global parameter $\widetilde{\bw}_1$\;
	\For{$n=1,...,N$ (\textbf{global averaging})}{
		Processor $P_j$, $j=1,\dots,P$ do concurrently:\\
		Synchronize the parameter on each local learner $\bw_{n}^j=\widetilde{\bw}_{n}$ \;
		\For{$b = 0,..., \beta-1$ (\textbf{local averaging})}{
			\For{$k=1,...,K_1$ (\textbf{local SGD})}{
				randomly sample a mini-batch of size $B_n$ and update:
				$$
				\begin{aligned}
				&\bw_{n+b*K_1+k}^j = \bw_{n+b*K_1+k-1}^j \\
				&- \frac{\gamma_n}{B_n} \sum\limits_{s=1}^{B_n} \nabla F(\bw_{n+b*K_1+k-1}^j;\xi_{n+b*K_1+k,s}^j)
				\end{aligned}
				$$
			}	
			Locally average and synchronize the parameters of each worker $P_{j_t}$ within each local cluster:
			$$
			\begin{aligned}
			\bw^{j_t}_{n+(b+1)*K_1} =  \frac{1}{S}\sum\limits_{t=1}^{S} \bw_{n+(b+1)*K_1}^{j_t};
			\end{aligned}
			$$
		}
		Globally average and synchronize 
		$
		\widetilde{\bw}_{n+1} = \frac{1}{P}\sum\limits_{j=1}^P \bw_{n+\beta*K_1}^j
		$\;
	}
	\label{algorithm:1}
\end{algorithm}
One should notice that Algorithm \ref{algorithm:1} is a very general synchronous parallel SGD algorithm. By setting different values of $K_2$, $K_1$ and $S$, it can reproduce various commonly adopted SGD 
variants. For instance, \Hier with $K_2=1$, $K_1=1$ and $S=1$ is equivalent to synchronous parallel SGD \cite{zinkevich2010parallelized}; \Hier with $K_1=1$ and $S=1$ or simply $K_2 = K_1$ is equivalent to \KAVG \cite{fan2018kavg}. 

\subsection{On the Convergence of \Hier}
In this section, we prove the convergence results for Algorithm 1 under two different metrics: one is $T^{-1}\sum_{t=0}^{T-1}\EE \|\nabla F(\bar{\bw}_t) \|_2^2$, where $\bar{\bw}_t = P^{-1}\sum_{j=1}^P w_t^j$ denotes the average weight across all workers at each SGD step. Especially, when $t \equiv 0 \mod K_2$, $\bar{\bw}_t = \widetilde{\bw}_n$ with $n=t/K_2$. Such a metric was used to analyze the convergence behavior 
of \KAVG for strongly convex  \cite{stich2018local} and nonconvex \cite{yu2018parallel} optimization problems. The other metric we use is $N^{-1}\sum_{n=1}^N \EE \big\| \nabla F(\widetilde{\bw}_n)\big\|_2^2$
which only measures the averaged gradient norms at each global update. The former is used to analyze the convergence rate of \Hier and study the impact of global parameter $K_2$ while the latter has a clearer charaterization of the impact of local parameters $K_1$ and $S$ with a more delicate analysis of the local behavior of \Hier.
We derive non-asymptotic upper bounds on the expected average squared gradient norms under constant step size and batch size setting, which serves as a cornerstone of our analysis. Bounds under such a setting are very meaningful and reflects the convergence behavior in real world applications because in practice models are typically trained with only finite many samples, and step size is set as constants during each iteration phase on large distributed plantforms.
\begin{theorem}[fixed step size and fixed batch size]
	\label{theorem: convergence_rate}
	Assume that Assumption 1-5 hold, and Algorithm 1 is run with constant step size $\gamma$ and constant batch size $B$ such that 
	\begin{equation}
	0< L\gamma\leq 1.
	\end{equation}
	Then for all $T\in \NN^*$
	\begin{equation}
	\begin{aligned}
	\frac{1}{T} \sum_{t=0}^{T-1} \EE \big\| \nabla F(\bar{\bw}_t) \big\|_2^2  \leq \frac{2(F(\bar{\bw}_{0}) - F^*) }{\gamma T} + 4L^2  \gamma^2 K_2^2 M_G^2 + \frac{L\gamma M}{PB}.
	\end{aligned}
	\end{equation}
	Especially, by taking 
	\begin{equation}
	\label{stepsize}
	\gamma = \sqrt{PB/T},~{\rm and}~K_2 = T^{1/4}/(PB)^{3/4},
	\end{equation}
	we get
	\begin{equation}
	\label{convergence: rate}
	\frac{1}{T} \sum_{t=0}^{T-1} \EE \big\| \nabla F(\bar{\bw}_t) \big\|_2^2 \leq  \frac{2(F(\bar{\bw}_{0}) - F^*)}{\sqrt{PBT}}  +  \frac{4L^2 M_G^2}{\sqrt{PBT}} + \frac{LM}{\sqrt{PBT}}.
	\end{equation}
\end{theorem}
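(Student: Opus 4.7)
The plan is to follow the classical non-convex SGD analysis applied to the \emph{averaged} trajectory $\bar{\bw}_t$, while isolating a client-drift term that quantifies how far the per-worker iterates $\bw_t^j$ wander from $\bar{\bw}_t$ between global reductions. I would begin with Assumption 1 applied to $\bar{\bw}_t$,
$$F(\bar{\bw}_{t+1}) \leq F(\bar{\bw}_t) + \langle \nabla F(\bar{\bw}_t),\, \bar{\bw}_{t+1} - \bar{\bw}_t \rangle + \tfrac{L}{2}\|\bar{\bw}_{t+1}-\bar{\bw}_t\|_2^2,$$
and exploit the key invariance that both the local averaging (within a subgroup of size $S$) and the global averaging (across all $P$ workers) leave the global mean $\bar{\bw}_t$ unchanged, since uniform averaging inside any partition preserves the overall mean. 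Consequently, even though individual workers' iterates jump at averaging times, the averaged iterate satisfies
$$\bar{\bw}_{t+1} - \bar{\bw}_t = -\frac{\gamma}{PB}\sum_{j=1}^P \sum_{s=1}^B \nabla F(\bw_t^j;\xi_{t,s}^j)$$
at \emph{every} step, which lets me treat Algorithm 1 as a virtual SGD scheme on $\bar{\bw}_t$ whose gradient oracle happens to be evaluated at drifted local points.

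Taking expectation and using Assumption 3 to kill the noise in the linear term, I would split the inner product by adding and subtracting $\nabla F(\bar{\bw}_t)$, apply $2\langle a,b\rangle \leq \|a\|_2^2+\|b\|_2^2$, and invoke Assumption 1 once more to bound $\|\nabla F(\bw_t^j) - \nabla F(\bar{\bw}_t)\|_2^2 \leq L^2\|\bw_t^j - \bar{\bw}_t\|_2^2$. The quadratic term $\|\bar{\bw}_{t+1}-\bar{\bw}_t\|_2^2$ decomposes into its conditional mean and a zero-mean piece; Assumption 4 together with independence across the $PB$ samples contributes an $\gamma^2 M/(PB)$ term, while the mean piece folds back into the gradient squared and is absorbed by the condition $L\gamma \leq 1$. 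The main obstacle is the drift bound $\EE\|\bw_t^j - \bar{\bw}_t\|_2^2$, which has to depend only on $K_2$ and not on $K_1$ or $S$. I would anchor $t$ to the most recent global averaging index $t_0 := K_2 \lfloor t/K_2 \rfloor$, at which $\bw_{t_0}^{j'}=\bar{\bw}_{t_0}$ for every $j'$, and unroll the local SGD recursion to write $\bw_t^j - \bar{\bw}_t$ as a signed sum of at most $K_2$ stochastic gradients. The intervening local averagings merely replace a subgroup's iterate by the subgroup mean, and by convexity of $\|\cdot\|_2^2$ this cannot enlarge $\sum_{j}\|\bw_t^j-\bar{\bw}_t\|_2^2$; Jensen's inequality and Assumption 5 then yield a bound of order $\gamma^2 K_2^2 M_G$, which produces the $L^2\gamma^2 K_2^2 M_G^2$ contribution after the Lipschitz step.

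Summing the resulting per-step inequality from $t=0$ to $T-1$, telescoping $F(\bar{\bw}_t)$, lower-bounding the terminal value by $F^*$ via Assumption 2, and dividing by $\gamma T$ deliver the first displayed bound. The sharpened rate follows by direct substitution of $\gamma=\sqrt{PB/T}$ and $K_2 = T^{1/4}/(PB)^{3/4}$, values chosen precisely so that the three right-hand terms $(F(\bar{\bw}_0)-F^*)/(\gamma T)$, $L^2\gamma^2 K_2^2 M_G^2$, and $L\gamma M/(PB)$ equilibrate at the common order $1/\sqrt{PBT}$; a routine check confirms $L\gamma \leq 1$ once $T$ is large enough relative to $L^2 PB$.
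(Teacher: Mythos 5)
Your proposal follows essentially the same route as the paper's proof: the descent lemma applied to the averaged iterate $\bar{\bw}_t$, the observation that local and global reductions preserve the global mean so that $\bar{\bw}_{t+1}-\bar{\bw}_t$ is always the averaged stochastic gradient step, a drift bound $\EE\|\bw_t^j-\bar{\bw}_t\|_2^2 = O(\gamma^2K_2^2 M_G)$ anchored at the last global synchronization via Assumption 5, the variance term $L\gamma^2 M/(2PB)$ from independence across the $PB$ samples, and telescoping. Your explicit remark that the intervening local averagings cannot enlarge $\sum_j\|\bw_t^j-\bar{\bw}_t\|_2^2$ by convexity is in fact slightly more careful than the paper, which unrolls $\bw_t^j$ from $\bar{\bw}_{t_0}$ as if no local reductions occurred in between.
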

Proof of Theorem \ref{theorem: convergence_rate} can be found in section \ref{proof: theorem_rate}. When there are $T$ overall updates, globally and locally, a toal number of $PBT$ data samples are processed. 
Theorem \ref{theorem: convergence_rate} indicates that \Hier can achieve an iteration complexity of $O(1/\sqrt{PBT})$ which is the standard convergence rate for nonconvex optimization, see \cite{ghadimi2013stochastic}.
Theorem \ref{theorem: convergence_rate} has two more important implications: 1. to achieve the standard convergence rate, one can use larger step size. From (\ref{stepsize}), we can see that the step size 
$\gamma = O(\sqrt{PB/T})$ is scaled up by the number of workers $P$ as a benefit of parallelization. This result is consistent with the previous analysis of \KAVG in \cite{fan2018kavg}. 2. The length of global averaging 
interval can be large, namely $K_2 = O(T^{1/4}/(PB)^{3/4})$, which indicates that it is unnecessary to use too frequent global averaging. Similar phenomena have been observed for \KAVG and studied by several recent works \cite{fan2018kavg,yu2018parallel,stich2018local}. We will have a more detailed discussion on the behavior of $K_2$ in section \ref{subsection: K_2}.

Although Theorem \ref{theorem: convergence_rate} characterized the convergence rate of \Hier and the impact of global parameter $K_2$, it doesn't capture the behavior of local averaging, namely the impact of local parameters $K_1$ and $S$ on convergence. In the following theorem, we relax Assumption 5 and derive a new upper bound measured by a different metric with a more detailed characterization of $K_1$ and $S$.

\begin{theorem}[fixed step size and fixed batch size]
	\label{theorem: fixed}
	Assume that Assumption 1-4 hold and Algorithm 1 is run with constant step size $\gamma$ and fixed batch size $B$ with the parameters satisfying
	\begin{equation}
	\label{condition: gamma}
	1-  L^2\gamma^2\Big(\frac{K_2(K_2-1)}{2}-1 -\delta_{\nabla F,\bw} \Big)- L\gamma K_2  \geq 0,
	\end{equation}
	Then for all $N \in \NN^*$
	\begin{equation}
	\label{bound:fixed}
	\begin{aligned}
	& \frac{1}{N} \sum\limits_{n=1}^N \EE \big\| \nabla F(\widetilde{\bw}_n)\big\|_2^2  \leq   \frac{2\EE[F(\widetilde{\bw}_{1}) - F^*]}{N(K_2 - \delta)\gamma}   
	+ \frac{L\gamma M K_2^2 }{PB (K_2 - \delta)}  \\ 
	& +  \frac{L^2\gamma^2 M K_2}{12B(K_2 - \delta)} \Big( \frac{(K_2-K_1)(4K_2 + K_1 -3)}{S} + (K_1-1)(3K_2+K_1 -2)\Big)
	\end{aligned}
	\end{equation}
	where $\delta:=  L^2\gamma^2(1 + \delta_{\nabla F,\bw}) \in (0,1)$ and $0< \delta_{\nabla F,\bw} \leq  (K_2-1)K_2/2-1$ is a constant depending on the intermediate gradient norms between each global update.
\end{theorem}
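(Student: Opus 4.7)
I plan to follow the standard descent-lemma pipeline, adapted to the hierarchical iterate structure of \Hier. Starting from $L$-smoothness (Assumption~\ref{Lipschitz}) at the global anchor,
\[
F(\widetilde{\bw}_{n+1}) \le F(\widetilde{\bw}_n) + \langle\nabla F(\widetilde{\bw}_n),\widetilde{\bw}_{n+1}-\widetilde{\bw}_n\rangle + \tfrac{L}{2}\|\widetilde{\bw}_{n+1}-\widetilde{\bw}_n\|_2^{2},
\]
I would first exploit the fact that local averaging acts linearly on a fixed partition of the $P$ learners, so all intermediate averages collapse algebraically and
\[
\widetilde{\bw}_{n+1}-\widetilde{\bw}_n \;=\; -\tfrac{\gamma}{P}\sum_{j=1}^{P}\sum_{k=0}^{K_2-1}\widehat{g}^{j}_{n,k},
\]
with $\widehat{g}^{j}_{n,k}$ the mini-batch gradient at the history-dependent iterate $\bw^{j}_{n,k}$. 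Taking conditional expectations and using unbiasedness (Assumption~\ref{unbias}) turns the cross term into $-\gamma\langle\nabla F(\widetilde{\bw}_n),\tfrac{1}{P}\sum_{j,k}\nabla F(\bw^{j}_{n,k})\rangle$, which I would split via $2\langle a,b\rangle=\|a\|_2^{2}+\|b\|_2^{2}-\|a-b\|_2^{2}$ to peel off the desired $-\tfrac{\gamma K_2}{2}\|\nabla F(\widetilde{\bw}_n)\|_2^{2}$ term and replace the remainder by local gradient-deviation terms $\|\nabla F(\bw^{j}_{n,k})-\nabla F(\widetilde{\bw}_n)\|_2^{2}\le L^{2}\|\bw^{j}_{n,k}-\widetilde{\bw}_n\|_2^{2}$. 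The squared-norm term $\|\widetilde{\bw}_{n+1}-\widetilde{\bw}_n\|_2^{2}$ is then handled separately by a variance decomposition yielding a contribution of order $\gamma^{2}K_2^{2}M/(PB)$, which accounts for the middle term of (\ref{bound:fixed}).

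\textbf{Local drift bound.} The combinatorial heart of the argument is to bound $\EE\|\bw^{j}_{n,t}-\widetilde{\bw}_n\|_2^{2}$ with the correct dependence on $K_1$, $K_2$, and $S$. For $t\in[bK_1,(b+1)K_1)$ and $j$ in local group $G_j$, unrolling the recursion and using the fact that every local sync prior to time $bK_1$ produces a common value across members of $G_j$ gives
\[
\bw^{j}_{n,t}-\widetilde{\bw}_n \;=\; -\gamma\,\frac{1}{S}\sum_{j'\in G_j}\sum_{k=0}^{bK_1-1}\widehat{g}^{j'}_{n,k} \;-\; \gamma\sum_{k=bK_1}^{t-1}\widehat{g}^{j}_{n,k}.
\]
I would decompose $\EE\|\cdot\|_2^{2}$ into squared mean plus variance: the \emph{averaged head} piece contributes variance of order $M/S$ per step (by independence of mini-batches across the $S$ members of $G_j$ at each step), while the \emph{fresh tail} piece contributes variance $M$ per step. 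The squared-mean contributions are sums of true gradients at intermediate iterates and, rather than being killed by a dropped Assumption~\ref{moment}, they will be absorbed into the constant $\delta_{\nabla F,\bw}$. Summing the resulting per-step bound over $t=0,\dots,K_2-1$ then produces, after routine arithmetic, the split $(K_2-K_1)(4K_2+K_1-3)/S + (K_1-1)(3K_2+K_1-2)$ that appears in the statement.

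\textbf{Assembly and telescoping.} Feeding the drift bound back into the descent inequality yields a per-cycle inequality of the form
\[
\EE F(\widetilde{\bw}_{n+1}) - \EE F(\widetilde{\bw}_n) \le -\tfrac{\gamma}{2}(K_2-\delta)\,\EE\|\nabla F(\widetilde{\bw}_n)\|_2^{2} + (\text{variance-type terms in }M),
\]
in which condition~(\ref{condition: gamma}) is precisely what is needed to guarantee that the coefficient $K_2-\delta$ remains positive after the drift and the $L\gamma K_2$ contribution from $\|\widetilde{\bw}_{n+1}-\widetilde{\bw}_n\|_2^{2}$ are absorbed. The auxiliary constant $\delta_{\nabla F,\bw}$ encodes the worst-case in-expectation ratio between $\sum_{k=1}^{K_2-1}\|\nabla F(\bw^{j}_{n,k})\|_2^{2}$ and $\|\nabla F(\widetilde{\bw}_n)\|_2^{2}$; a crude pairwise comparison over intermediate iterates produces the stated upper envelope $(K_2-1)K_2/2-1$. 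Telescoping from $n=1$ to $N$, lower-bounding $F(\widetilde{\bw}_{N+1})$ by $F^{*}$ via Assumption~\ref{lowerbound}, and dividing by $N\gamma(K_2-\delta)/2$ then reproduces (\ref{bound:fixed}) exactly.

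\textbf{Expected obstacle.} The main difficulty is carrying out the drift estimate of the second paragraph cleanly. Because Assumption~\ref{moment} is deliberately dropped, one cannot replace the intermediate $\|\nabla F(\bw^{j}_{n,k})\|_2^{2}$ terms by a uniform bound $M_G$; instead each must be tracked symbolically and folded into $\delta_{\nabla F,\bw}$. Simultaneously, when expanding the squared norm $\|\tfrac{1}{S}\sum_{j'\in G_j}\sum_{k}\widehat{g}^{j'}_{n,k}\|_2^{2}$, the $1/S$ factor must attach only to variance contributions from \emph{distinct-worker, same-step} cross-terms, and must not leak into the fresh-tail portion or into the squared-mean terms. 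Keeping these two bookkeepings straight is what produces the precise split $(4K_2+K_1-3)/S$ versus $(3K_2+K_1-2)$, and reproducing the exact polynomial coefficients appearing in the statement is the most delicate (if elementary) part of the argument.
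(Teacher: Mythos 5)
Your plan coincides with the paper's own proof in every essential respect: the descent lemma at the global anchor, the polarization split of the cross term, the drift lemma separating a locally-averaged head (variance scaled by $1/S$) from a fresh tail, the absorption of intermediate gradient norms into $\delta_{\nabla F,\bw}$ via a counting argument, and the use of condition (\ref{condition: gamma}) to drop the residual gradient-norm sum before telescoping. The approach is correct and essentially identical to the paper's argument.
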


The proof of Theorem \ref{theorem: fixed} can be found in section \ref{proof: theorem1}. Expected (weighted) average squared gradient norms is used as a typical metric to show convergence 
for nonconvex optimization problems, see \cite{ghadimi2013stochastic}. This bound is generic and one can use it to derive classical bounds for different synchronous parallel SGD algorithms by plugging in specific values of $K_2$, $K_1$ and $S$. For example, by plugging in $K_1=1$ and $S=1$ (or simply $K_2=K_1$, in both cases, $K_2$ is $K$ in \KAVG), (\ref{bound:fixed}) reproduce the same bound for \KAVG as in \cite{fan2018kavg}.

As we can see, by only scheduling a constant step size, it converges to some nonzero constant as $N\rightarrow \infty$. 
To make it converge to zero, diminishing step size schedule is needed. Intuitively, more frequent (larger $K_1$) and larger scale (larger $S$) local averaging should lead to faster convergence. 
Bound (\ref{bound:fixed}) justified this intuition for $S$. The impact length of local averaging interval $K_1$ and global averaging interval $K_2$ are more complicated. We will have a more detailed discussion in later sections.

In the following theorem, we show that by scheduling diminishing step size and/or dynamic batch sizes, \Hier converges.
\begin{theorem}[diminishing step size and dynamic batch size]
	\label{theorem: diminishing}
	Assume that Algorithm 1 is run with diminishing step size $\gamma_j$ and growing batch size $B_j$ satisfying 
	\begin{equation}
	1-  L^2\gamma^2_j\Big(\frac{K_2(K_2-1)}{2}-1 -\delta_{\nabla F,\bw} \Big)- L\gamma_j K_2  \geq 0, 
	\end{equation}
	Then for all $N \in \NN^*$
	\begin{equation}
	\label{bound:diminishing}
	\begin{aligned}
	&  \EE \sum\limits_{j=1}^N \frac{\gamma_j}{\sum_{j=1}^N \gamma_j} \big\| \nabla F(\widetilde{\bw}_j)\big\|_2^2  \leq   \frac{2\EE[F(\widetilde{\bw}_{1}) - F^*]}{(K_2-1)\sum_{j=1}^N\gamma_j}   \\
	&+ \sum\limits_{j=1}^N\frac{L M K_2^2 \gamma_j^2}{PB_j (K_2-1)\sum_{j=1}^N \gamma_j }  \\ 
	& + \sum\limits_{j=1}^N \frac{L^2 MK_2 \gamma^3_j}{12B_j(K_2-1)\sum_{j=1}^N\gamma_j}\Big( \frac{(K_2-K_1)(4K_2 + K_1 -3)}{S} \\
	&+ (K_1-1)(3K_2+K_1 -2)\Big) .
	\end{aligned}
	\end{equation}
	Especially, if 
	\begin{equation}
	\label{batchsize}
	\lim\limits_{N\rightarrow \infty} \sum\limits_{j=1}^{N} \gamma_j = \infty,~~ \lim\limits_{N\rightarrow \infty} \sum\limits_{j=1}^N\frac{\gamma_j^2}{PB_j} < \infty,~~\lim\limits_{N\rightarrow \infty} \sum\limits_{j=1}^N \frac{\gamma_j^3}{B_j} < \infty,
	\end{equation}
	Then
	\begin{equation}
	\EE  \sum\limits_{j=1}^N \frac{\gamma_j}{\sum_{j=1}^N \gamma_j}\big\| \nabla F(\widetilde{\bw}_j)\big\|^2_2 \rightarrow 0,~as~N\rightarrow \infty.
	\end{equation}
\end{theorem}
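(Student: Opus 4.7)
The plan is to reuse the per-round analysis behind Theorem~\ref{theorem: fixed} almost verbatim, but keep the step size and batch size indexed by the global round. Since $\gamma_n$ and $B_n$ only change between global averaging events (they are held constant across the $K_2$ local SGD steps and the $\beta$ local averagings of round $n$), every inner computation in the proof of Theorem~\ref{theorem: fixed} (the descent lemma built from Assumption~\ref{Lipschitz}, the expansion of $\widetilde{\bw}_{n+1}-\widetilde{\bw}_n$ into $K_2$ stochastic gradient steps, the bookkeeping of intra-round deviations $\bw^j_{n+k}-\widetilde{\bw}_n$ using Assumptions~\ref{unbias}--\ref{variance}, and the variance reduction from the local/global averagings that produces the $1/S$ and $1/P$ factors) goes through with $\gamma$ and $B$ replaced by $\gamma_n$ and $B_n$. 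The condition (2.8) in the theorem is exactly the per-round version of condition (\ref{condition: gamma}), so the same non-negativity is available round by round.

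First I would write down the resulting one-round inequality
\begin{equation*}
\EE F(\widetilde{\bw}_{n+1}) - \EE F(\widetilde{\bw}_n) \leq -\tfrac{1}{2}(K_2-\delta_n)\gamma_n\,\EE\|\nabla F(\widetilde{\bw}_n)\|_2^2 + A_n + B_n,
\end{equation*}
where $\delta_n := L^2\gamma_n^2(1+\delta_{\nabla F,\bw})$, the term $A_n = L\gamma_n^2 M K_2^2/(PB_n)$ captures the mini-batch variance contribution, and $B_n = \tfrac{L^2 M K_2 \gamma_n^3}{12 B_n}\bigl(\tfrac{(K_2-K_1)(4K_2+K_1-3)}{S}+(K_1-1)(3K_2+K_1-2)\bigr)$ collects the local-averaging deviation terms. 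Since $\delta_n < 1 \leq K_2$ and the bound we want uses $K_2-1$ in the denominator, I would replace the coefficient $(K_2-\delta_n)$ by $(K_2-1)$ to obtain a uniform lower bound (using $\delta_n < 1$), at the cost of a slightly weaker constant — this is cleaner than carrying an $n$-dependent $\delta_n$ through the telescoping sum.

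Next I would telescope from $n=1$ to $N$, bound $\EE F(\widetilde{\bw}_{N+1}) \geq F^*$ via Assumption~\ref{lowerbound}, and rearrange to isolate $\sum_n \gamma_n \EE\|\nabla F(\widetilde{\bw}_n)\|_2^2$. Dividing both sides by $(K_2-1)\sum_{j=1}^N \gamma_j$ yields exactly the inequality (\ref{bound:diminishing}) written in the statement. Finally, the convergence claim follows immediately: under (\ref{batchsize}) the first term on the right of (\ref{bound:diminishing}) vanishes because its denominator $\sum_j \gamma_j$ diverges while the numerator is a fixed constant, and the second and third terms vanish because their numerators $\sum_j \gamma_j^2/(PB_j)$ and $\sum_j \gamma_j^3/B_j$ remain bounded while again the denominator diverges.

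The main obstacle is the first step: verifying that every estimate in the proof of Theorem~\ref{theorem: fixed} is genuinely local to a single global round so that promoting the constants $\gamma, B$ to sequences $\gamma_n, B_n$ does not disturb any cross-round coupling. Once this is checked the telescoping and summability arguments are routine. A secondary subtlety is handling the $\delta_n$ term: using the crude $(K_2-\delta_n) \geq (K_2-1)$ is sufficient for the stated bound, but a sharper version would require summing $\gamma_n (K_2-\delta_n)$ in the denominator, which is a cosmetic refinement that does not affect the qualitative conclusion.
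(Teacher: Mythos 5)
Your proposal follows essentially the same route as the paper's proof: replace $\gamma,B$ by $\gamma_j,B_j$ in the one-round inequality from Theorem \ref{theorem: fixed}, use $L^2\gamma_j^2(1+\delta_{\nabla F,\bw})\leq 1$ to lower-bound the coefficient by $K_2-1$, telescope, divide by $\sum_j\gamma_j$, and invoke the summability conditions. The only discrepancy is a factor of $\tfrac{1}{2}$ in your variance term $A_n$ (the paper has $L\gamma_j^2K_2^2M/(2PB_j)$), which is immaterial to the argument.
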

The proof of Theorem \ref{theorem: diminishing} can be found in section \ref{proof: theorem2}. (\ref{batchsize})
also indicates that \Hier can use larger step size schedule than \ASGD which requires $\sum_{j=1}^{\infty} \gamma_j = \infty$, $\sum_{j=1}^{\infty} \gamma_j^2 < \infty$ in general. This result is consistent with our previous analysis in Theorem \ref{theorem: convergence_rate}.

\subsection{Larger Value of $K_2$ Can Sometimes Lead to Faster Training Speed}
\label{subsection: K_2}
As we have shown in Theorem \ref{theorem: convergence_rate}, to achieve the standard convergence rate $O(1/{\sqrt{PBT}})$, the length of global averaging interval $K_2$ can be as large as $T^{1/4}/(PB)^{3/4}$.
In this section, we study the impact of $K_2$ on convergence through the non-asymptotic bound (\ref{bound:fixed}). We show that sometimes larger $K_2$ can even lead to faster training convergence. The result also
implies that adaptive choice of $K_2$ may be better for convergence.

We consider a situation where $T=N*K_2$ is a constant, which means a fixed amount of data is processed or a fixed number of epoches is run.
$K_2$ denotes the length of global averaging interval, or in other words, $K_2$ controls the frequency of global averaging under such setting. Larger $K_2$ means less frequent global averaging thus less frequent  updates on
parameter $\bw$. 

In the following theorem, we analytically show that under certain conditions larger value of $K_2$ can make training process converge faster. This is quite counter intuitive.
Since one might think that smaller $K_2$ (or more frequent global averaging equivalently) should lead to better convergence performance. 
Especially, when $K_2=1$, \Hier is equivalent to sequential SGD with a large mini-batch size. However, it has been shown both analytically and experimentally by several recent works that 
\cite{fan2018kavg,zhang2016parallel,lin2018don,yu2018parallel,wang2018adaptive,stich2018local} that \KAVG with less frequent global averaging sometimes leads to faster convergence and better test accuracy simultaneously.
\begin{theorem}
	\label{theorem: K_2}
	Let $T=N*K_2$ be a constant. Suppose that Algorithm \ref{algorithm:1} is run under the condition of Theorem \ref{theorem: fixed} with fixed $K_1$ and $S$.
	If
	\begin{equation}
	\label{condition:optimal_K}
	\frac{\delta(F(\widetilde{\bw}_1)-F^*)}{T\gamma (1-\delta)}  > \frac{2 L\gamma M}{P B}  + \frac{L^2\gamma^2M}{ BS},
	\end{equation}
	Then \Hier with some $K_2>1$ can have faster training speed than $K_2=1$.
\end{theorem}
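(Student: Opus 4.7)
The plan is to fix the total number of local SGD steps $T = N K_2$, substitute $N = T/K_2$ into the bound (\ref{bound:fixed}) of Theorem \ref{theorem: fixed}, and view the resulting right-hand side as a function $\Phi$ of the single variable $K_2$ (keeping $K_1$ and $S$ fixed). Since the constraint $K_1 \leq K_2$ forces $K_1 = 1$ at the baseline point $K_2 = 1$, I will compare $\Phi$ at $K_2 = 1$ with values at nearby larger $K_2$ by showing $\Phi'(1) < 0$ whenever condition (\ref{condition:optimal_K}) holds. Continuity will then yield the existence of $K_2 > 1$ with $\Phi(K_2) < \Phi(1)$, i.e.\ a smaller upper bound on the expected average squared gradient norm, which is the meaning of \emph{faster training speed} used throughout the paper.

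Setting $K_1 = 1$ and $N = T/K_2$, the bound becomes
\begin{equation*}
\Phi(K_2) = \frac{2 K_2 (F(\widetilde{\bw}_1) - F^*)}{T(K_2-\delta)\gamma} + \frac{L\gamma M K_2^2}{PB(K_2-\delta)} + \frac{L^2\gamma^2 M\, K_2 (K_2-1)(4K_2-2)}{12\, B S\, (K_2-\delta)}.
\end{equation*}
Each summand is a rational function of $K_2$ whose derivative at $K_2 = 1$ I would compute with the quotient rule. Routine calculation shows the first summand contributes $-2\delta(F(\widetilde{\bw}_1)-F^*)/(T\gamma(1-\delta)^2)$, the second contributes $L\gamma M(1-2\delta)/(PB(1-\delta)^2)$, and the third contributes $L^2\gamma^2 M/(6 BS(1-\delta))$, exploiting that the numerator $K_2(K_2-1)(4K_2-2)$ vanishes at $K_2 = 1$ with first derivative equal to $2$.

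The core step is to verify $\Phi'(1) < 0$ under hypothesis (\ref{condition:optimal_K}). Multiplying through by $(1-\delta)^2 > 0$, this reduces to
\begin{equation*}
\frac{2\delta (F(\widetilde{\bw}_1) - F^*)}{T\gamma} > \frac{L\gamma M(1-2\delta)}{PB} + \frac{L^2\gamma^2 M(1-\delta)}{6\, BS}.
\end{equation*}
Rewriting (\ref{condition:optimal_K}) as $\delta(F(\widetilde{\bw}_1)-F^*)/(T\gamma) > (1-\delta)\bigl[2L\gamma M/(PB) + L^2\gamma^2 M/(BS)\bigr]$, doubling, and using the elementary bounds $4(1-\delta) \geq 1-2\delta$ and $2(1-\delta) \geq (1-\delta)/6$ for $\delta \in (0,1)$ produces precisely the desired inequality, so $\Phi'(1) < 0$ and some $K_2 > 1$ gives $\Phi(K_2) < \Phi(1)$. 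The main technical obstacle I anticipate is the passage from the continuous derivative argument to an integer value of $K_2$: in principle $\Phi$ could be decreasing only on a tiny right neighborhood of $1$ that contains no integer, so the write-up will need either to verify $\Phi' < 0$ on a longer interval containing an integer, or to supply a direct two-point comparison $\Phi(2) < \Phi(1)$ from the explicit formula above.
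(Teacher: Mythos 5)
Your derivative computations check out: with $K_1=1$ forced at the baseline, the three contributions to $\Phi'(1)$ are exactly as you state, and condition (\ref{condition:optimal_K}) does imply $\Phi'(1)<0$ via the elementary comparisons $4(1-\delta)\geq 1-2\delta$ and $2(1-\delta)\geq (1-\delta)/6$. However, your route differs from the paper's. The paper does not differentiate at all: it writes the bound as a product $B(K_2)=f(K_2)g(K_2)$ with $f$ nondecreasing (an affine-plus-quadratic expression in $K_2$) and $g(K_2)=K_2/(K_2-\delta)$ decreasing, and then obtains the sufficient condition directly from the two-point comparison $B(2)<B(1)$, which after clearing denominators yields $\delta\alpha/(1-\delta)>2\beta+12\eta/S$ in the paper's notation, i.e.\ essentially (\ref{condition:optimal_K}). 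That is precisely the fallback you name at the end: the integrality obstacle you flag is real (a negative derivative at $K_2=1$ only guarantees improvement on some right neighborhood of $1$, which need not contain an integer), and since $K_2$ is an integer your argument as written is incomplete at exactly that point. The gap is easily closed --- one can verify from your explicit formula that (\ref{condition:optimal_K}) implies $\Phi(2)<\Phi(1)$ (the exact equivalence for the two-point comparison is $\delta\alpha>(2-3\delta)\beta+12\eta(1-\delta)/S$, which is weaker than the stated hypothesis) --- but you should carry out that computation rather than leave it as an anticipated obstacle; once you do, the derivative calculation becomes superfluous and you have reproduced the paper's proof. The one thing your approach buys is a cleaner local picture of why the bound initially decreases in $K_2$; the paper's approach buys an argument that works directly over the integers and additionally records the monotone-times-antitone structure of $B(K_2)$, which is what motivates discussing an optimal interior $K_2^*$.
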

The proof of Theorem \ref{theorem: K_2} can found in section \ref{proof: theoremK2}. It essentially says sometimes frequent global averaging is unnecessary for \Hier to gain faster training speed.
This is very meaningful for training large scale machine learning applications. Because global synchronization can cause expensive communication overhead on large platforms. 
As a consequence, the real run time of training can be severely slower when too frequent global reduction is deployed.
Moreover, empirical observations have constantly shown that less frequent global averaging leads to better test accuracy.

To better understand Theorem \ref{theorem: K_2}, condition (\ref{condition:optimal_K}) implies that larger value of $\big(F(\widetilde{\bw}_1)-F^*\big)$ requires some $K_2>1$ thus longer delay to minimize the bound in (\ref{bound:fixed}). The intuition is that if the initial guess is too far away from $F^*$, then less frequent synchronizations can lead to faster convergence for tranining. 
Less frequent averaging implies higher variance of the stochastic gradient in general. It is quite reasonable to think that if it is still far away from the solution, a stochastic gradient with larger variance may be preferred. 
As we mentioned in the proof, the optimal value of $K_2^*$ depends on quantities such as $L$, $M$, and $(F(\widetilde{\bw}_1)-F^*)$ which are unknown to us in practice. Therefore, to obtain a concrete $K_2^*$ in practice is not so realistic.

Corresponding experimental results to validate our analysis are shown in section \ref{section: experiment of K_2}. In that section, we also empirically show that larger $K_2$ can constantly 
provide better test accuracies on various models.

\subsection{Small $K_1$ and Large $S$ can Acceletate Training}
\label{subsection:K1S}
In this section, we study the behavior of two local parameters $K_1$ and $S$, which control the frequency and the scope of local averaging respectively.
Apparently, smaller $K_1$ means more frequent local averaging, and larger $S$ means more number of learners involved in local averaging.
In the following theorem, we show that when $K_2$ is fixed, smaller $K_1$ and larger $S$ can lead to faster training convergence for \Hier.
\begin{theorem}
	\label{theorem: K_1}
	Suppose that Algorithm 1 is run under the same condition as in Theorem \ref{theorem: fixed} or Theorem \ref{theorem: diminishing} with fixed $K_2$. Then both bounds in (\ref{bound:fixed}) and (\ref{bound:diminishing}): 1. are monotone increasing with respect to $K_1$; 2. are monotone decreasing with respect to $S$.
\end{theorem}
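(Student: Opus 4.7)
The plan is to notice that in both bounds \eqref{bound:fixed} and \eqref{bound:diminishing}, the only contribution involving $K_1$ or $S$ is the last summand, and in each case it factors as a strictly positive prefactor (in $L$, $\gamma$ or $\gamma_j$, $M$, $K_2$, $B$ or $B_j$; in the diminishing-step-size case weighted by the strictly positive quantity $\gamma_j^3/(B_j\sum_j\gamma_j)$) times the common shape
\[
g(K_1,S) := \frac{(K_2-K_1)(4K_2+K_1-3)}{S} + (K_1-1)(3K_2+K_1-2).
\]
Hence it suffices to show that, on the admissible range $1\leq K_1\leq K_2$, $S\geq 1$, the function $g$ is monotone non-decreasing in $K_1$ and monotone non-increasing in $S$; multiplying by the positive prefactor then gives the theorem.

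The decrease in $S$ is immediate. Since $1\leq K_1\leq K_2$ and $K_2\geq 1$, one has $K_2-K_1\geq 0$ and $4K_2+K_1-3\geq 2$, so the first summand of $g$ is a nonnegative scalar divided by $S$, while the second summand does not involve $S$. I would dispose of this direction in a single line.

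For the $K_1$ direction I would extend $K_1$ to a continuous variable on $[1,K_2]$ and differentiate. Grouping the terms and simplifying yields
\begin{align*}
\frac{\partial g}{\partial K_1}
&= \frac{-(4K_2+K_1-3)+(K_2-K_1)}{S} + (3K_2+K_1-2)+(K_1-1) \\
&= \frac{-3K_2-2K_1+3}{S} + (3K_2+2K_1-3) \\
&= (3K_2+2K_1-3)\left(1-\frac{1}{S}\right).
\end{align*}
Because $K_1\geq 1$ and $K_2\geq 1$ give $3K_2+2K_1-3\geq 2>0$, and $S\geq 1$ gives $1-\tfrac{1}{S}\geq 0$, this derivative is non-negative throughout the admissible region, so $g$ is non-decreasing in $K_1$. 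Monotonicity restricted to the discrete admissible set (divisors of $K_2$ under the $K_2=\beta K_1$ convention) follows formally.

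The only genuinely non-routine step is spotting the clean factorization $(3K_2+2K_1-3)(1-1/S)$; once that is in hand the sign analysis is immediate, so there is no real obstacle beyond bookkeeping. As a sanity check, at $S=1$ the factor $1-1/S$ vanishes identically, consistent with direct expansion giving $g(K_1,1)=4K_2^2-6K_2+2$, which is independent of $K_1$: when each local group contains a single worker, local averaging is a no-op and $K_1$ must drop out of the bound, matching the intuition driving the theorem.
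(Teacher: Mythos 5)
Your proposal is correct and follows essentially the same route as the paper: isolate the last summand as the only part of either bound depending on $K_1$ or $S$, treat $K_1$ as continuous, and differentiate; your factorization $(3K_2+2K_1-3)\left(1-\tfrac{1}{S}\right)$ is identical to the paper's $f'(K_1)=\tfrac{(S-1)(3K_2+2K_1-3)}{S}$, and the $S$-direction is handled by the same one-line positivity observation. Your $S=1$ sanity check (the $K_1$-dependence cancelling exactly) is a nice addition not present in the paper but does not change the argument.
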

The proof of Theorem \ref{theorem: K_1} can be found in section \ref{proof: theoremK1}.
The behaviors of $K_1$ and $S$ are quite expected. It means that more frequent local averaging and/or more participants in local averaging can lead to faster convergence for training. Modern high performance computing (HPC) architechures typically employ multiple GPUs per node and the communication bandwidth within a node is much bigger. Thus the communication cost raised by local averaging can be much less costly than that of global averaging. 

To better understand the impact of local averaging on
convergence, we take a closer look at both bounds (\ref{bound:fixed}) and (\ref{bound:diminishing}). Both $S$ and $K_1$
appear in the third term on the right hand side. When the first part in the third term is dominant, $S$ acts as a scaling factor in $(K_2-K_1)(4K_2 + K_1 -3)/S$, which can be understood as local averaging with more participants amortizes the cost introduced by global averaging represented by $K_2$; when the second term is dominant, one can simply set $K_1=1$ to cancel off this term. These shed light on an alternative way to speed up traning by deploying local averaging. Meanwhile, another lesson we learned here is that one can trade less costly local averaging for global averaging given that less frequent global reduction oftentimes provides better test accuracy and less communication overhead. We will have a more detailed discussion on this in the next section. The experimental results that validate our analysis are presented in section \ref{experiment:K1}. 

\subsection{Using Local Averaging to Reduce Global Averaging Frequency}
\label{section: localtoglobal}
From last section, a meaningful lesson we learned about \Hier is that we can use more local averaging to speed up convergence in the sacrifice of less costly local communications. 
In this section, we compare \Hier with \KAVG, and show that \Hier with less frequent global reduction by deploying local averaging can converge faster than \KAVG while has less communication cost when the 
number of workers $P$ is large.
We consider \Hier and \KAVG in a non-asymptotic scenario where \KAVG is run with $K$ and \Hier with $K_2=(1+a)K$ with some $0<a<1$ and $K_1=1$ and $S=4$. Typically, a single node is equipped with $4$ or more GPUs. Local communication at such a scale is almost negligible. 
Apparently, after processing the same amount of data, \Hier has much less communication cost than \KAVG due to less frequent global averaging involved. 
\begin{theorem}
	\label{theorem: comparison}
	Under the conditions of Theorem \ref{theorem: K_2}, let \Hier be run with $K_2 = (1+a)K$ with $a\in [0,0.6]$, $K_1 =1$ and $S= 4$. Suppose that $L\gamma P\gg 1$.
	Then \Hier can converge faster than \KAVG after processing the same amount of data.
\end{theorem}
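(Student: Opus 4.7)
The plan is to apply the bound (\ref{bound:fixed}) of Theorem \ref{theorem: fixed} separately to the two algorithms with matching total work, then compare term by term. Concretely, run \KAVG with parameter $K$ for $N_1$ global rounds (so $K_2 = K$, $K_1 = 1$, $S = 1$), and run \Hier with $K_2 = (1+a)K$, $K_1 = 1$, $S = 4$ for $N_2 = N_1/(1+a)$ global rounds. Then the total number of SGD steps $T := N_1 K = N_2 K_2$, and hence the total data processed $PBT$, agrees in the two runs. Because $K_1 = 1$ in both, the factor $(K_1-1)(3K_2 + K_1 - 2)$ in (\ref{bound:fixed}) vanishes, and the comparison reduces to three clean pieces.

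First I would handle the initial-gap term. With $N_i K_i = T$ both bounds' first term equals $2\Delta K_i / \bigl(T(K_i - \delta)\gamma\bigr)$ with $\Delta := \EE[F(\widetilde{\bw}_1) - F^*]$. Since the map $x \mapsto x/(x-\delta)$ is strictly decreasing for $x > \delta$, \Hier already wins this term because $K_2 > K$. Next, the "variance" term $L\gamma M K_2^2 / \bigl(PB(K_2-\delta)\bigr)$ is the only one where \Hier loses: its value increases from \KAVG to \Hier by an amount that, using $K_2^2/(K_2-\delta) - K^2/(K-\delta) = (K_2 - K) + \delta\bigl(\tfrac{K_2}{K_2-\delta} - \tfrac{K}{K-\delta}\bigr)$, is at most roughly $L\gamma M aK/(PB)$ plus lower-order $\delta$-corrections.

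Finally I would attack the "gradient drift" term, which is where \Hier makes up the ground. Substituting $K_1 = 1$ collapses $(K_2 - K_1)(4K_2 + K_1 - 3)/S$ to $2(K_2 - 1)(2K_2 - 1)/S$, and the resulting term equals $L^2\gamma^2 M K_2 (K_2 - 1)(2K_2 - 1)/\bigl(6 B S (K_2 - \delta)\bigr)$. The ratio of \Hier to \KAVG on this term is $\tfrac{1}{S}\cdot\tfrac{K_2(K_2-1)(2K_2-1)/(K_2-\delta)}{K(K-1)(2K-1)/(K-\delta)}$, which for $S = 4$ and $K_2 = (1+a)K$ is bounded above by $(1+a)^2/4$ up to $O(1/K)$ corrections. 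Since $a \le 0.6$, this ratio is $\le 0.64$, so \Hier saves an amount of order $\bigl(1 - (1+a)^2/4\bigr) \cdot L^2\gamma^2 M K^2 / (3B)$ on the drift term.

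Putting the three pieces together, the net change in the RHS bound from \KAVG to \Hier is at most
$$L\gamma M aK/(PB) \;-\; \bigl(1 - (1+a)^2/4\bigr)\,L^2\gamma^2 M K^2/(3B),$$
which is negative precisely when $L\gamma P \cdot K \bigl(1 - (1+a)^2/4\bigr) \gtrsim 3a$. Under the standing hypothesis $L\gamma P \gg 1$, this holds uniformly for $a \in [0, 0.6]$, so the \Hier bound is strictly smaller than the \KAVG bound, yielding faster convergence after processing the same amount of data. The main obstacle, I expect, will be making the dominant/subdominant split fully rigorous without sweeping $\delta$ and the $(K-1)(2K-1)$ vs $K^2$ type approximations under the rug; I would do this by keeping the exact rational expressions, factoring out common $L\gamma M K/B$, and then invoking $L\gamma P \gg 1$ once at the end to absorb all lower-order remainders and conclude.
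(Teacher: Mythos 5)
Your proposal is correct and follows essentially the same route as the paper's proof: apply the bound of Theorem \ref{theorem: fixed} to both algorithms with the total number of SGD steps matched, exploit the monotone decrease of $K\mapsto K/(K-\delta)$ for the initial-gap term, use the $1/S$ factor to show the drift term shrinks by roughly $(1+a)^2/4<1$, and invoke $L\gamma P\gg 1$ to neutralize the variance term. The only (minor) difference is that the paper discards the $L\gamma MK_2^2/(PB(K_2-\delta))$ term outright as dominated and then factors the remainder multiplicatively, whereas you keep that term and explicitly verify its increase is outweighed by the drift-term saving, which is a slightly more careful accounting of the same argument.
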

The proof of Theorem \ref{theorem: comparison} can be found in section \ref{proof: theorem_comp}.
The result of Theorem \ref{theorem: comparison} has two meaningful consequences: 1. From the point view of parallel computing, even with comparable convergence rate, \Hier with less global averaging whose communication overhead are reduced can have some real run time reduction in the training phase when $P$ is large; 2. As our experimental results show in section \ref{section: increase K2}, less frequent global averaging can often lead to better test accuracy. As a consequence, compared with \KAVG, \Hier can serve as a better alternative algorithm to gain comparable or faster traning speed while achieving better test accuracy. As our experiments show in section \ref{section: increase K2}, we constantly observe that \Hier has better performance than \KAVG even when $a=1$ and $K_1>1$.

\section{Experimental results}
In this section, we present experimental results to validate our analysis of \Hier.
All SGD methods are implemented with Pytorch, and the communication is
implemented using CUDA-aware openMPI 2.0.  All implementations use the
cuDNN library 7.0  for forward and backward propagations. Our experiments are implemented on a
cluster of 32 IBM Minsky nodes interconnected with Infiniband. Each node is an IBM S822LC
system containing 2 Power8 CPUs with 10 cores each, and 4 NVIDIA Tesla P100 GPUs.

We evaluate our algorithm on four state-of-the-art neural network models.
They are \resnet~\cite{he2016deep}, 
\googlenet~\cite{szegedy2015going}, \mobilenet~\cite{howard2017mobilenets}, and \vgg~\cite{simonyan2014very}. They represent some of
the most advanced neural network architectures used
in current large scale machine learning tasks. 
Most of our experiments are done on the dataset \cifar \cite{krizhevsky2009learning}. In addition, we also demonstrate the superior performance of \Hier over \KAVG using the
\imagenet \cite{imagenet_cvpr09} dataset which has a much larger size.
Unless noted, the batchsize we use is $64$, and the total amount of data we train is 200
epochs. The initial learning rate is $0.1$, and decreases to $0.01$ after
$150$ epochs.

\subsection{Impact of $K_2$ on convergence}
\label{section: experiment of K_2}

Theorem \ref{theorem: K_2} shows that
the optimal $K_2$ for convergence is not necessarily $1$, and larger $K_2$
can sometimes lead to faster convergence than a smaller one.  
Fig.~\ref{fig:resnet},~\ref{fig:googlenet},~\ref{fig:mobilenet}
and~\ref{fig:vgg} show the impact of $K_2$ on convergence for
\resnet, \googlenet, \mobilenet, and \vgg respectively.  Within each
figure,  the training accuracies for $K_2=8$, $16$,
and $32$ between epoch 170 to epoch 200 are shown.  We use $P=32$ learners
and set $K_1=4$, $S=4$.

For \resnet and \googlenet, the training accuracies with three
different $K_2$ are similar. In fact, the best training accuracy for
\googlenet is achived with $K_2=32$.  For \mobilenet and \vgg, the
best training accuracies are achieved with $K_2=8$, and the training
accuracy with $K_2=32$ is higher than with $K_2=16$. Above all,
there is no clue that more frequent global averaging (smaller $K_2$) leads to 
faster convergence.

\begin{figure}
	\centering
	\begin{subfigure}[b]{0.23\textwidth}
		\includegraphics[width=0.95\textwidth]{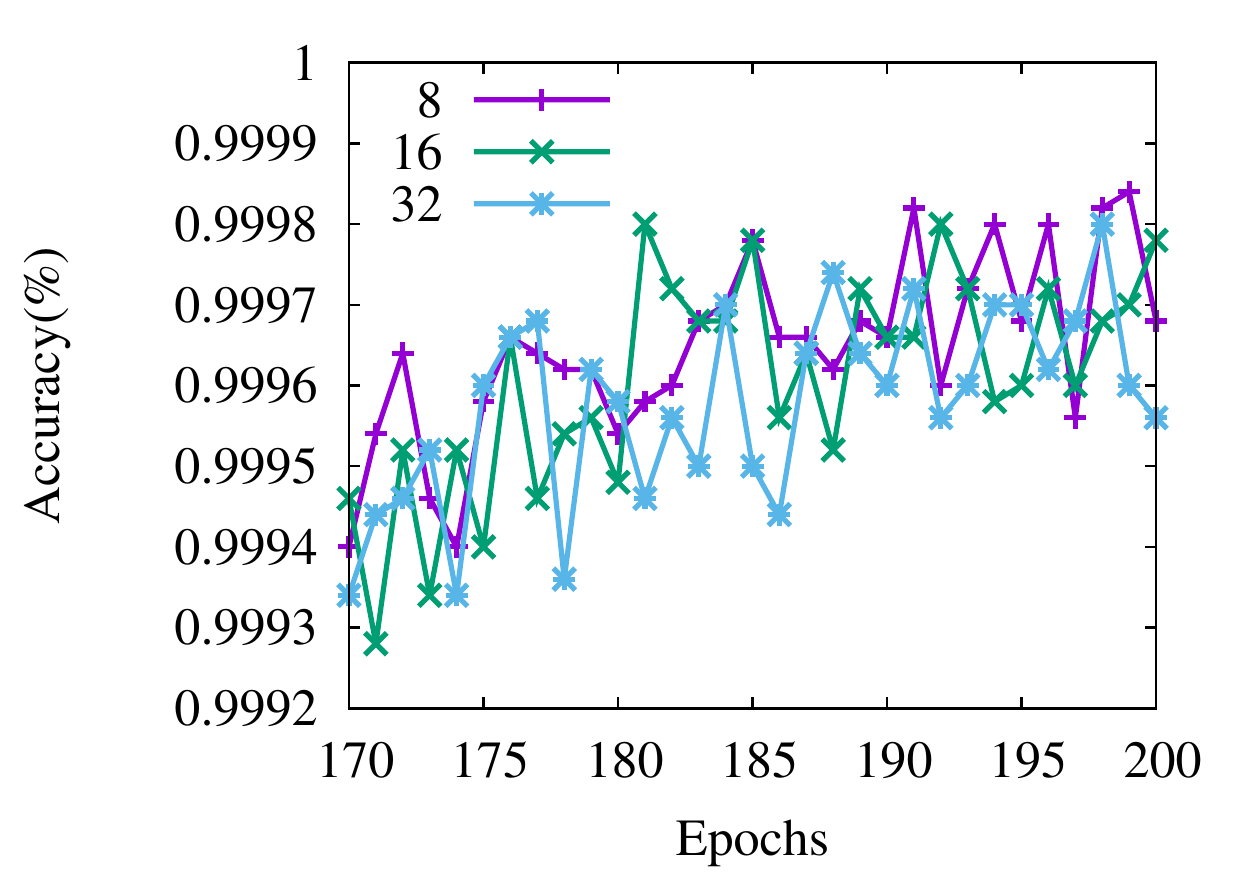}
		\caption{\resnet}
		\label{fig:resnet}
	\end{subfigure}
	~ 
	\begin{subfigure}[b]{0.23\textwidth}
		\includegraphics[width=0.95\textwidth]{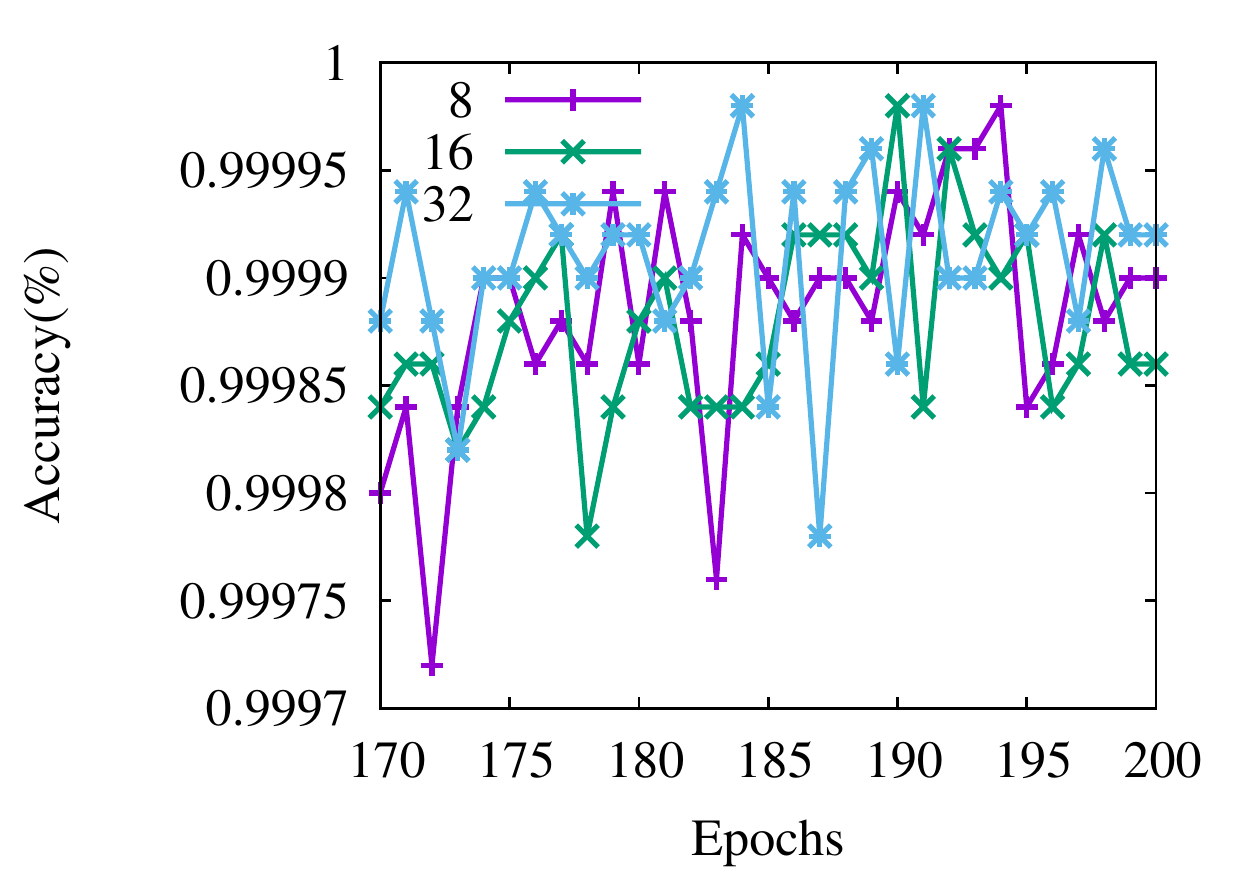}
		\caption{\googlenet}
		\label{fig:googlenet}
	\end{subfigure}
	~ 
	\begin{subfigure}[b]{0.23\textwidth}
		\includegraphics[width=0.95\textwidth]{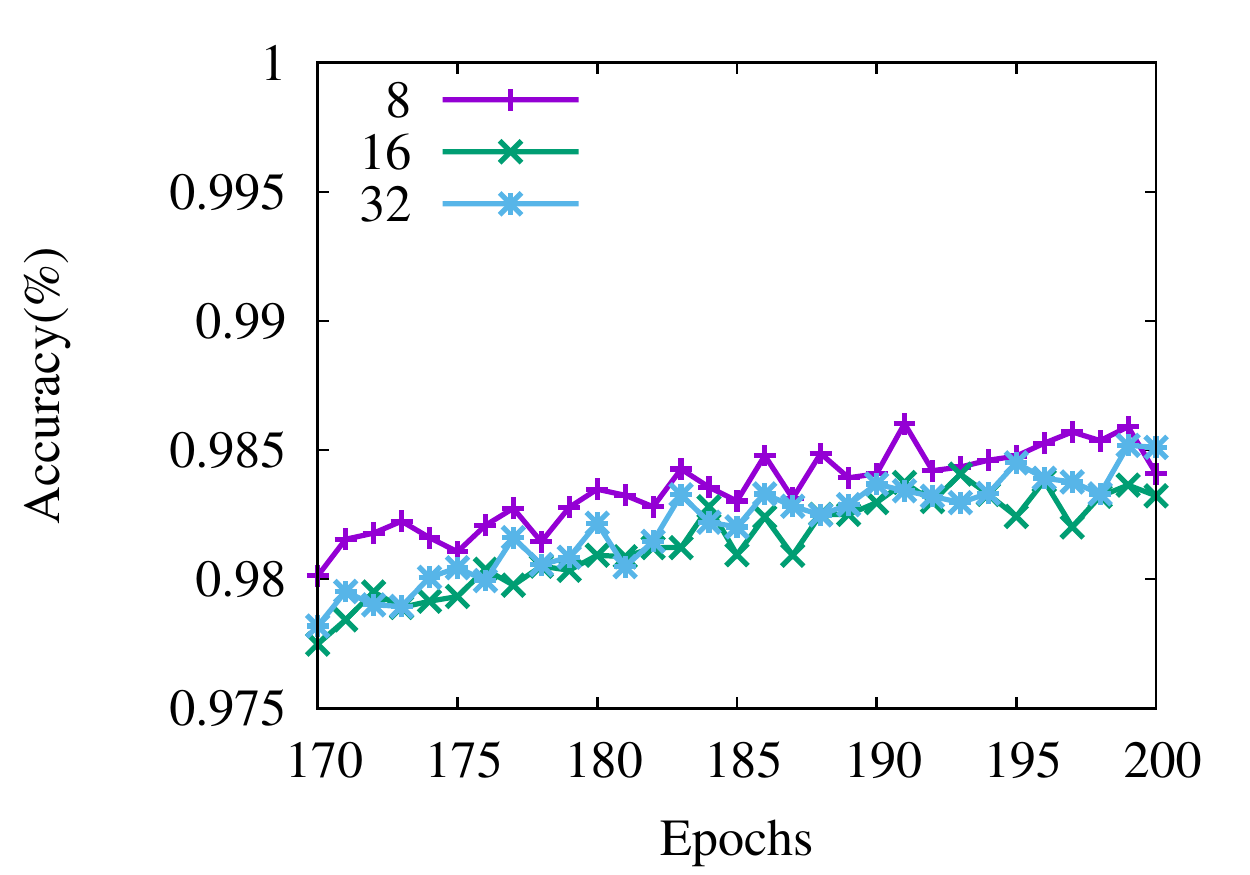}
		\caption{\mobilenet}
		\label{fig:mobilenet}
	\end{subfigure}
		\begin{subfigure}[b]{0.23\textwidth}
			\includegraphics[width=0.95\textwidth]{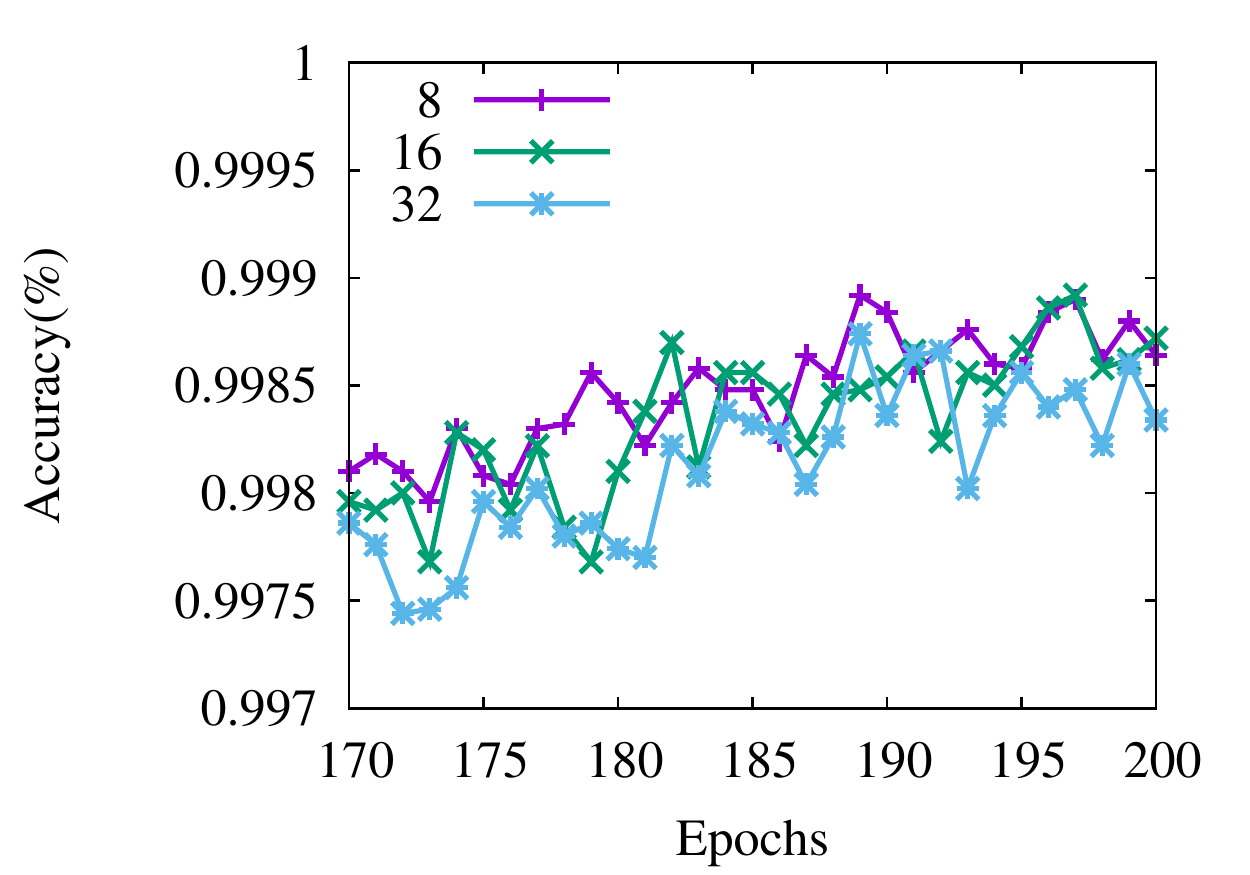}
			\caption{\vgg}
			\label{fig:vgg}
		\end{subfigure}
	\caption{Impact of $K_2$ on convergence: training accuracy}
\end{figure}

Modern neural networks are typically fairly deep and have a large
number of weights. Without mitigation, overfitting can plague
generalization performance. Thus, we also investigate the impact of $K_2$ on test
accuracy.

Fig.~\ref{fig:resnet-val},~\ref{fig:googlenet-val},~\ref{fig:mobilenet-val}
and~\ref{fig:vgg-val} show test accuracies with the same setup for
\resnet, \googlenet, \mobilenet, and \vgg respectively.   For
\resnet, the best test accuracy is achieved with $K_2=16$, about 0.3\%
higher than with $K_2=8$.  For \googlenet, the best test accuracy is
achieved with $K_2=32$, although at epoch 200 all three runs show
similar test accuracy.  For \mobilenet, $K_2=8$, $16$, and $32$ have
similar test performance. For \vgg, $K_2=8$ has the best test accuracy
at epoch 200.

\begin{figure}
	\centering
	\begin{subfigure}[b]{0.23\textwidth}
		\includegraphics[width=\textwidth]{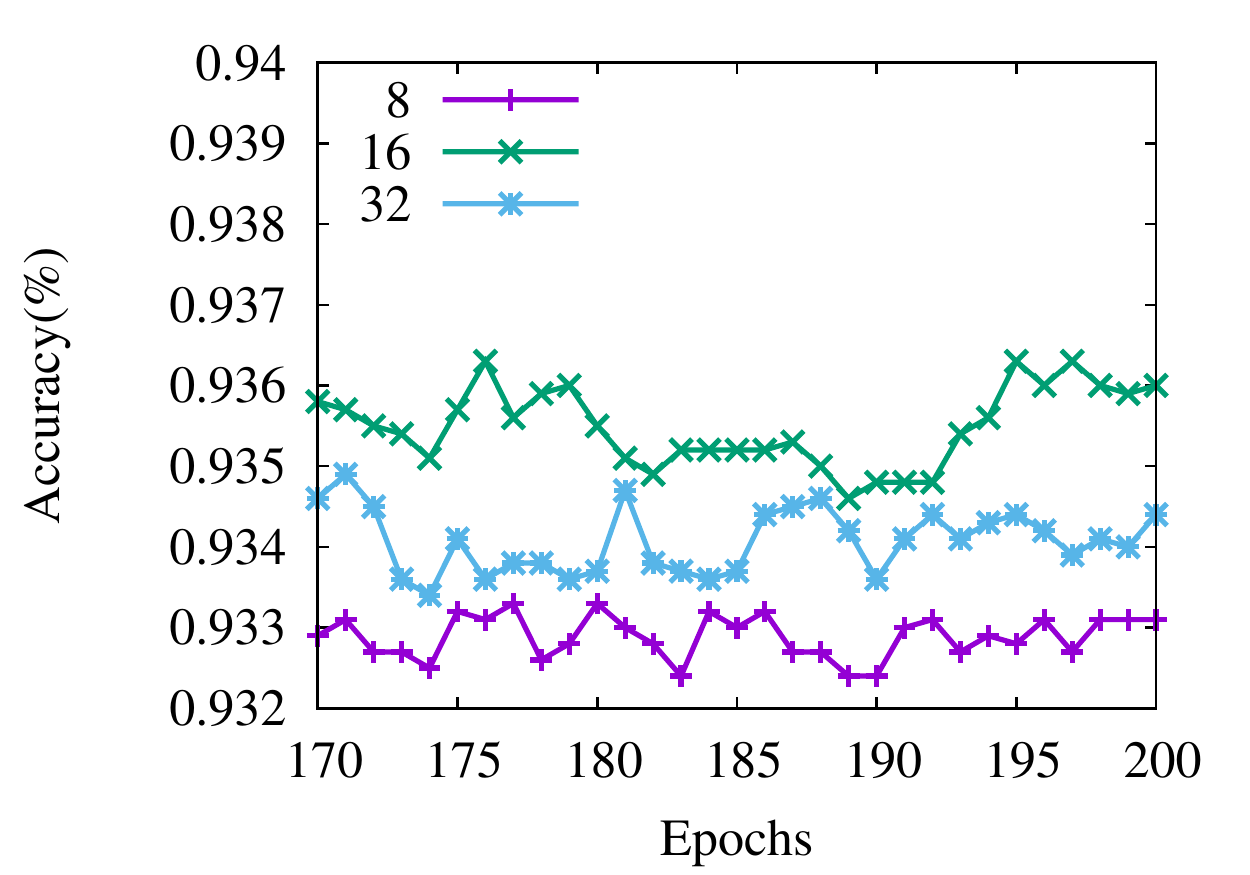}
		\caption{\resnet}
		\label{fig:resnet-val}
	\end{subfigure}
	~ 
	\begin{subfigure}[b]{0.23\textwidth}
		\includegraphics[width=\textwidth]{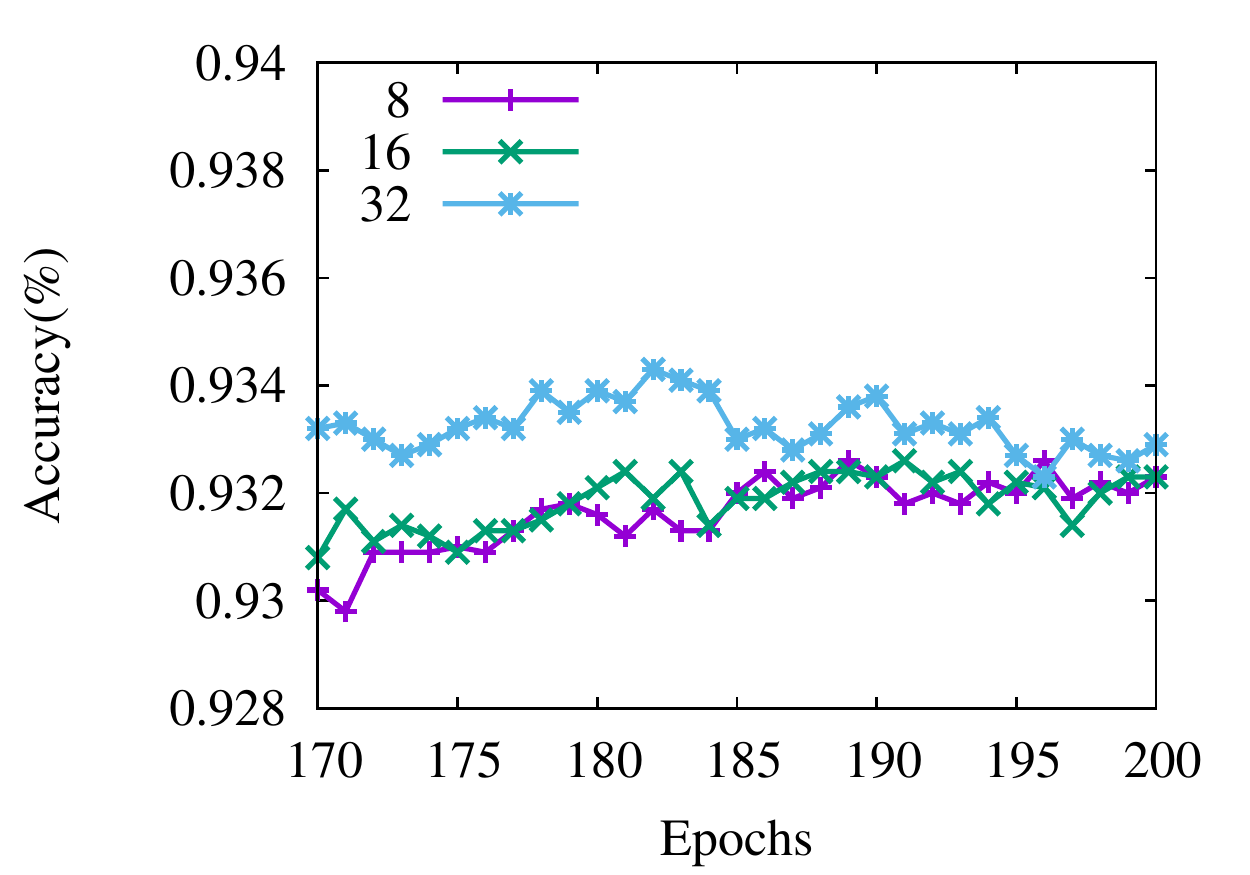}
		\caption{\googlenet}
		\label{fig:googlenet-val}
	\end{subfigure}
	~ 
	\begin{subfigure}[b]{0.23\textwidth}
		\includegraphics[width=\textwidth]{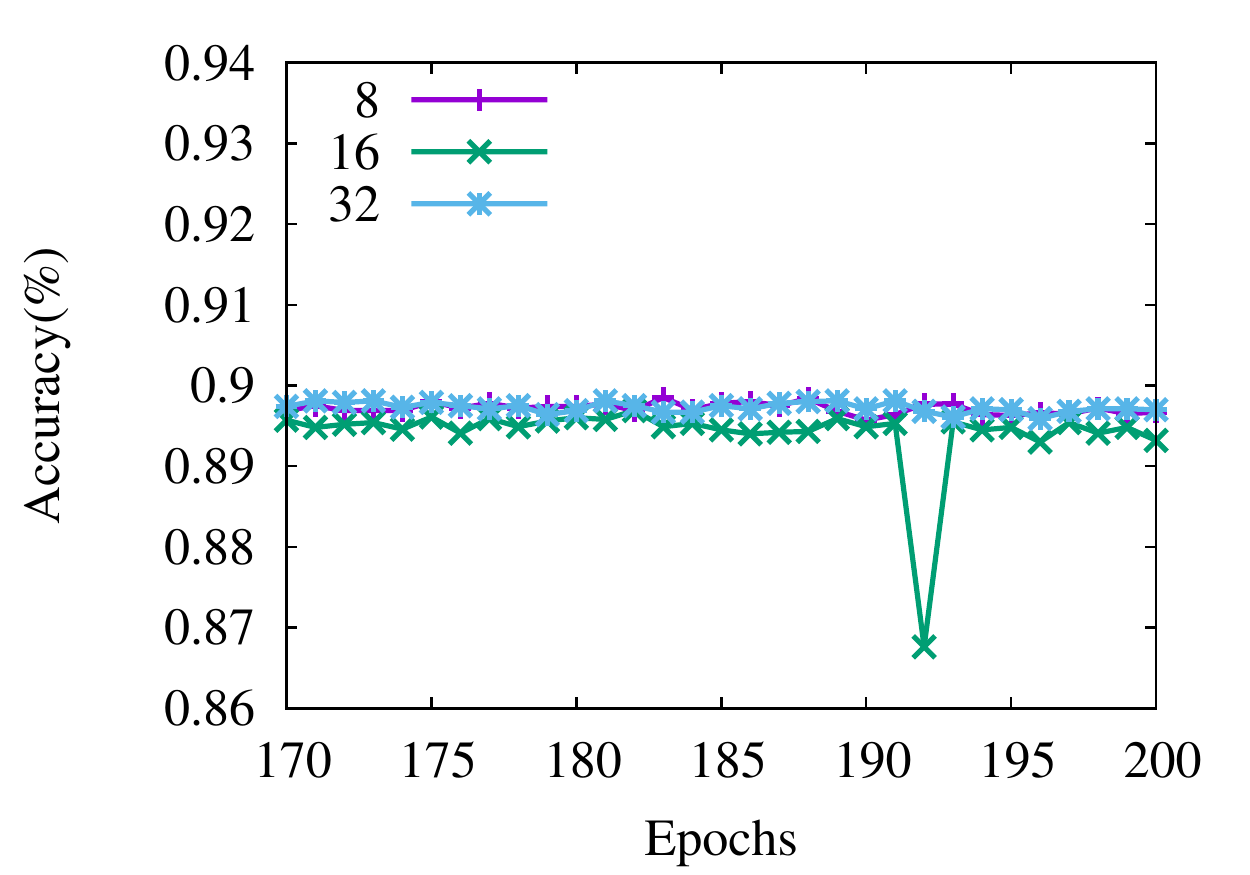}
		\caption{\mobilenet}
		\label{fig:mobilenet-val}
	\end{subfigure}
	\begin{subfigure}[b]{0.23\textwidth}
		\includegraphics[width=\textwidth]{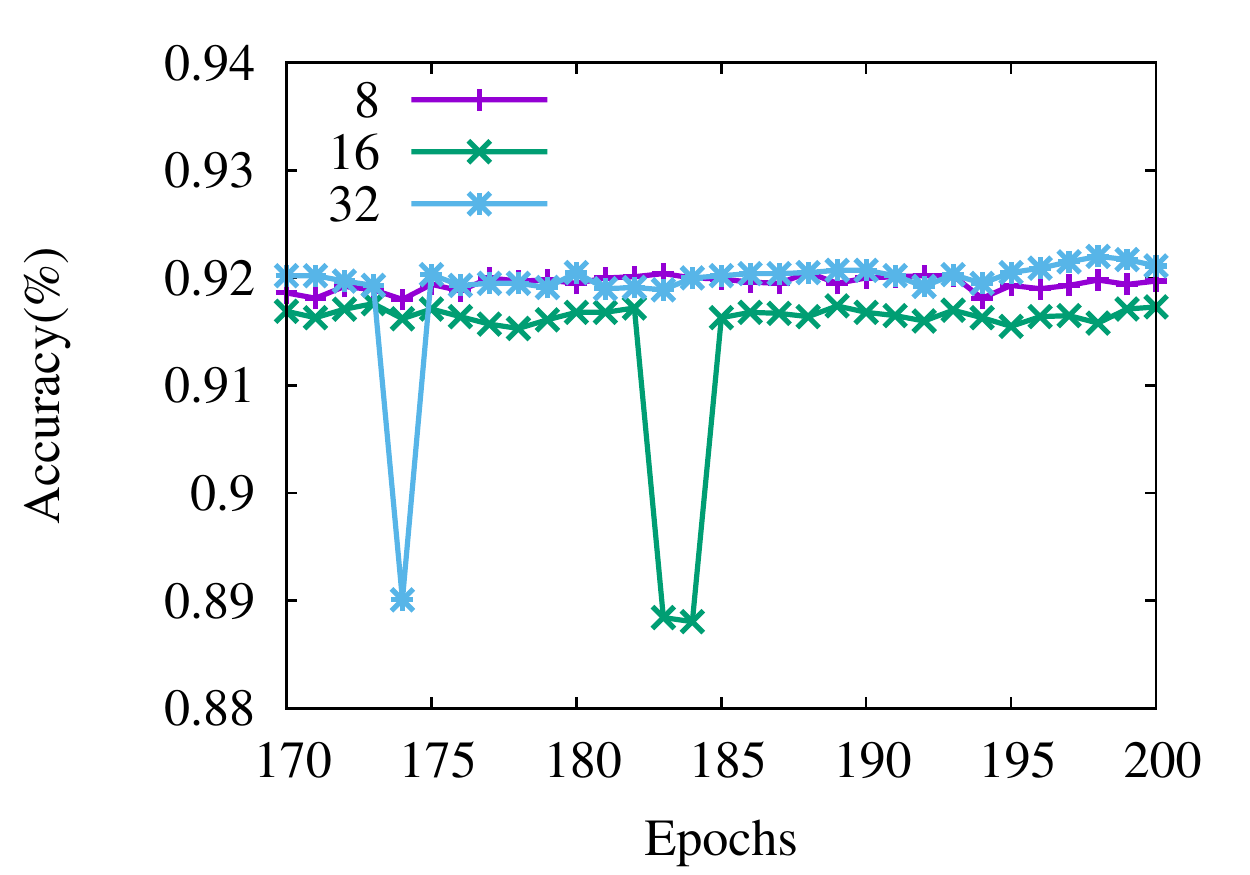}
		\caption{\vgg}
		\label{fig:vgg-val}
	\end{subfigure}
	\caption{Impact of $K_2$ on convergence: test accuracy}
\end{figure}

It is clear that increasing $K_2$ does not necessarily reduce convergence speed for training, 
but obviously it reduces the frequency of costly global reduction when $P$ increases. For example, 
the best test accuracy for \googlenet is achieved with $K_2= 32$. In comparison
with $K_2=8$,  4 times fewer global reductions are used. As a result, the real run time for training can be 
effectively reduced due to much less communication overhead.

\subsection{Impact of  $K_1$ and $S$ on Convergence}
\label{experiment:K1}

In section~\ref{subsection:K1S}, Theorem~\ref{theorem: K_1}  claims that reducing $K_1$ and increasing
$S$ can speed up training convergence.  In practice, with a limited budget in terms of the amount of data
samples processed (e.g., a fixed number of training epochs), we can adjust $K_1$ and $S$ to accelerate
training.  Recall that $K_1$ and $S$ determine local communication
behavior. They provide deterministic means, at least in theory, for practitioners to fine
tune training to achieve the best results within their computational budget and time constraint. 

Fig.~\ref{fig:resnet-k1},~\ref{fig:googlenet-k1},~\ref{fig:mobilenet-k1}
and~\ref{fig:vgg-k1} show the impact of $K_1$ on convergence. As all 
networks achieve high training accuracy, we show the evolution of
trainig loss from epoch 170 to epoch 200. 
In each figure we show the training loss for $K_1=4$ and $8$, and we
set $K_2=32$, $S=4$, and $P=16$. 
As we can see, for all networks it is clear that a lower
training loss is achieved with $K_1=4$ than with $K_1=8$.

\begin{figure}
	\centering
	\begin{subfigure}[b]{0.23\textwidth}
		\includegraphics[width=\textwidth]{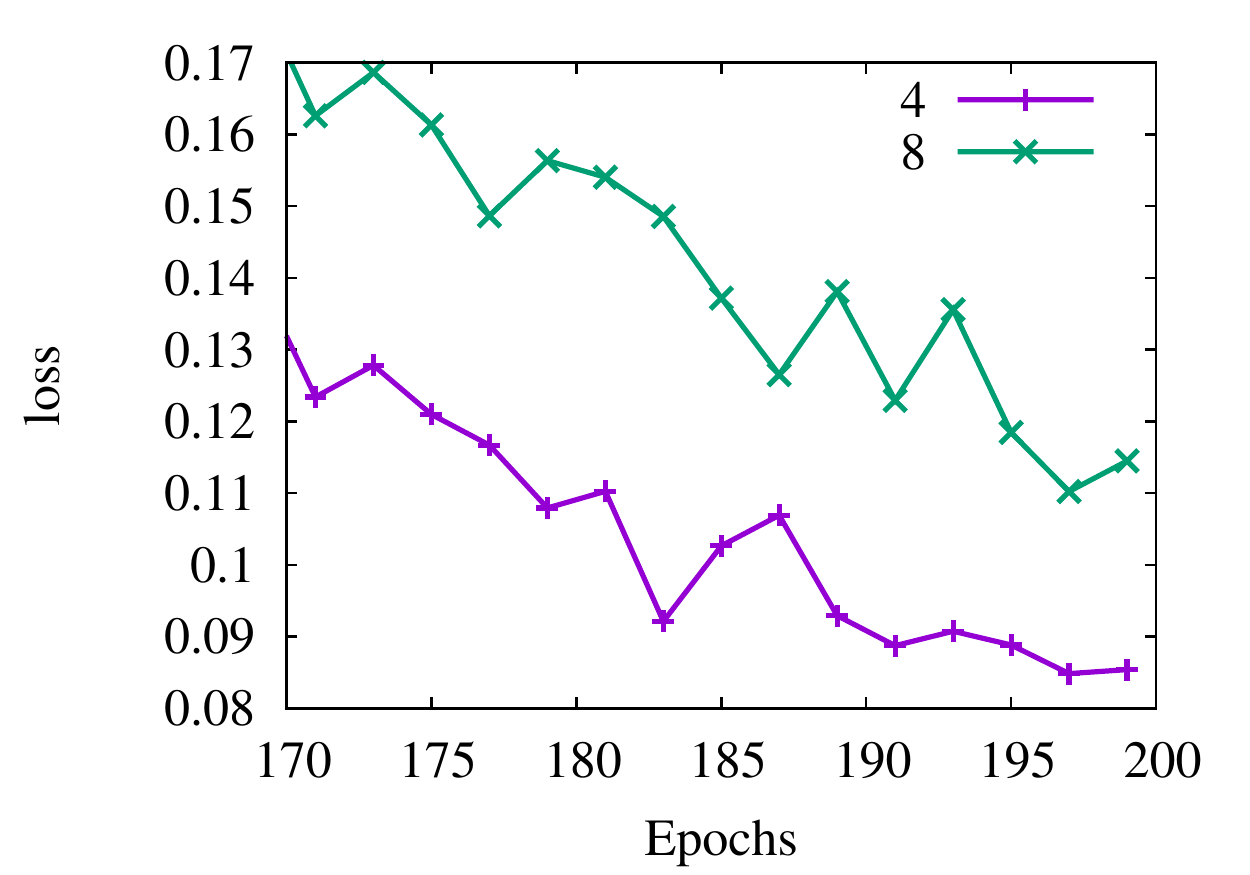}
		\caption{\resnet}
		\label{fig:resnet-k1}
	\end{subfigure}
	~ 
	\begin{subfigure}[b]{0.23\textwidth}
		\includegraphics[width=\textwidth]{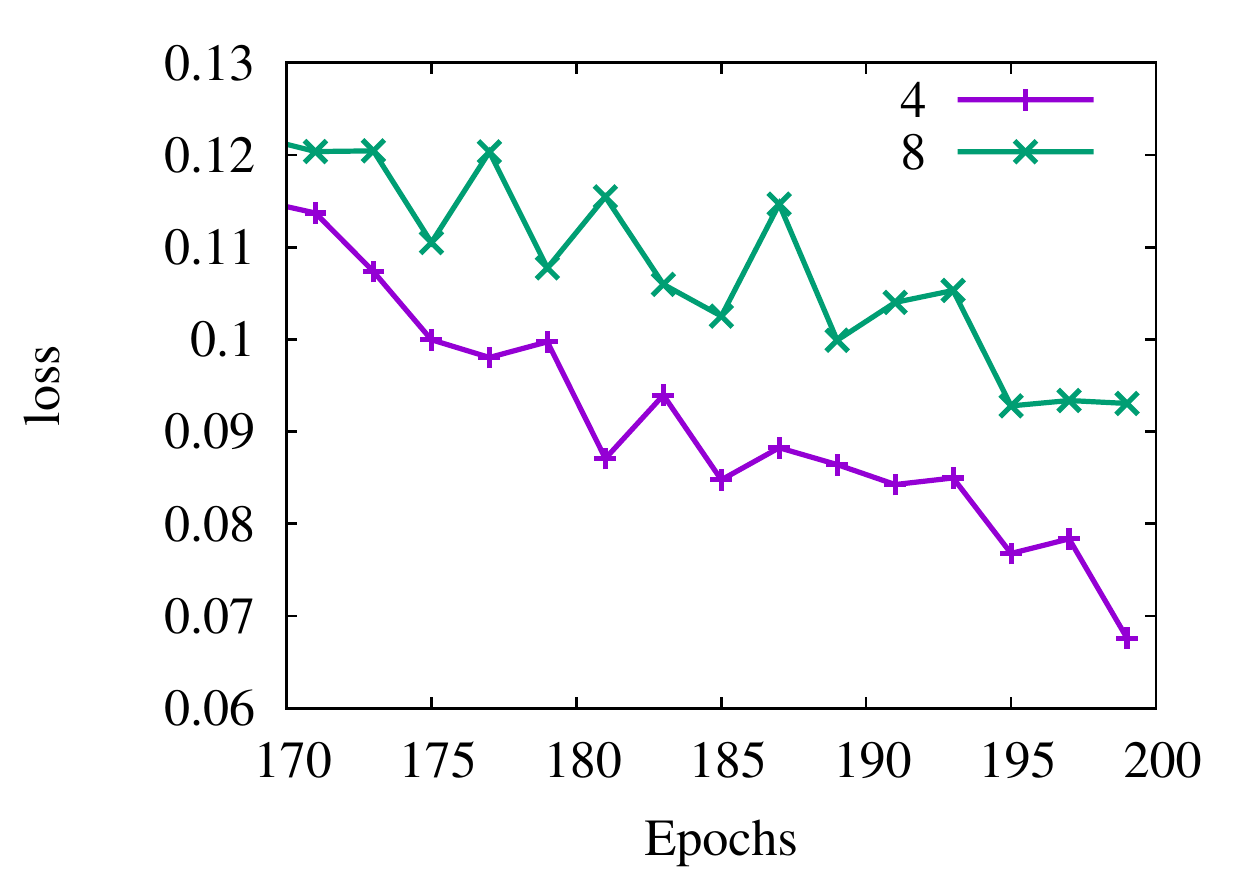}
		\caption{\googlenet}
		\label{fig:googlenet-k1}
	\end{subfigure}
	~ 
	\begin{subfigure}[b]{0.23\textwidth}
		\includegraphics[width=\textwidth]{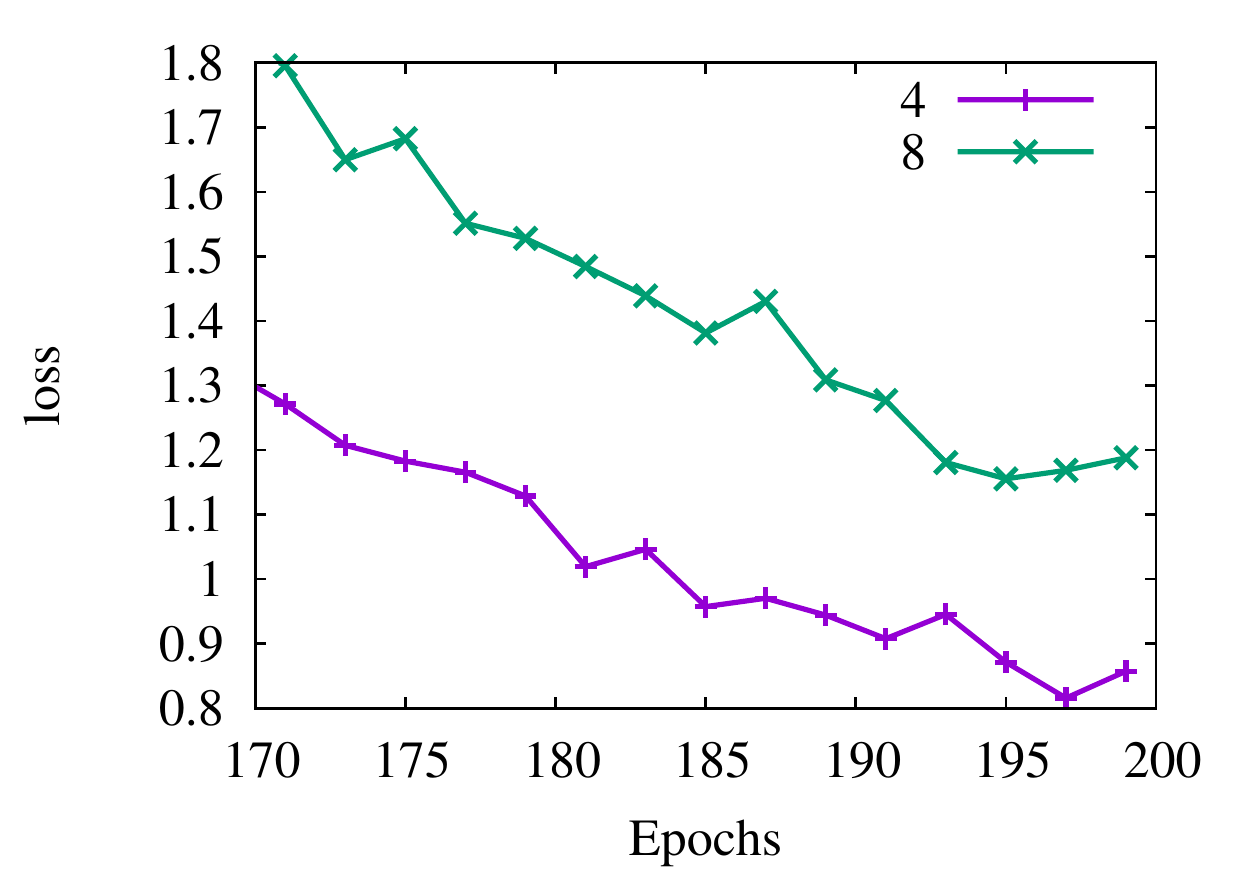}
		\caption{\mobilenet}
		\label{fig:mobilenet-k1}
	\end{subfigure}
	\begin{subfigure}[b]{0.23\textwidth}
		\includegraphics[width=\textwidth]{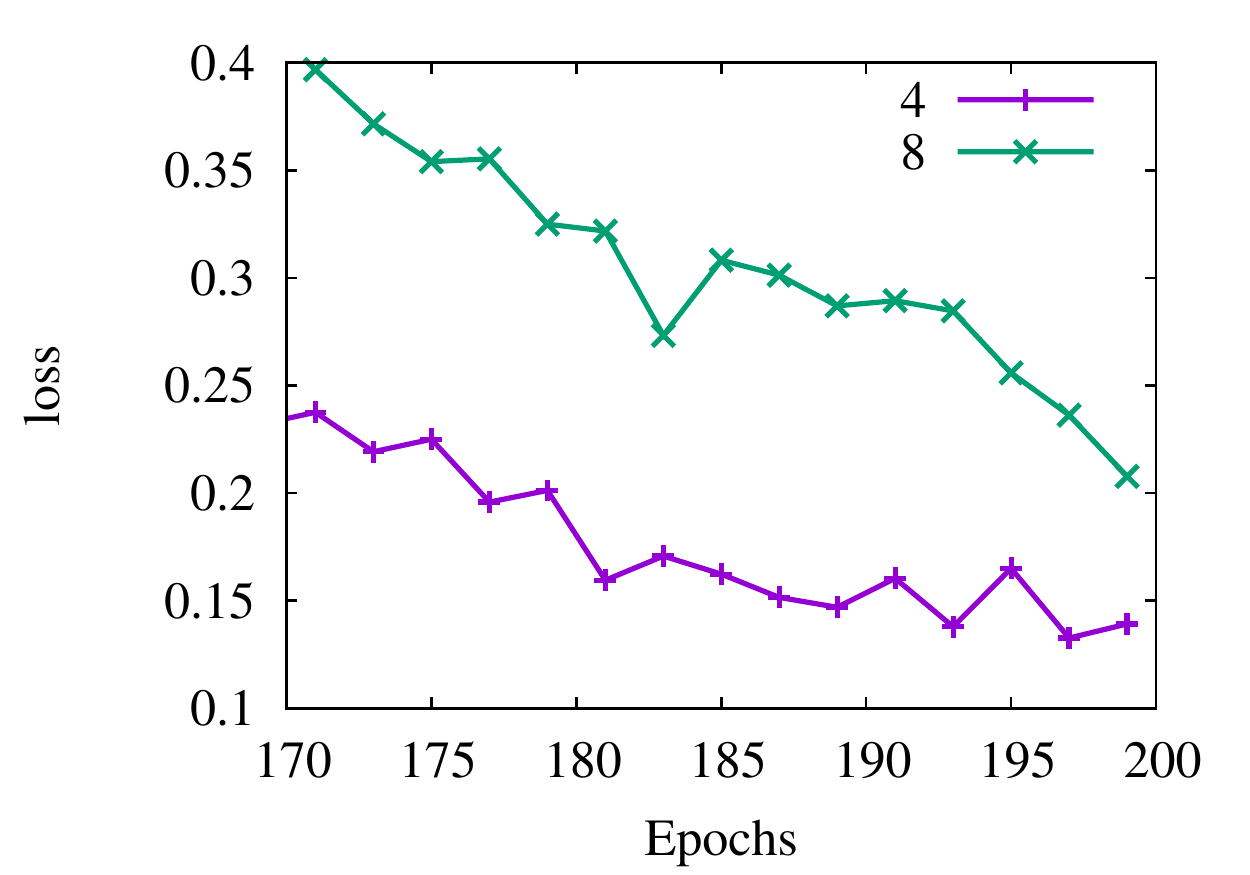}
		\caption{\vgg}
		\label{fig:vgg-k1}
	\end{subfigure}
	\caption{Training loss with $K_1=4$ and $K_1=8$}
\end{figure}

Fig.~\ref{fig:resnet-S},~\ref{fig:googlenet-S},~\ref{fig:mobilenet-S}
and~\ref{fig:vgg-S} show the impact of $S$ on convergence. Again we show the evolution of
trainig loss from epoch 170 to epoch 200. 
In each figure we plot the training loss for $S=2$ and $4$, and we
set $K_2=32$, $K_1=4$, and $P=16$.  In all figures lower training loss
is achieved with $S=4$ than with $S=2$. 

\begin{figure}
	\centering
	\begin{subfigure}[b]{0.23\textwidth}
		\includegraphics[width=\textwidth]{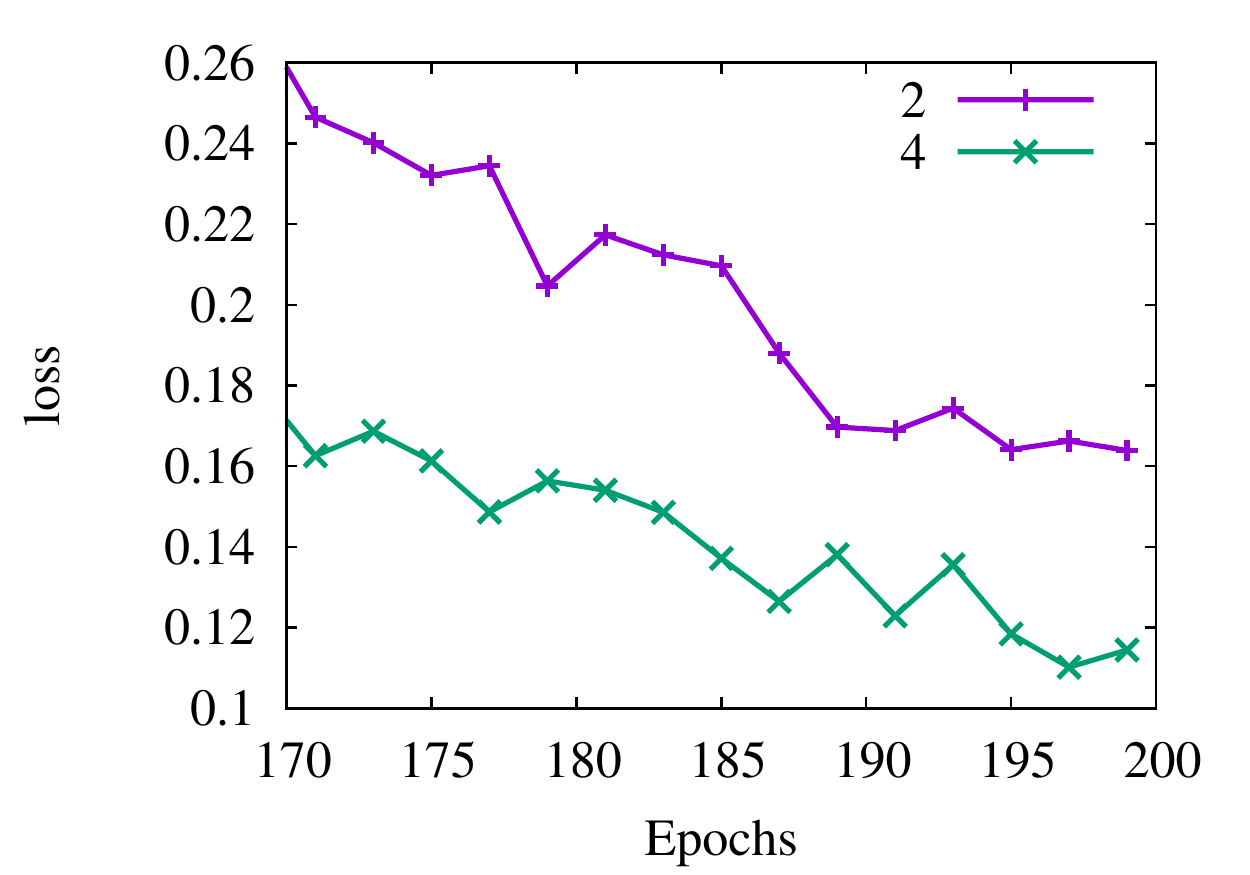}
		\caption{\resnet}
		\label{fig:resnet-S}
	\end{subfigure}
	~ 
	\begin{subfigure}[b]{0.23\textwidth}
		\includegraphics[width=\textwidth]{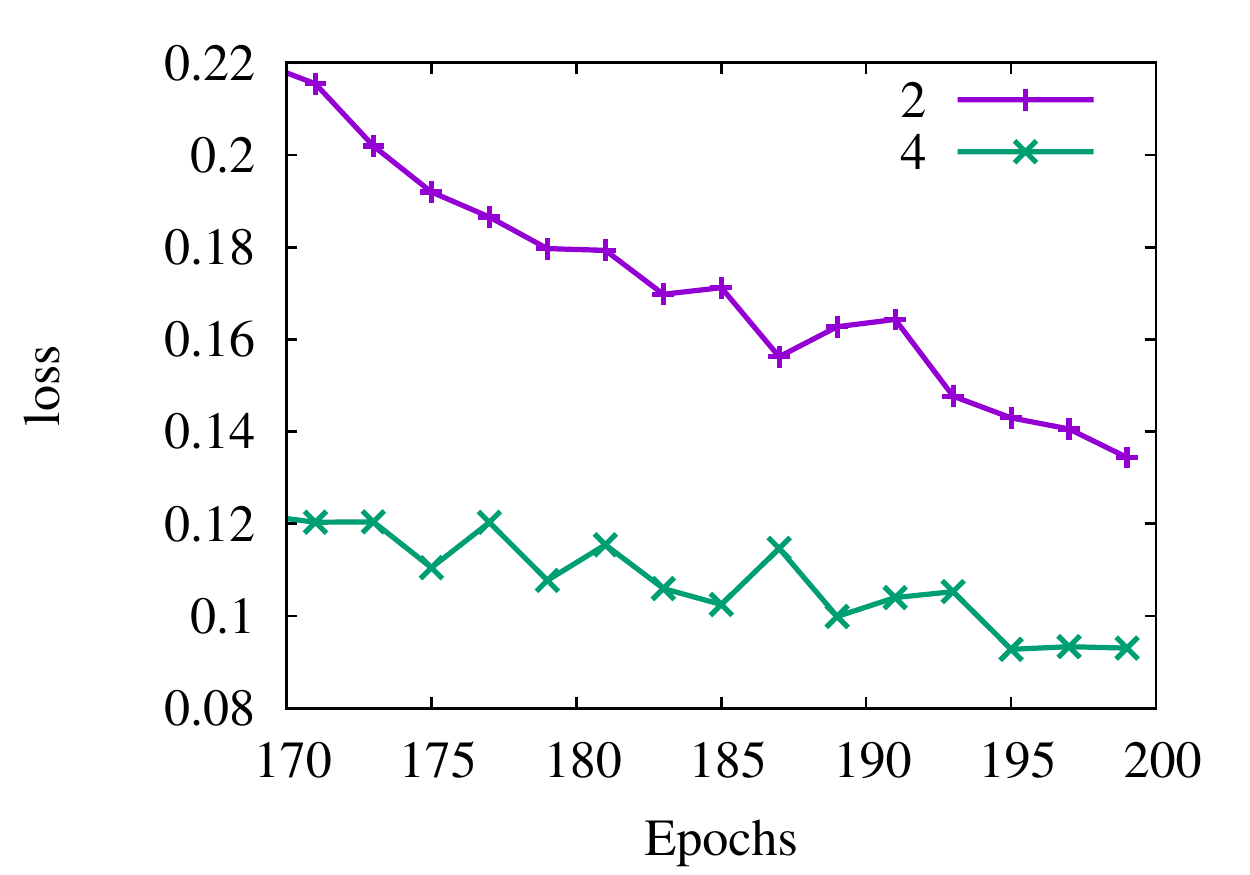}
		\caption{\googlenet}
		\label{fig:googlenet-S}
	\end{subfigure}
	~ 
	\begin{subfigure}[b]{0.23\textwidth}
		\includegraphics[width=\textwidth]{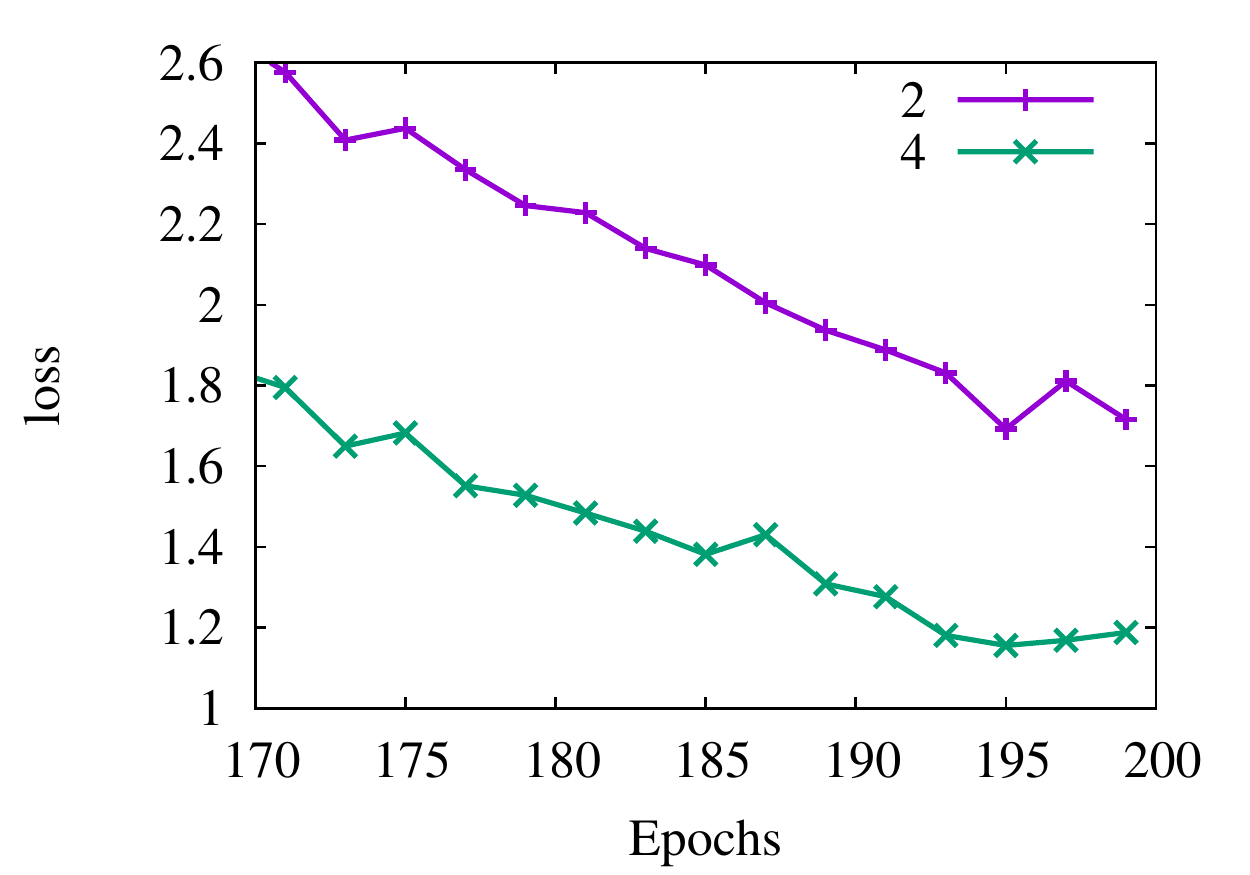}
		\caption{\mobilenet}
		\label{fig:mobilenet-S}
	\end{subfigure}
	\begin{subfigure}[b]{0.23\textwidth}
		\includegraphics[width=\textwidth]{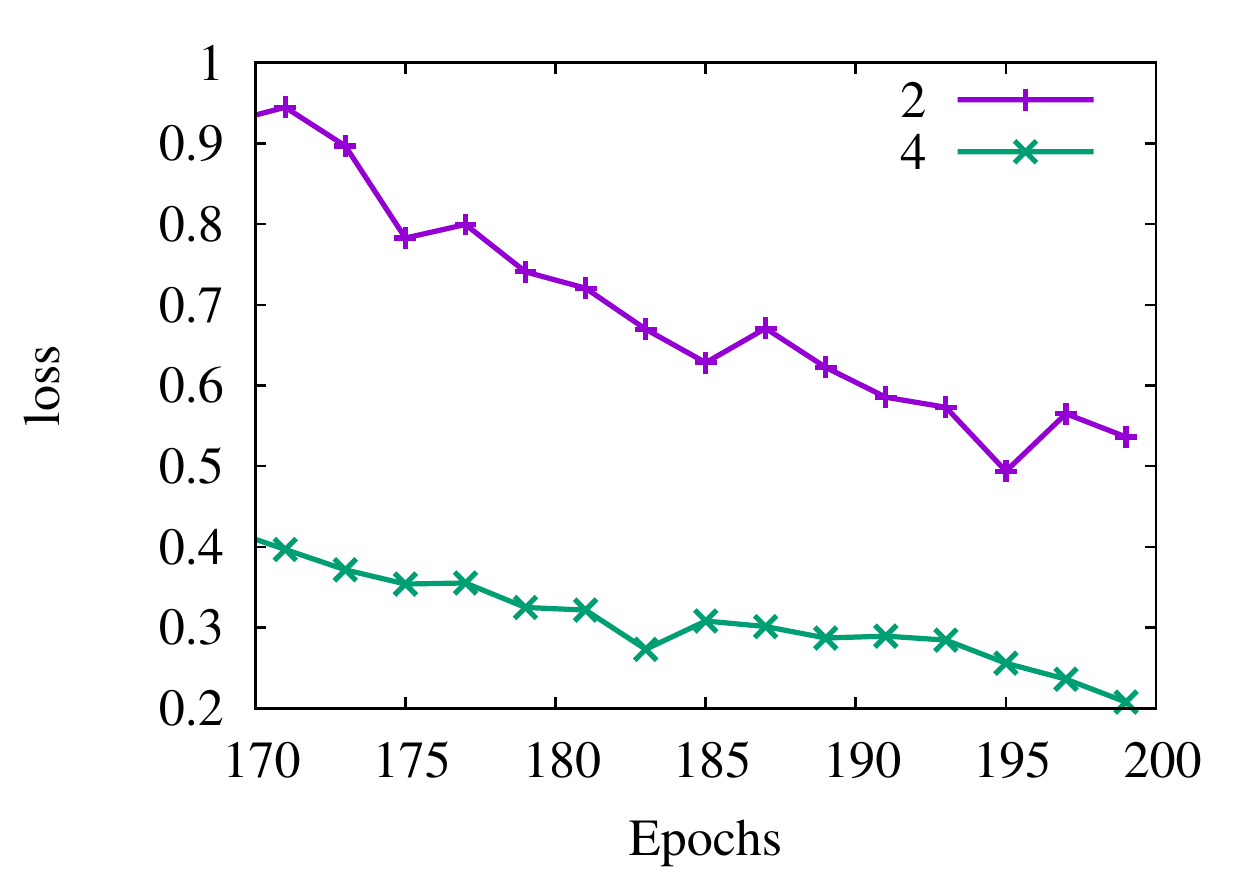}
		\caption{\vgg}
		\label{fig:vgg-S}
	\end{subfigure}
	\caption{Training loss with $S=2$, $4$}
\end{figure}

\subsection{Comparison with \KAVG}
\label{section: increase K2}

As we have mentioned, one of the biggest challenges of distributed training is the
communication overhead. In \KAVG, $K$ determines the frequency of
global reduction. It is shown by \cite{fan2018kavg}, from the perspective of
convergence, large $P$ may require small $K$ for faster
convergence. We explained in section \ref{section: localtoglobal} that \Hier provides the option to
reduce global reduction frequency by using local averaging. Since modern architectures typically employ
multiple GPUs per node, and the intra-node communication bandwidth is
much higher than inter-node bandwith, \Hier is a perfect match for such systems. 

We evaluate the performance of \Hier by setting $K_2=2K_{opt}$ and $S=4$, where $K_{opt}$ is
the fine tuned value of $K$ for \KAVG implementation. The experimental results is summarized in Table \ref{table: comparison}.
We experiment with
$P=16$, $32$, and $64$ learners on \resnet. 
With $16$ learners, $K_{opt}=32$ for \KAVG.  Then we set $K_2=64$ for
\Hier, and experiment with $K_1=2$, $4$, and $16$.  The
corespoinding validation accuracies are $94.01\%$, $94.11\%$, and $94.08\%$
respectively. They are all higher than the best accuracy achieved by
\KAVG at $94.0\%$.
With 32 and 64 learners, $K_{opt}=4$ for \KAVG.  We set $K_2=8$ for
\Hier, the accuracies achieved are $93.90\%$ and $93.17\%$ at $K_1=4$, $S=8$
and $K_1=1$ $S=4$, respectively. The best accuracies achieved by \KAVG with
32 and 64 learners are $93.7\%$ and $92.5\%$ respectively. 

In our experiments, while reducing the gobal reduction frequency by
half, \Hier still achieves validation accuracy comparable to
\KAVG.  Note that we do not show the actual wallclock time per epoch because
Pytorch implementations do not support GPU-direct communication
yet on our target architecture. For all reductions, the data is copied from GPU to CPU first. It is clear though once
GPU-direct communication is implemented,  \Hier can effectively reduce communication time.

\begin{table}
	\begin{center}
		\begin{tabular}{|c|c|c|c|c|c|c|}
			\hline \hline
			Alg.     & $K_{opt}$ & $K_2$ & $K_1$ & $S$ & $P$ & Test accuracy \\ \hline \hline
			\KAVG & 32   & -         & -         & -     &  16   &  $94.00\%$       \\ 
			\Hier    & -      &  \textbf{64}     & 2         & 4     &  16   &  $\textbf{94.01\%}$      \\
			\Hier    & -      &  \textbf{64}     & 4         & 4     &  16   &  $\textbf{94.11\%}$       \\ 
			\Hier    & -      &  \textbf{64}     & 16         & 4     &  16   &  $\textbf{94.08\%}$       \\  \hline
			\KAVG & 4     & -         & -         & -     &  32   &  $93.70\%$       \\ 
            \Hier    & -     &  \textbf{8}       & 4         & 8     & 32   &  $\textbf{93.90\%}$       \\ \hline 
            \KAVG & 4     & -         & -         & -     &  64   &  $92.50\%$       \\ 
            \Hier    & -     &  \textbf{8}        & 1         & 4     & 64   &  $\textbf{93.17\%}$       \\ \hline \hline
		\end{tabular}
	\caption{Comparison of \Hier and \KAVG}
		\label{table: comparison}
	\end{center}
\end{table}

\subsection{Performance of \Hier on ImageNet}
In this section, we further investigate the performance of \Hier with the ImageNet-1K dataset which is much larger than
\cifar and it contains of 1.28 million
training images split across 1000 classes, and 50,000 validation images. 

During training, a crop of random size (of $0.08$
to 1.5) of the original size and a random aspect ratio (of
3/4 to 4/3) of the original aspect ratio is made. This crop is then
resized to $224\times224$. Random color jittering with a ratio of 0.4 to the brightness, contrast
and saturation of an image is then applied.  Next a random
 horizontal flip is applied to the input, and the input is then
 normalized with mean (0.485, 0.456, 0.406) and standard deviation
 (0.229, 0.224, 0.225) for the (R, G, B) channels respectively. 
 For \KAVG we set $K=43$, and for \Hier we set $K_2=43$,  $K_1=20$,
 and $S=4$. 
 
 Fig.~\ref{fig:imagenet-train} shows the training accuracies comparison between \KAVG and \Hier with 16 learners. 
 Clearly, \Hier achieves higher training
 accuracy than \KAVG since the first epoch. After the first 5 epochs, \Hier achieved
 $6\%$ higher training accuracy than \KAVG, and at the 46-th epoch, \Hier
 achieved $17.33\%$ higher training accuracy than \KAVG. At the 90-th
 epoch, the training accuracy of \Hier is $1.15\%$ higher than \KAVG.

Fig.~\ref{fig:imagenet-val} shows the test accuracies comparison between \KAVG and \Hier with $16$ learners. 
As we can see, \Hier also achieves higher
validation accuracy than \KAVG since the first epoch. At epoch 5, \Hier achieved
$12 \%$ higher accuracy than \KAVG, and at the 90-th epoch, \Hier
achieved $0.51\%$ higher accuracy than \KAVG.

\begin{figure}
	\centering
	\begin{subfigure}[b]{0.4\textwidth}
		\includegraphics[width=\textwidth]{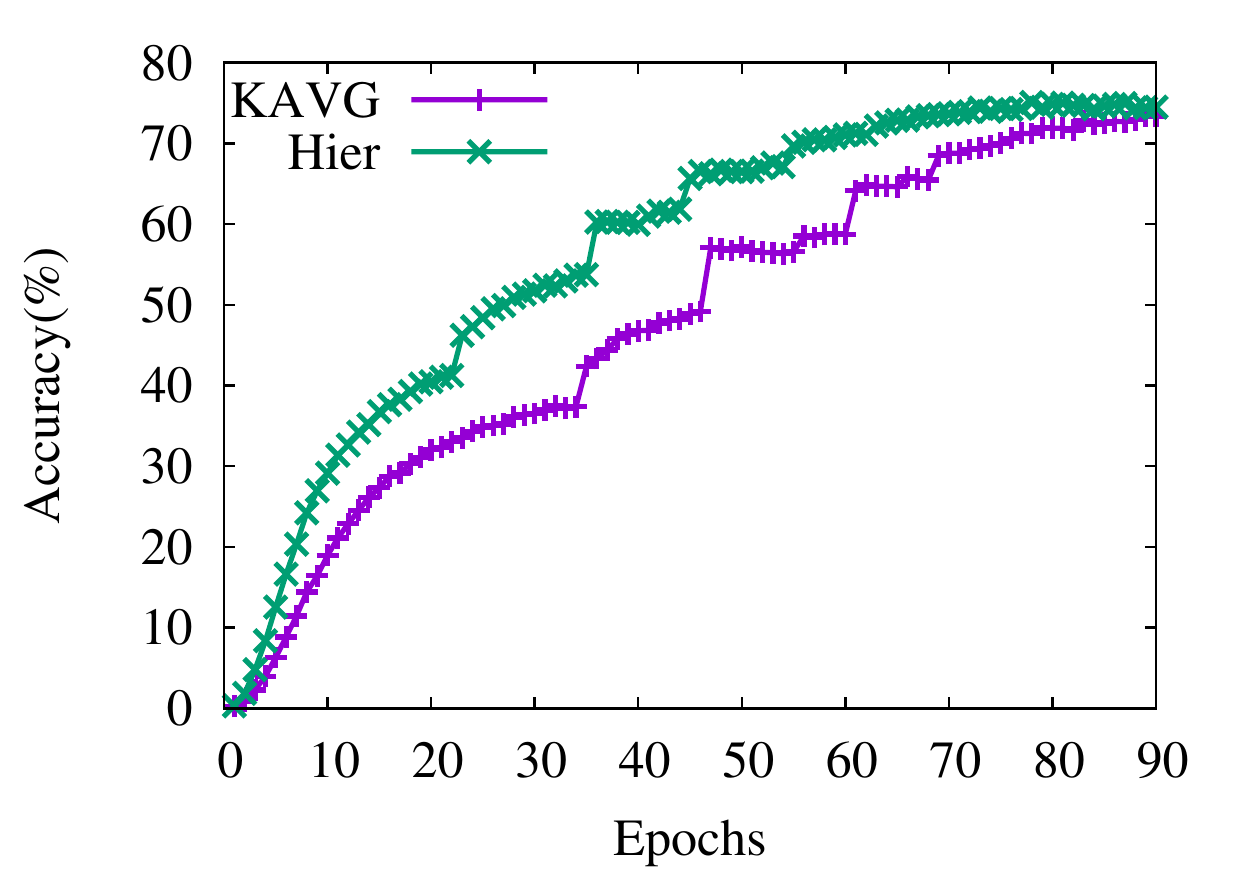}
		\caption{Training}
		\label{fig:imagenet-train}
	\end{subfigure}
	~ 
	\begin{subfigure}[b]{0.4\textwidth}
		\includegraphics[width=\textwidth]{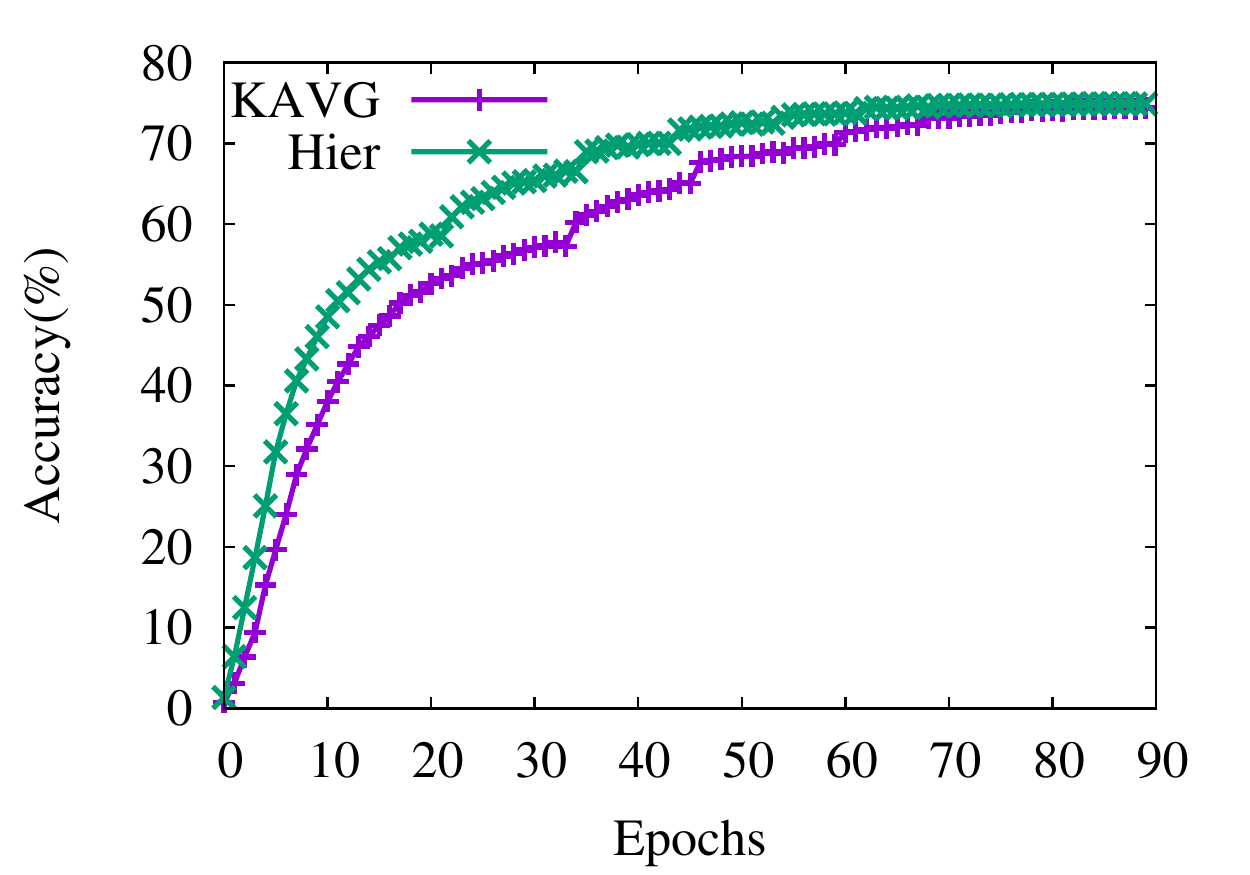}
		\caption{Test}
		\label{fig:imagenet-val}
	\end{subfigure}
	\caption{Performance of \KAVG and \Hier with ImageNet-1K}
\end{figure}

\section{Conclusion}

We proposed a two stage hierarchical averaging SGD algorithm to effectively reduce communication overhead while not deterioate
training and test performance for distributed machine learning. We established the convergence results for \Hier for non-convex optimization problems and show that \Hier with infrequent global reduction can still achieve the expected convergence rate, while oftentimes provides faster training speed and better test accuracy. By introducing local averaging, we show that it can be used to accelerate training. 
Moreover, we show analytically and experimentally that local averaging can serve as an alternative remedy to reduce global reduction frequency without doing harm to the convergence rate for training and generalization performance for testing. As a result, \Hier provides an alternative method for practitioners to train large scale machine learning applications on distributed platforms.

\section{Proofs}

\subsection{Proof of Theorem \ref{theorem: convergence_rate}}
\label{proof: theorem_rate}
\begin{proof}
	We denote by  
	\begin{equation}
	\bar{\bw}_t = \frac{1}{P} \sum_{j=1}^P  \bw^j_t
	\end{equation}
	as the global average of local iterates over all $P$ workers. A quick observation is that when $t  \equiv 0 \mod K_2$, then $\bar{\bw}_t = \widetilde{\bw}_{i}$ with $i=t/K_2$. In the following theorem,
	we derive an upper bound on the convergence measured by $T^{-1}\sum^T_{t=1} \EE\| \nabla F(\bar{\bw}_{t-1})\|^2_2$.
	It is easy to see that 
	\begin{equation}
	\bar{\bw}_{t+1} - \bar{\bw}_t =- \frac{\gamma}{PB} \sum\limits_{j=1}^P \sum\limits_{s=1}^B \nabla F(\bw_{t}^j; \xi^j_{t+1,s}).
	\end{equation}
	Consider 
	\begin{align}
	\EE \Big[F(\bar{\bw}_{t+1}) - F(\bar{\bw}_{t}) \Big] & \leq \EE \Big\langle \nabla F(\bar{\bw}_t), \bar{\bw}_{t+1} - \bar{\bw}_t \Big\rangle + \frac{L}{2} \EE \big\| \bar{\bw}_{t+1} - \bar{\bw}_t \big\|_2^2 \\
	&\leq -\gamma \EE \Big[\Big\langle \nabla F(\bar{\bw}_t), \frac{1}{P} \sum_{j=1}^P \nabla F(\bw^j_{t} )\Big\rangle \Big]   \label{term: cross_1} \\
	&+ \frac{L\gamma^2}{2P^2B^2}  \EE \big\|  \sum\limits_{j=1}^P  \sum\limits_{s=1}^B \nabla F (\bw^j_{t};\xi^j_{t+1,s})\big\|_2^2  \label{term: cross_2}.
	\end{align}
	where (\ref{term: cross_1}) is due to the fact that random variables $\nabla F(\bw_{t}^j; \xi^j_{t+1,s})$ are i.i.d. over $s$ for fixed $t$ and $j$ conditioning on previous steps.
	In the following, we will bound (\ref{term: cross_1}) and (\ref{term: cross_2}) respectively.
	
	For (\ref{term: cross_1}), some simple algebra implies that 
	\begin{equation}
	\begin{aligned}
	&-\gamma \EE \Big[\Big\langle \nabla F(\bar{\bw}_t), \frac{1}{P} \sum_{j=1}^P \nabla F(\bw^j_{t} )\Big\rangle \Big]  \\
	&  =  -\frac{\gamma}{2} \EE \Big[ \| \nabla F(\bar{\bw}_t)\|_2^2  + \big\|\frac{1}{P} \sum_{j=1}^P \nabla F(\bw_t^j)  \big\|_2^2 - \big\|\nabla F(\bar{\bw}_t) -\frac{1}{P} \sum_{j=1}^P \nabla F(\bw_t^j)  \big\|_2^2    \Big]
	\end{aligned}
	\end{equation}
	For term $\big\|\nabla F(\bar{\bw}_t) -\frac{1}{P} \sum_{j=1}^P \nabla F(\bw_t^j)  \big\|_2^2$, we have
	\begin{equation}
	\label{term: 1}
	\begin{aligned}
	\EE\big\|\nabla F(\bar{\bw}_t) -\frac{1}{P} \sum_{j=1}^P \nabla F(\bw_t^j)  \big\|_2^2 &= \frac{1}{P^2} \big\| \sum_{j=1}^P \Big( \nabla F(\bar{\bw}_t) - \nabla F(\bw_t^j)\Big)  \big\|_2^2 \\
	&  \leq \frac{1}{P}   \sum_{j=1}^P  \EE \Big\|  \nabla F(\bar{\bw}_t) - \nabla F(\bw_t^j) \Big\|_2^2 \\
	& \leq \frac{L^2 }{P} \sum_{j=1}^P \EE \| \bar{\bw}_t - \bw_t^j \|_2^2.
	\end{aligned}
	\end{equation}
	To bound $ \EE \| \bar{\bw}_t - \bw_t^j \|_2^2$, we first set $t_0$ to be the largest integer such that $t_0 \equiv 0 \mod K_2 $ and $t_0\leq t$. In other words, $t_0$ is the latest iteration number that is less than $t$ when global averaging happens. Then we can write
	\begin{equation}
	\label{iterate: 1}
	\bw^j_t = \bar{\bw}_{t_0} - \frac{\gamma}{B} \sum_{\tau = t_0+1}^t \sum_{s=1}^B \nabla F(\bw_{\tau-1}^j, \xi^j_{\tau,s}) 
	\end{equation}
	and 
	\begin{equation}
	\label{iterate: 2}
	\bar{\bw}_t = \bar{\bw}_{t_0} - \frac{\gamma}{PB} \sum_{j=1}^P\sum_{\tau = t_0+1}^t \sum_{s=1}^B \nabla F(\bw_{\tau-1}^j, \xi^j_{\tau,s}) .
	\end{equation}
	Plug (\ref{iterate: 1}) and (\ref{iterate: 2}) in $ \EE \| \bar{\bw}_t - \bw_t^j \|_2^2$, we get
	\begin{equation}
	\label{term: diff_1}
	\begin{aligned}
	\EE \big\| \bar{\bw}_t - \bw_t^j  \big\|_2^2  &= \gamma^2 \EE \big\| \sum_{\tau=t_0+1 }^t \Big( \frac{1}{PB} \sum_{j=1}^P \sum_{s=1}^B \nabla F(\bw_{\tau-1}^j, \xi^j_{\tau,s}) - \frac{1}{B}  \sum_{s=1}^B \nabla F(\bw^j_{\tau-1},\xi^j_{\tau,s}) \Big) \big\|_2^2 \\
	& \leq 2\gamma^2  \EE \Big( \big\| \sum_{\tau=t_0+1 }^t  \frac{1}{PB} \sum_{j=1}^P \sum_{s=1}^B \nabla F(\bw_{\tau-1}^j, \xi^j_{\tau,s}) \big\|_2^2  +  \big\| \sum_{\tau=t_0+1 }^t  \frac{1}{B}  \sum_{s=1}^B \nabla F(\bw_{\tau-1}^j, \xi^j_{\tau,s}) \big\|_2^2      \Big) \\
	& \leq 2\gamma^2(t-t_0)   \sum_{\tau=t_0+1 }^t \sum_{s=1}^B \Big( \frac{1}{PB} \sum_{j=1}^P  \EE \big\|  \nabla F(\bw_{\tau-1}^j, \xi^j_{\tau,s}) \big\|_2^2  +  \frac{1}{B}  \EE \big\|  \nabla F(\bw_{\tau-1}^j, \xi^j_{\tau,s}) \big\|_2^2\Big) \\
	& \leq 4\gamma^2 (t-t_0)^2 M_G^2 \\
	& \leq 4 \gamma^2 K_2^2 M_G^2.
	\end{aligned}
	\end{equation}
	Plug (\ref{term: diff_1}) back into (\ref{term: 1}), we get 
	\begin{equation}
	\EE\big\|\nabla F(\bar{\bw}_t) -\frac{1}{P} \sum_{j=1}^P \nabla F(\bw_t^j)  \big\|_2^2 \leq 4 L^2\gamma^2  K_2^2 M_G^2.
	\end{equation}
	Then we get the bound on (\ref{term: cross_1}) as 
	\begin{equation}
	\label{bound: cross_1}
	\begin{aligned}
	-\gamma \EE \Big[\Big\langle \nabla F(\bar{\bw}_t), \frac{1}{P} \sum_{j=1}^P \nabla F(\bw^j_{t} )\Big\rangle \Big] \leq -\frac{\gamma}{2} \EE \Big[ \| \nabla F(\bar{\bw}_t)\|_2^2 
	+ \big\| \frac{1}{P} \sum_{j=1}^P \nabla F(\bw_t^j)  \big\|_2^2 \Big] + 2L^2 \gamma^3 K_2^2 M_G^2.
	\end{aligned}
	\end{equation}
	On the other hand, for (\ref{term: cross_2}) we have
	\begin{align}
	& \frac{L\gamma^2}{2}  \EE \big\|  \frac{1}{PB} \sum\limits_{j=1}^P  \sum\limits_{s=1}^B \nabla F (\bw^j_{t};\xi^j_{t+1,s})\big\|_2^2  \\
	&\leq \frac{L\gamma^2}{2} \EE \big\| \frac{1}{PB} \sum\limits_{j=1}^P  \sum\limits_{s=1}^B  \big( \nabla F (\bw^j_{t};\xi^j_{t+1,s})  - \nabla F(\bw^j_t ) + \nabla F(\bw^j_t )\big) \big\|_2^2  \\
	& \leq \frac{L\gamma^2}{2}\EE \big\| \frac{1}{PB}  \sum\limits_{j=1}^P  \sum\limits_{s=1}^B  \big( \nabla F (\bw^j_{t};\xi^j_{t+1,s})  - \nabla F(\bw^j_t ) \big) \big\|_2^2 +  \frac{L\gamma^2}{2} \EE \big\| \frac{1}{P} \sum_{j=1}^P \nabla F(\bw^j_t) \big\|_2^2  \label{ineq: 1} \\ 
	& \leq \frac{L\gamma^2}{2P^2B^2} \sum\limits_{j=1}^P  \sum\limits_{s=1}^B \EE \big\|  \nabla F (\bw^j_{t};\xi^j_{t+1,s})  - \nabla F(\bw^j_t) \big\|_2^2  +  \frac{L\gamma^2}{2} \EE \big\| \frac{1}{P} \sum_{j=1}^P \nabla F(\bw^j_t) \big\|_2^2  \label{ineq: 2} \\
	& \leq \frac{L\gamma^2M }{2PB} + \frac{L\gamma^2}{2} \EE \big\| \frac{1}{P} \sum_{j=1}^P \nabla F(\bw^j_t) \big\|_2^2 \label{bound: cross_2}
	\end{align}
	where (\ref{ineq: 1}) is due to $P^{-1}\EE \sum_{j=1}^P  \big( \nabla F (\bw^j_{t};\xi^j_{t+1,s})  - \nabla F(\bw^j_t )\big) =0$ condition on $\bw_t^j$ and the independence over $s$. (\ref{ineq: 2}) is due to the same trick and conditional independence over $j$ and $s$.
	
	Combine (\ref{bound: cross_1}) and (\ref{bound: cross_2}), we get 
	\begin{align*}
	\EE \Big[F(\bar{\bw}_{t+1}) - F(\bar{\bw}_{t}) \Big] \leq  -\frac{\gamma}{2} \EE  \| \nabla F(\bar{\bw}_t)\|_2^2 
	- \frac{\gamma(1-L\gamma)}{2}\big\| \frac{1}{P} \sum_{j=1}^P \nabla F(\bw_t^j)  \big\|_2^2  + 2 L^2 \gamma^3 K_2^2 M_G^2  + \frac{L\gamma^2M }{2PB} .
	\end{align*}
	Take the summation over $t$, under the assumption $0< L\gamma\leq 1$ we get 
	\begin{equation}
	\frac{\gamma}{2} \sum_{t=0}^{T-1} \EE \big\| \nabla F(\bar{\bw}_t) \big\|_2^2 \leq  \Big[F(\bar{\bw}_{0}) - \EE F(\bar{\bw}_{T}) \Big]  + 2L^2  \gamma^3 K_2^2 M_G^2  + \frac{L\gamma^2M }{2PB},
	\end{equation}
	which leads to 
	\begin{equation}
	\begin{aligned}
	\frac{1}{T} \sum_{t=0}^{T-1} \EE \big\| \nabla F(\bar{\bw}_t) \big\|_2^2 &\leq \frac{2(F(\bar{\bw}_{0}) - \EE F(\bar{\bw}_{T}))}{\gamma T} + 4L^2  \gamma^2 K_2^2 M_G^2 + \frac{L\gamma M}{PB} \\
	& \leq \frac{2(F(\bar{\bw}_{0}) - F^*}{\gamma T} + 4L^2  \gamma^2 K_2^2 M_G^2 + \frac{L\gamma M}{PB}
	\end{aligned}
	\end{equation}
	By setting $\gamma = \sqrt{PB/T}$ and $K_2 = T^{1/4}/(PB)^{3/4}$, we get 
	\begin{equation}
	\frac{1}{T} \sum_{t=0}^{T-1} \EE \big\| \nabla F(\bar{\bw}_t) \big\|_2^2 \leq  \frac{2(F(\bar{\bw}_{0}) - F^*)}{\sqrt{PBT}}  +  \frac{4L^2 M_G^2}{\sqrt{PBT}} + \frac{LM}{\sqrt{PBT}}.
	\end{equation}
	
\end{proof}

\subsection{Proof of Theorem \ref{theorem: fixed}}
\label{proof: theorem1}
\begin{proof}
	We denote $\tilde{w}_{n}$ as the $n$-th global update in \Hier, denote $\bar{\bw}^j_{n+kK_1+t}$ as $t$-th local update on learner $j$ after $k$ times local averaging.
	By the algorithm, 
	$$
	\widetilde{\bw}_{n+1} - \widetilde{\bw}_n =- \frac{\gamma}{PB} \sum\limits_{j=1}^P \sum\limits_{k=0}^{\beta-1}\sum\limits_{t=0}^{K_1-1}\sum\limits_{s=1}^B \nabla F(\bar{\bw}_{n+kK_1+t}^j; \xi^j_{kK_1+t,s}).
	$$
	By the definition of SGD, the random variables $\xi^j_{kK_1+t,s}$ are i.i.d. for all $t=0,...,K_1-1$, $s=1,...,B$, $j=1,...,P$ and $k=0,...,\beta-1$.
	
	Consider 
	\begin{align}
	\EE \Big[F(\widetilde{\bw}_{n+1}) - F(\widetilde{\bw}_{n}) \Big] & \leq \EE \Big\langle \nabla F(\widetilde{\bw}_n), \widetilde{\bw}_{n+1} - \widetilde{\bw}_n \Big\rangle + \frac{L}{2} \EE \big\| \widetilde{\bw}_{n+1} - \widetilde{\bw}_n \big\|_2^2 \\
	&\leq -\gamma \Big\langle \nabla F(\widetilde{\bw}_n), \EE \sum\limits_{k=0}^{\beta-1}\sum\limits_{t=0}^{K_1-1} \nabla F(\bar{\bw}^j_{n+kK_1+t})\Big\rangle  \label{cross_term}  \\
	&+ \frac{L\gamma^2}{2P^2B^2}  \EE \big\|  \sum\limits_{j=1}^P \sum\limits_{k=0}^{\beta-1}\sum\limits_{t=0}^{K_1-1} \sum\limits_{s=1}^B \nabla F (\bar{\bw}^j_{n+kK_1+t};\xi^j_{kK_1+t,s})\big\|_2^2.\label{variance_term}
	\end{align}
	Note that here we abused the expectation notation $\EE$ a little bit. Throughout this proof, $\EE$ always means taking the overall expectation. 
	For each fixed $k$ and $t$, the random variables $\nabla F(\bar{\bw}_{n+kK_1+t}^j; \xi^j_{kK_1+t,s})$ are i.i.d. for over all $j$ and $s$ conditioning on previous steps.
	As a result, we can drop the summation over $s$ and $j$ in (\ref{cross_term}) due to the averaging factors $B$ and $P$ in the dominator.
	To be more specific,
	under the unbiasness Assumption 3, by taking the overall expectation we can immediately get 
	$$
	\EE \frac{1}{B} \sum\limits_{s=1}^B\nabla F(\bar{\bw}_{\alpha+t}^j;\xi_{\alpha+t,s}^j ) =	\EE \Big[ \frac{1}{B} \sum\limits_{s=1}^B \EE_{\xi^j_{t,s}}\nabla F(\bar{\bw}_{\alpha+t}^j;\xi_{t,s}^j | \bar{\bw}_{\alpha+t}^j)\Big]
	= \EE\nabla F(\bar{\bw}_{\alpha+t}^j).
	$$
	for fixed $j$ and $t$.
	Next, we show how to get rid of the summation over $j$. Recall that $\bar{\bw}^j_{\alpha+1} = \bar{\bw}_{\alpha} - \frac{\gamma}{B} \sum_{s=1}^B\nabla F(\bar{\bw}_{\alpha};\xi^j_{0,s})$. Obviously, $\bar{\bw}_{\alpha+1}^j$, $j=1,...,P$ are i.i.d. condiitoning on $\bar{\bw}_{\alpha}$ because $\xi_{0,s}^j$, $j=1,...,P$, $s=1,...,B$ are i.i.d. Similarly,  $\bar{\bw}^j_{\alpha+2} = \bar{\bw}^j_{\alpha+1} - \frac{\gamma}{B} \sum_{s=1}^B\nabla F(\bar{\bw}^j_{\alpha+1};\xi^j_{1,s})$, $j=1,...,P$ are i.i.d. due to the fact that 
	$\bar{\bw}_{\alpha+t}^j$'s  are i.i.d., $\xi_{1,s}^j$'s are i.i.d., and $\bar{\bw}_{\alpha+t}^j$'s are independent from $\xi_{1,s}^j$'s. By induction, one can easily show that for each fixed $t$, $\bar{\bw}_{\alpha+t}^j$, $j=1,...,P$ are i.i.d. Thus for each fixed $t$
	$$
	\frac{1}{P} \sum\limits_{j=1}^P \EE \nabla F(\bar{\bw}_{\alpha+t}^j) = \EE \nabla F(\bar{\bw}_{\alpha+t}^j).
	$$
	We can therefore get rid of the summation over $j$ as well. We will frequently use the above iterative conditional expectation trick in the following analysis.
	
	Next, we will bound (\ref{cross_term}) and (\ref{variance_term}) respectively.
	For (\ref{variance_term}), we have
	\begin{align*}
	& \frac{L\gamma^2K_1\beta}{2P^2B^2} \EE \sum\limits_{k=0}^{\beta-1}\sum\limits_{t=0}^{K_1-1} \big\|  \sum\limits_{j=1}^P \sum\limits_{s=1}^B\nabla F (\bar{\bw}^j_{n+kK_1+t};\xi^j_{kK_1+t,s})\big\|_2^2 \\
	& = \frac{L\gamma^2K_1\beta}{2P^2B^2} \EE \sum\limits_{k=0}^{\beta-1}\sum\limits_{t=0}^{K_1-1} \big\|  \sum\limits_{j=1}^P \sum\limits_{s=1}^B \big(\nabla F (\bar{\bw}^j_{n+kK_1+t};\xi^j_{kK_1+t,s})
	- \nabla F(\bar{\bw}^j_{n+kK_1+t}) + \nabla F(\bar{\bw}^j_{n+kK_1+t})\big)\big\|_2^2  \\
	& =   \frac{L\gamma^2K_1\beta}{2P^2B^2} \EE \sum\limits_{k=0}^{\beta-1}\sum\limits_{t=0}^{K_1-1} \big\|  \sum\limits_{j=1}^P \sum\limits_{s=1}^B \big(\nabla F (\bar{\bw}^j_{n+kK_1+t};\xi^j_{kK_1+t,s}) - \nabla F(\bar{\bw}^j_{n+kK_1+t}) \big)\big\|_2^2  \\
	&+  \frac{L\gamma^2K_1\beta}{2P^2B^2} \EE \sum\limits_{k=0}^{\beta-1}\sum\limits_{t=0}^{K_1-1} \big\|  \sum\limits_{j=1}^P \sum\limits_{s=1}^B \nabla F(\bar{\bw}^j_{n+kK_1+t}) \big\|_2^2  \\
	& + \frac{L\gamma^2K_1\beta}{P^2B^2}  \sum\limits_{k=0}^{\beta-1}\sum\limits_{t=0}^{K_1-1} \EE \Big\langle \sum\limits_{j=1}^P \sum\limits_{s=1}^B \big(\nabla F (\bar{\bw}^j_{n+kK_1+t};\xi^j_{kK_1+t,s}) - \nabla F(\bar{\bw}^j_{n+kK_1+t}) \big) ,   \sum\limits_{j=1}^P \sum\limits_{s=1}^B \nabla F(\bar{\bw}^j_{n+kK_1+t}) \Big\rangle  \\
	& =   \frac{L\gamma^2K_1\beta}{2P^2B^2} \EE \sum\limits_{k=0}^{\beta-1}\sum\limits_{t=0}^{K_1-1} \big\|  \sum\limits_{j=1}^P \sum\limits_{s=1}^B \big(\nabla F (\bar{\bw}^j_{n+kK_1+t};\xi^j_{kK_1+t,s}) - \nabla F(\bar{\bw}^j_{n+kK_1+t}) \big)\big\|_2^2  \\
	&+  \frac{L\gamma^2K_1\beta}{2P^2B^2} \EE \sum\limits_{k=0}^{\beta-1}\sum\limits_{t=0}^{K_1-1} \big\|  \sum\limits_{j=1}^P \sum\limits_{s=1}^B \nabla F(\bar{\bw}^j_{n+kK_1+t}) \big\|_2^2,
	\end{align*}
	where in the last equity, we used the fact that for fixed $t$ and $k$ and conditioning on $\nabla F(\bar{\bw}^j_{n+kK_1+t})$, $ \sum\limits_{j=1}^P \sum\limits_{s=1}^B \EE \big(\nabla F (\bar{\bw}^j_{n+kK_1+t};\xi^j_{kK_1+t,s}) - \nabla F(\bar{\bw}^j_{n+kK_1+t}) \big) = 0$ under unbiasness Assumption 3. Further, under the bounded variance Assumption 4, we have
	\begin{align*}
	 & \frac{L\gamma^2K_1\beta}{2P^2B^2} \EE \sum\limits_{k=0}^{\beta-1}\sum\limits_{t=0}^{K_1-1} \big\|  \sum\limits_{j=1}^P \sum\limits_{s=1}^B\nabla F (\bar{\bw}^j_{n+kK_1+t};\xi^j_{kK_1+t,s})\big\|_2^2 \\
	&  = \frac{L\gamma^2K_1\beta}{2P^2B^2} \EE \sum\limits_{k=0}^{\beta-1}\sum\limits_{t=0}^{K_1-1}  \sum\limits_{j=1}^P \sum\limits_{s=1}^B \EE \big\| \nabla F (\bar{\bw}^j_{n+kK_1+t};\xi^j_{kK_1+t,s}) - \nabla F(\bar{\bw}^j_{n+kK_1+t}) \big\|_2^2  \\
	&  + \frac{L\gamma^2K_1\beta}{2} \sum\limits_{k=0}^{\beta-1}\sum\limits_{t=0}^{K_1-1} \EE \big\| \nabla F(\bar{\bw}^j_{n+kK_1+t}) \big\|_2^2 \\
	& \leq   \frac{L\gamma^2K_1^2\beta^2 M}{2PB} + \frac{L\gamma^2K_1\beta}{2} \sum\limits_{k=0}^{\beta-1}\sum\limits_{t=0}^{K_1-1} \EE \big\| \nabla F(\bar{\bw}^j_{n+kK_1+t}) \big\|_2^2.
	\end{align*}
	Thus, we get 
	\begin{align*}
	&  \frac{L\gamma^2}{2P^2B^2}  \EE \big\|  \sum\limits_{j=1}^P \sum\limits_{k=0}^{\beta-1}\sum\limits_{t=0}^{K_1-1} \sum\limits_{s=1}^B \nabla F (\bar{\bw}^j_{n+kK_1+t};\xi^j_{kK_1+t,s})\big\|_2^2 \\
	& \leq   \frac{L\gamma^2K_1^2\beta^2 M}{2PB} + \frac{L\gamma^2K_1\beta}{2} \sum\limits_{k=0}^{\beta-1}\sum\limits_{t=0}^{K_1-1} \EE \big\| \nabla F(\bar{\bw}^j_{n+kK_1+t}) \big\|_2^2.
	\end{align*}
	Note that in the first equity we can change the summation over $j$ and $s$ out of the squared norms without introducing an extra $PB$ factor is due to the fact that conditioning on $\bar{\bw}^j_{n+kK_1+t}$, 
	$\nabla F (\bar{\bw}^j_{n+kK_1+t};\xi^j_{kK_1+t,s})$ are all independent with respect to different $j$ and $s$. In the following, we will use this trick over and over again without further explaination.
	
	For (\ref{cross_term}), we have 
	\begin{equation}
	\begin{aligned}
	\label{bias}
	&-\gamma \Big\langle F(\widetilde{\bw}_n), \EE \sum\limits_{k=0}^{\beta-1}\sum\limits_{t=0}^{K_1-1} \nabla F(\bar{\bw}^j_{n+kK_1+t})\Big\rangle  \\
	& = - \frac{\gamma}{2} \sum\limits_{k=0}^{\beta-1}\sum\limits_{t=0}^{K_1-1} \Big( \EE \big\| \nabla F(\widetilde{\bw}_n)\big\|_2^2+ \EE \big\| \nabla F(\bar{\bw}_{n+kK_1+t}^j ) \big\|_2^2 \Big)
	+ \frac{\gamma}{2} \sum\limits_{k=0}^{\beta-1}\sum\limits_{t=0}^{K_1-1} \EE \big\| \nabla F(\bar{\bw}_{n+kK_1+t}^j ) -\nabla F(\widetilde{\bw}_n )\big\|_2^2 \\
	& \leq  - \frac{\gamma \beta K_1}{2} \EE \big\| \nabla F(\widetilde{\bw}_n)\big\|_2^2 - \frac{\gamma}{2} \sum\limits_{k=0}^{\beta-1}\sum\limits_{t=0}^{K_1-1} \EE \big\| \nabla F (\bar{\bw}^j_{n+kK_1+t})\big\|_2^2
	+ \frac{\gamma L^2}{2}   \sum\limits_{k=0}^{\beta-1}\sum\limits_{t=0}^{K_1-1} \EE \big\| \bar{\bw}_{n+kK_1+t}^j - \widetilde{\bw}_n  \big\|_2^2 ,
	\end{aligned}
	\end{equation}
	where we used the Lipschitz Assumption 1 in the last inequality.
	
	In the following lemma, we derive a general bound on $ \EE \big\| \bar{\bw}_{n+kK_1+t}^j - \widetilde{\bw}_n  \big\|_2^2$.
	
	\begin{lemma}
		\label{lemma: 1}
		For any $t\in \{0,1,2,...,K_1-1\}$ and $\eta \in \{0,1,2,...,\beta-1\}$, we have 
		\begin{equation}
			 \EE \big\| \bar{\bw}_{n+kK_1+t}^j - \widetilde{\bw}_n  \big\|_2^2 \leq  \frac{\gamma^2M}{B}  (K_1\eta +t)\Big(t+ \frac{K_1\eta}{S} \Big) + \gamma^2 (K_1\eta+t) \sum\limits_{k=0}^{K_1\eta+t}  \EE \Big\| \nabla F(\bar{\bw}_{n+k}^j) \Big\|_2^2
		\end{equation}
	\end{lemma}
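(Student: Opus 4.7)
The plan is to unroll the local iterates, collapse them into a single weighted sum over $K_1\eta+t$ ``effective'' timesteps, apply Jensen's inequality once, and then bound each effective gradient via a bias--noise decomposition.

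First I would unroll $\bar{\bw}^j_{n+\eta K_1+t}$ starting from $\widetilde{\bw}_n$. Because each round of local averaging synchronizes all $S$ workers in the group containing $j$ to their group average at the end of every $K_1$ local SGD steps, the recursion telescopes to
\begin{equation*}
\bar{\bw}^j_{n+\eta K_1+t} - \widetilde{\bw}_n \;=\; -\gamma\sum_{k=0}^{\eta-1}\frac{1}{S}\sum_{j'\in \mathcal{G}(j)}\sum_{i=0}^{K_1-1} g^{j'}_{n+kK_1+i} \;-\; \gamma\sum_{i=0}^{t-1} g^{j}_{n+\eta K_1+i},
\end{equation*}
where $g^{j'}_{\tau}:=B^{-1}\sum_{s=1}^{B}\nabla F(\bar{\bw}^{j'}_{\tau};\xi^{j'}_{\tau+1,s})$ and $\mathcal{G}(j)$ is the local group of $j$. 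This expression is a sum of $K_1\eta+t$ ``effective'' updates: set $\bar{g}_{n+\tau}:=S^{-1}\sum_{j'\in\mathcal{G}(j)} g^{j'}_{n+\tau}$ for $\tau$ in the first $K_1\eta$ timesteps (those preceding the last local average), and $\bar{g}_{n+\tau}:=g^{j}_{n+\tau}$ for $\tau$ in the final $t$ timesteps. Applying Jensen's inequality once to a sum of $K_1\eta+t$ vectors yields
\begin{equation*}
\EE\|\bar{\bw}^j_{n+\eta K_1+t}-\widetilde{\bw}_n\|_2^{2} \;\leq\; \gamma^{2}(K_1\eta+t)\sum_{\tau=0}^{K_1\eta+t-1}\EE\|\bar{g}_{n+\tau}\|_2^{2}.
\end{equation*}

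Next, for each effective gradient I would use the split $g^{j'}_{\tau}=\nabla F(\bar{\bw}^{j'}_{\tau})+\nu^{j'}_{\tau}$ with $\EE[\nu^{j'}_{\tau}\mid\bar{\bw}^{j'}_{\tau}]=0$ and $\EE\|\nu^{j'}_{\tau}\|_2^{2}\leq M/B$ (the factor $1/B$ coming from averaging $B$ i.i.d.\ samples, via Assumptions~3--4). Within the first $K_1\eta$ steps, Jensen on the bias part together with the symmetry that all $S$ workers in $\mathcal{G}(j)$ have statistically identical marginals (they share the same starting point at the beginning of every local averaging round and subsequently draw independent mini-batches) gives $\EE\|S^{-1}\sum_{j'}\nabla F(\bar{\bw}^{j'}_{\tau})\|_2^{2}\leq \EE\|\nabla F(\bar{\bw}^{j}_{\tau})\|_2^{2}$, while independence of the noises across the $S$ workers gives the variance reduction $\EE\|S^{-1}\sum_{j'}\nu^{j'}_{\tau}\|_2^{2}\leq M/(SB)$. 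For the final $t$ steps, $\EE\|g^{j}_{\tau}\|_2^{2}\leq \EE\|\nabla F(\bar{\bw}^{j}_{\tau})\|_2^{2}+M/B$ directly.

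Summing the two regimes yields $\sum_{\tau}\EE\|\bar{g}_{n+\tau}\|_2^{2}\leq \sum_{\tau=0}^{K_1\eta+t-1}\EE\|\nabla F(\bar{\bw}^{j}_{n+\tau})\|_2^{2}+(K_1\eta/S+t)M/B$, and plugging this back into the Jensen bound produces the two claimed terms with the correct factors. The main obstacle I anticipate is the symmetry/exchangeability step: one must verify that the marginal law of $\bar{\bw}^{j'}_{n+\tau}$ is identical for every $j'\in\mathcal{G}(j)$ even though the workers take different SGD paths within a round, and that the cross-terms between bias and noise vanish under conditioning. Once this is set up cleanly, everything else is a deterministic Jensen/Cauchy--Schwarz bookkeeping.
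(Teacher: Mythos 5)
Your proposal is correct and follows essentially the same route as the paper's proof: unroll the iterates from the last global synchronization, apply Jensen/Cauchy--Schwarz once with the factor $K_1\eta+t$ over the effective update steps, and then use the bias--variance split $\nabla F(\cdot;\xi)=\nabla F(\cdot)+\nu$ together with conditional independence across samples and workers to get $M/(SB)$ for the locally averaged steps and $M/B$ for the solo steps, plus the exchangeability of the $\bar{\bw}^{j'}_{n+\tau}$ within a group (which the paper establishes by induction earlier in the proof of Theorem 3.2) for the bias part. The only discrepancy is the upper summation index, where your $\sum_{\tau=0}^{K_1\eta+t-1}$ actually matches what the paper's own derivation produces, the extra term in the lemma's stated $\sum_{k=0}^{K_1\eta+t}$ being a harmless off-by-one in the statement.
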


   \begin{proof}
   		Recall that for any $P_j$ in a local cluster $P_{lc}$ with $|P_{lc}| = S$,
   	\begin{align*}
   	&\bar{\bw}_{n+kK_1+t}^j - \widetilde{\bw}_n  \\
   	&=  \frac{\gamma}{BS} \sum\limits_{j\in P_{lc}}  \sum\limits_{\eta=0}^{k-1} \sum\limits_{r=0}^{K_1-1}\sum\limits_{s=1}^B \nabla F(\bar{\bw}_{n+\eta K_1+r}^j; \xi_{\eta K_1+r,s}^j) + \frac{\gamma}{B} \sum\limits_{i=0}^{t-1}\sum\limits_{s=1}^B \nabla F(\bar{\bw}_{n+k K_1+i}^j; \xi_{\eta K_1+i,s}^j).
   	\end{align*}
   	Therefore,
   	\begin{align}
   	&\EE \big\| \bar{\bw}_{n+\eta K_1+ t}^j - \widetilde{\bw}_n \big\|_2^2 \\
   	& =  \EE \Big\| \sum\limits_{i=0}^{t-1} \frac{\gamma}{B} \sum\limits_{s=1}^B \nabla F(\bar{\bw}_{n+\eta K_1+i}^j;\xi_{n+\eta K_1+i,s}^j)  
   	+ \frac{\gamma}{BS} \sum\limits_{j\in P_{lc} } \sum\limits_{k=0}^{\eta-1}  \sum\limits_{r=0}^{K_1-1} \sum\limits_{s=1}^B  \nabla F(\bar{\bw}_{n+k K_1+r}^j;\xi_{n+k K_1+r,s}^j) \Big\|_2^2 \\
   	& \leq  \frac{\gamma^2}{B^2}  (K_1\eta+t)  \sum\limits_{i=0}^{t-1}  \EE \big\| \sum\limits_{s=1}^B \nabla F(\bar{\bw}_{n+\eta K_1+i}^j;\xi_{n+ \eta K_1+i,s}^j) \big\|_2^2 \label{vterm_1} \\
   	& +  \frac{\gamma^2}{B^2S^2}  (K_1\eta +t)  \sum\limits_{k=0}^{\eta-1} \sum\limits_{r=0}^{K_1-1} \EE  \big\| \sum\limits_{j\in P_{lc}} \sum\limits_{s=1}^B \nabla F(\bar{\bw}_{n+k K_1+r}^j;\xi_{k K_1+r,s}^j) \big\|_2^2 \label{vterm_2}
   	\end{align}
   	For term (\ref{vterm_1})
   	\begin{align}
   	& \frac{\gamma^2}{B^2}  (K_1\eta+t)  \sum\limits_{i=0}^{t-1}  \EE \big\| \sum\limits_{s=1}^B \nabla F(\bar{\bw}_{n+\eta K_1+i}^j;\xi_{n+ \eta K_1+i,s}^j) \big\|_2^2 \\
   	& =  \frac{\gamma^2}{B^2}  (K_1\eta+t)  \sum\limits_{i=0}^{t-1}  \EE \big\| \sum\limits_{s=1}^B \Big( \nabla F(\bar{\bw}_{n+\eta K_1+i}^j;;\xi_{n+ \eta K_1+i,s}^j) - \nabla F(\bar{\bw}_{n+\eta K_1+i}^j) +\nabla F(\bar{\bw}_{n+\eta K_1+i}^j)\Big)\big\|_2^2 \\
   	& \leq  \frac{\gamma^2}{B^2}  (K_1\eta+t)  \sum\limits_{i=0}^{t-1}  \sum\limits_{s=1}^B \EE \Big\| \nabla F(\bar{\bw}_{n+\eta K_1+i}^j;;\xi_{n+ \eta K_1+i,s}^j) - \nabla F(\bar{\bw}_{n+\eta K_1+i}^j) \Big\|_2^2 \\
   	&+ \gamma^2 (K_1\eta+t)  \sum\limits_{i=0}^{t-1}  \EE \Big\| \nabla F(\bar{\bw}_{n+\eta K_1+i}^j) \Big\|_2^2 \\
   	& \leq  \frac{\gamma^2M}{B}  (K_1\eta +t)t +\gamma^2 (K_1\eta+t)  \sum\limits_{i=0}^{t-1}  \EE \Big\| \nabla F(\bar{\bw}_{n+\eta K_1+i}^j) \Big\|_2^2 \label{vterm_3}.
   	\end{align}
   	Similarly, for term (\ref{vterm_2}) we have
   	\begin{align}
   	&\frac{\gamma^2}{B^2S^2}  (K_1\eta +t)  \sum\limits_{k=0}^{\eta-1} \sum\limits_{r=0}^{K_1-1} \EE  \big\| \sum\limits_{j\in P_{lc}} \sum\limits_{s=1}^B \nabla F(\bar{\bw}_{n+k K_1+r}^j;\xi_{k K_1+r,s}^j) \big\|_2^2 \\
   	& \leq  \frac{\gamma^2M}{BS}  (K_1\eta +t)K_1\eta +\gamma^2 (K_1\eta+t)  \sum\limits_{k=0}^{\eta-1} \sum\limits_{r=0}^{K_1-1}  \EE \Big\| \nabla F(\bar{\bw}_{n+k K_1+r}^j) \Big\|_2^2 \label{vterm_4}.
   	\end{align}
   	Combine (\ref{vterm_3}) and (\ref{vterm_4}), we get 
   	\begin{align*}
   	&\EE \big\| \bar{\bw}_{n+\eta K_1+ t}^j - \widetilde{\bw}_n \big\|_2^2 
   	 \leq \frac{\gamma^2M}{B}  (K_1\eta +t)\Big(t+ \frac{K_1\eta}{S} \Big) + \gamma^2 (K_1\eta+t) \sum\limits_{k=0}^{K_1\eta+t-1}  \EE \Big\| \nabla F(\bar{\bw}_{n+k}^j) \Big\|_2^2.
   	\end{align*}
   \end{proof}

	Therefore, using the result of Lemma \ref{lemma: 1}
	\begin{align*}
		&\frac{\gamma L^2}{2}   \sum\limits_{\eta =0}^{\beta-1}\sum\limits_{t=0}^{K_1-1} \EE \big\| \bar{\bw}_{n+\eta K_1+t}^j - \widetilde{\bw}_n  \big\|_2^2 \\
		& \leq  \frac{ L^2 \gamma^3 M}{2B} \sum\limits_{\eta =0}^{\beta-1}\sum\limits_{t=0}^{K_1-1} (K_1\eta +t)\Big(t+ \frac{K_1\eta}{S} \Big) + 
		\frac{L^2\gamma^3M}{2} \sum\limits_{\eta =0}^{\beta-1}\sum\limits_{t=0}^{K_1-1}  (K_1\eta+t) \sum\limits_{k=0}^{K_1\eta+t-1}  \EE \Big\| \nabla F(\bar{\bw}_{n+k}^j) \Big\|_2^2 \\
		& = \frac{ L^2 \gamma^3 MK_2}{24B} \Big( \frac{(K_2-K_1)(4K_2 + K_1 -3)}{S} + (K_1-1)(3K_2+K_1 -2)\Big) \\
		& + \frac{L^2\gamma^3MK_2(K_2-1) }{2}  \EE \Big\| \nabla F(\tilde{\bw}_n)  \Big\|_2^2  \\
		& + \frac{L^2\gamma^3M}{2} \sum\limits_{\eta =0}^{\beta-1}\sum\limits_{t=0}^{K_1-1}  (K_1\eta+t) \mathbf{1}\{K_1\eta+ t -1 \geq 1\} \sum\limits_{k=1}^{K_1\eta+t-1}  \EE \Big\| \nabla F(\bar{\bw}_{n+k}^j) \Big\|_2^2 .
	\end{align*}
	
	Then we will have an upper bound on $ \sum\limits_{\eta =0}^{\beta-1}\sum\limits_{t=0}^{K_1-1}  (K_1\eta+t) \mathbf{1}\{K_1\eta+ t -1 \geq 1\} \sum\limits_{k=1}^{K_1\eta+t-1}  \EE \Big\| \nabla F(\bar{\bw}_{n+k}^j) \Big\|_2^2$.
	
	\begin{lemma}
		\label{Lemma: 2}
	   \begin{equation}
	   \begin{aligned}
	    &\sum\limits_{\eta =0}^{\beta-1}\sum\limits_{t=0}^{K_1-1}  (K_1\eta+t) \mathbf{1}\{K_1\eta+t -1 \geq 1\} \sum\limits_{k=1}^{K_1\eta+t-1}  \EE \Big\| \nabla F(\bar{\bw}_{n+k}^j) \Big\|_2^2 \\
	    & \leq \Big(\frac{K_2(K_2-1)}{2}-1 -\delta_{\nabla F,\bw} \Big)\sum\limits_{k=1}^{K_2-1} \EE \Big\| \nabla F(\bar{\bw}^j_{n+k}) \Big\|_2^2,
	    \end{aligned}
	   \end{equation}
	   where $\delta \in (0, K_2(K_2-3)/2)$ is a constant depending on the immediate gradient norms  $\| \nabla F(\bar{\bw}^j_{n+k}) \|_2^2$, $k=1,...,K_2-1$.
	\end{lemma}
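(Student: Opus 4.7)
\textbf{Proof Proposal for Lemma \ref{Lemma: 2}.} My plan is to collapse the double index $(\eta, t)$ into a single index $m = K_1\eta + t$. Since $\eta$ ranges over $\{0, 1, \ldots, \beta-1\}$ and $t$ over $\{0, 1, \ldots, K_1-1\}$ with $\beta K_1 = K_2$, the map $(\eta,t) \mapsto K_1\eta + t$ is a bijection onto $\{0, 1, \ldots, K_2-1\}$. Writing $a_k := \EE \| \nabla F(\bar{\bw}^j_{n+k}) \|_2^2$, the indicator $\mathbf{1}\{K_1\eta+t-1\geq 1\}$ forces $m \geq 2$, so the left-hand side equals
\begin{equation*}
\sum_{m=2}^{K_2-1} m \sum_{k=1}^{m-1} a_k.
\end{equation*}

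Next I would swap the order of summation. Exchanging the sums yields
\begin{equation*}
\sum_{k=1}^{K_2-2} a_k \sum_{m=k+1}^{K_2-1} m = \sum_{k=1}^{K_2-2} a_k \left( \frac{K_2(K_2-1)}{2} - \frac{k(k+1)}{2}\right),
\end{equation*}
using the standard arithmetic sum identity. Setting $C := K_2(K_2-1)/2$, the coefficient of $a_k$ is $C - k(k+1)/2$, and since $k(k+1)/2 \geq 1$ for every $k \geq 1$, each such coefficient is bounded above by $C-1$. The tightest way to use this, however, is to rewrite the left-hand side as $(C-1)\sum_{k=1}^{K_2-2} a_k - \sum_{k=2}^{K_2-2}(k(k+1)/2 - 1) a_k$ (the $k=1$ contribution to the correction vanishes), then add and subtract the missing $a_{K_2-1}$ term to match the target sum $\sum_{k=1}^{K_2-1} a_k$ appearing on the right.

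Finally, I would \emph{define} $\delta_{\nabla F, \bw}$ as the gradient-dependent quantity that makes the two sides match exactly:
\begin{equation*}
\delta_{\nabla F, \bw} := \frac{(C-1)\, a_{K_2-1} + \sum_{k=2}^{K_2-2}\bigl(k(k+1)/2 - 1\bigr)\, a_k}{\sum_{k=1}^{K_2-1} a_k}.
\end{equation*}
This is a convex-combination-style quotient of the coefficients $0, 0, 2, 5, \ldots, C-(K_2-1)-1, C-1$ (the first two corresponding to $k=1$ and $k=1$ in the degenerate case), each of which lies in $[0, C-1]$. Hence $\delta_{\nabla F, \bw} \in (0, C-1]$, consistent with the range claimed after Theorem \ref{theorem: fixed}. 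Substituting this definition back gives the stated inequality as an equality.

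The routine steps are the index collapse, the swap of summation, and the arithmetic identity; the only conceptual step is recognizing that the ``constant'' $\delta_{\nabla F, \bw}$ is not fixed \emph{a priori} but is instead \emph{defined} from the gradient norms so as to absorb the mismatch between $\sum_{k=1}^{K_2-2}$ and $\sum_{k=1}^{K_2-1}$ as well as the coefficient deficits $1 - k(k+1)/2$ for $k \geq 2$. The main potential pitfall, which I would address explicitly, is the edge case $\sum_{k=1}^{K_2-1} a_k = 0$: in that case both sides of the lemma are zero and any $\delta$ in the stated interval works vacuously, so the bound still holds.
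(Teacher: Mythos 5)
Your proof is correct, and it follows the same skeleton as the paper's argument — identify the coefficient of each gradient-norm term after reindexing, observe that the largest coefficient is $K_2(K_2-1)/2-1$ (attained at $k=1$), and let $\delta_{\nabla F,\bw}$ absorb the slack — but you carry it out more completely. The paper only computes the total multiplicity $\sum_{\eta,t}(K_1\eta+t)\mathbf{1}\{K_1\eta+t\geq 2\}\leq K_2(K_2-1)/2-1$ by splitting the sum over $\eta=0$, $\eta=1$, and $\eta\geq 2$, then "uniformly bounds" every term's coefficient by this maximum and leaves $\delta_{\nabla F,\bw}$ as an unquantified remainder. Your version does the exact interchange of summation, obtains the closed-form coefficient $K_2(K_2-1)/2-k(k+1)/2$ for each $a_k$, and hence an explicit formula for $\delta_{\nabla F,\bw}$ as a weighted average of the deficits; this is genuinely useful, since the same $\delta_{\nabla F,\bw}$ reappears in the step-size condition of Theorem \ref{theorem: fixed} and the paper never says what it actually is. Two minor points: your range $\delta_{\nabla F,\bw}\in[0,\,K_2(K_2-1)/2-1]$ is the correct one (strict positivity fails, e.g., when all mass sits on $a_1$), and it matches the constraint stated in Theorem \ref{theorem: fixed} but not the parenthetical range $(0,K_2(K_2-3)/2)$ written in the lemma itself — that discrepancy is in the paper, not in your argument. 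Your handling of the degenerate case $\sum_k a_k=0$ is also something the paper omits.
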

	\begin{proof}
	Obviously, $\EE \Big\| \nabla F(\bar{\bw}_{n+1}^j) \Big\|_2^2 $ has the most copies, we will derive an upper bound 
	on the number of $\EE \Big\| \nabla F(\bar{\bw}_{n+1}^j) \Big\|_2^2 $ and then use this bound to uniformly bound the number of terms for $\EE \Big\| \nabla F(\bar{\bw}_{n+k}^j) \Big\|_2^2 $, $k=1,...,K_2-2$.
	\begin{align*}
		&\sum\limits_{\eta =0}^{\beta-1}\sum\limits_{t=0}^{K_1-1}  (K_1\eta+t) \mathbf{1}\{K_1\eta+ t -1 \geq 1\}  \\
		& \leq \sum\limits_{t=0}^{K_1-1}  t \mathbf{1}\{t  \geq 2\}\mathbf{1}\{K_1  \geq 3\} + \sum\limits_{t=0}^{K_1-1}  (K_1 + t) \mathbf{1}\{K_1 + t  \geq 2\}\mathbf{1}\{K_1  \geq 2\} \\
		& + \sum\limits_{\eta =2}^{\beta-1}\sum\limits_{t=0}^{K_1-1}  (K_1\eta+t) \mathbf{1}\{K_1\eta+ t  \geq 2\} \\
		& = \frac{(K_1-2)(K_1+1)}{2}\mathbf{1}\{K_1  \geq 3\} + \frac{K_1(3K_1-1)}{2}\mathbf{1}\{K_1  \geq 2\}
		+ \frac{(K_2-2K_1)(K_2+2K_1-1)}{2}\\
		&\leq \frac{K_2(K_2-1)}{2}-1.
		\end{align*}
	\end{proof}
	
	Following Lemma \ref{Lemma: 2}, we get 
	\begin{equation}
	\label{combine123}
		\begin{aligned}
		&\frac{\gamma L^2}{2}   \sum\limits_{\eta =0}^{\beta-1}\sum\limits_{t=0}^{K_1-1} \EE \big\| \bar{\bw}_{n+\eta K_1+t}^j - \widetilde{\bw}_n  \big\|_2^2 \\
		& < \frac{ L^2 \gamma^3 MK_2}{24B} \Big( \frac{(K_2-K_1)(4K_2 + K_1 -3)}{S} + (K_1-1)(3K_2+K_1 -2)\Big) \\
		& + \frac{L^2\gamma^3MK_2(K_2-1) }{2}  \EE \Big\| \nabla F(\tilde{\bw}_n)  \Big\|_2^2  \\
		& + \frac{L^2\gamma^3}{2}\Big(\frac{K_2(K_2-1)}{2}-1 -\delta_{\nabla F,\bw} \Big)\sum\limits_{k=1}^{K_2-1} \EE \Big\| \nabla F(\bar{\bw}^j_{n+k}) \Big\|_2^2.
		\end{aligned}
		\end{equation}
		
	Plug (\ref{combine123}) into (\ref{bias}), we have
	\begin{equation}
	\begin{aligned}
	\label{bias_final}
	& -\gamma \Big\langle F(\widetilde{\bw}_n), \EE \sum\limits_{k=0}^{\beta-1}\sum\limits_{t=0}^{K_1-1} \nabla F(\bar{\bw}^j_{n+kK_1+t})\Big\rangle \\
	&  \leq - \frac{\gamma (K_2+1)}{2}\Big[1 - \frac{L^2\gamma^2 K_2(K_2-1)}{2(K_2+1)}\Big] \EE \big\| \nabla F(\widetilde{\bw}_n)\big\|_2^2 \\
	&  - \frac{\gamma}{2} \Big( 1- L^2\gamma^2\Big(\frac{K_2(K_2-1)}{2}-1 -\delta_{\nabla F,\bw} \Big)\sum\limits_{k=1}^{K_2-1} \EE \Big\| \nabla F(\bar{\bw}^j_{n+k}) \Big\|_2^2 \\
	& + \frac{ L^2 \gamma^3 MK_2}{24B} \Big( \frac{(K_2-K_1)(4K_2 + K_1 -3)}{S} + (K_1-1)(3K_2+K_1 -2)\Big) 
	\end{aligned}
	\end{equation}
	
	Plug (\ref{variance_term}) and (\ref{bias_final}) back into $\EE\Big[F(\widetilde{\bw}_{n+1}) - F(\widetilde{\bw}_{n}) \Big]$, we get
	\begin{equation}
	\label{equation: 1}
	\begin{aligned}
	& \EE\Big[F(\widetilde{\bw}_{n+1}) - F(\widetilde{\bw}_{n}) \Big] \\
	& \leq - \frac{\gamma (K_2+1)}{2}\Big[1 - \frac{L^2\gamma^2 (K_2-1)K_2}{2(K_2+1)} - \frac{L\gamma K_2}{K_2+1}\Big] \EE \big\| \nabla F(\widetilde{\bw}_n)\big\|_2^2 \\
	&  - \frac{\gamma}{2} \Big( 1-  L^2\gamma^2\Big(\frac{K_2(K_2-1)}{2}-1 -\delta_{\nabla F,\bw} \Big)- L\gamma K_2   \Big)\sum\limits_{k=1}^{K_2-1} \EE \Big\| \nabla F(\bar{\bw}^j_{n+k}) \Big\|_2^2 \\
	& +  \frac{ L^2 \gamma^3 MK_2}{24B} \Big( \frac{(K_2-K_1)(4K_2 + K_1 -3)}{S} + (K_1-1)(3K_2+K_1 -2)\Big)   +    \frac{L\gamma^2 K_2^2 M}{2PB} 
	\end{aligned}
	\end{equation}
	
	Under the condition,
	$$
	 1-  L^2\gamma^2\Big(\frac{K_2(K_2-1)}{2}-1 -\delta_{\nabla F,\bw} \Big)- L\gamma K_2  \geq 0,
	$$
	we have
	$$
	\frac{\gamma (K_2+1)}{2}\Big[1 - \frac{L^2\gamma^2 (K_2-1)K_2}{2(K_2+1)} - \frac{L\gamma K_2}{K_2+1}\Big] \geq  \frac{\gamma}{2}\Big(K_2 - L^2\gamma^2 (1+\delta_{\nabla F, \bw})\Big).
	$$	
	We can therefore drop the second term on the right hand side in (\ref{equation: 1}) and take the summation over $n$ to get 
	\begin{equation}
	\begin{aligned}
	& \EE\Big[F(\widetilde{\bw}_{N}) - F(\widetilde{\bw}_{1}) \Big] \\
	& \leq -\frac{\gamma}{2}\Big(K_2 - L^2\gamma^2 (1+\delta_{\nabla F, \bw})\Big) \sum\limits_{n=1}^N \EE \big\| \nabla F(\widetilde{\bw}_n)\big\|_2^2 \\
	& +  \frac{ L^2 \gamma^3 MK_2}{24B} \Big( \frac{(K_2-K_1)(4K_2 + K_1 -3)}{S} + (K_1-1)(3K_2+K_1 -2)\Big)   +    \frac{L\gamma^2 K_2^2 M}{2PB}
	\end{aligned}
	\end{equation}
	Under Assumption 2, we have 
	\begin{equation}
		F^* -  F(\widetilde{\bw}_{1}) \leq F(\widetilde{\bw}_{N}) - F(\widetilde{\bw}_{1}).
	\end{equation}
	As a result,
	\begin{align*}
	&   \frac{\gamma}{2}\Big(K_2 - L^2\gamma^2 (1+\delta_{\nabla F, \bw})\Big) \sum\limits_{n=1}^N \EE \big\| \nabla F(\widetilde{\bw}_n)\big\|_2^2\\
	& \leq \EE\Big[F(\widetilde{\bw}_{1}) - F^* \Big]  +   \frac{ L^2 \gamma^3 MK_2}{24B} \Big( \frac{(K_2-K_1)(4K_2 + K_1 -3)}{S} + (K_1-1)(3K_2+K_1 -2)\Big)   +    \frac{L\gamma^2 K_2^2 M}{2PB}
	\end{align*}
	Thus we have
	\begin{align*}
	& \frac{1}{N} \sum\limits_{n=1}^N \EE \big\| \nabla F(\widetilde{\bw}_n)\big\|_2^2  \leq   \frac{2\EE[F(\widetilde{\bw}_{1}) - F^*]}{N[K_2 - L^2\gamma^2 (1+\delta_{\nabla F, \bw})]\gamma}   
	+ \frac{L\gamma M K_2^2 }{PB [K_2 - L^2\gamma^2 (1+\delta_{\nabla F, \bw})]}  \\ 
	& +  \frac{L^2\gamma^2 M K_2}{12B[K_2 - L^2\gamma^2 (1+\delta_{\nabla F, \bw})]} \Big( \frac{(K_2-K_1)(4K_2 + K_1 -3)}{S} + (K_1-1)(3K_2+K_1 -2)\Big).
	\end{align*}
\end{proof}

\subsection{Proof of Theorem \ref{theorem: diminishing}}
\label{proof: theorem2}
\begin{proof}
	The proof of Theorem \ref{theorem: diminishing} is similar to that of Theorem \ref{theorem: fixed}. Indeed, under the condition (\ref{condition: gamma}),
	\begin{equation*}
	\label{proof_condition: gamma_j}
      1-  L^2\gamma^2_j\Big(\frac{K_2(K_2-1)}{2}-1 -\delta_{\nabla F,\bw} \Big)- L\gamma_j K_2  \geq 0,
	\end{equation*}
	we have
	\begin{equation}
	\label{equation: 2}
	\frac{\gamma_j (K_2+1)}{2}\Big[1 - \frac{L^2\gamma_j^2 K_2(K_2-1)}{2(K_2+1)} - \frac{L\gamma_j K_2}{K_2+1}\Big] \geq \frac{\gamma_j}{2}\Big(
	K_2 - L^2\gamma_j^2 (1+\delta_{\nabla F,\bw})\Big).
	\end{equation}
	Meanwhile, from (\ref{proof_condition: gamma_j}), we have $L^2\gamma_j^2(1+\delta_{\nabla F,\bw}) \leq L^2\gamma_j^2K_2^2/2 \leq 1$, thus $K_2 - L^2\gamma_j^2 (1+\delta_{\nabla F,\bw}) \geq K_2-1$. 
	By replacing $\gamma$ with $\gamma_j$ in (\ref{equation: 1}) together with (\ref{equation: 2}), we have 
	\begin{equation}
	\begin{aligned}
      &\frac{\gamma_j(K_2-1)}{2}\EE \big\| \nabla F(\widetilde{\bw}_j)\big\|_2^2   \leq \EE\Big[F(\widetilde{\bw}_{j+1}) - F(\widetilde{\bw}_{j}) \Big] \\
      &+  \frac{L\gamma_j^2 K_2^2 M}{2PB} +  \frac{L^2\gamma_j^3 MK_2}{24B}\Big( \frac{(K_2-K_1)(4K_2 + K_1 -3)}{S} + (K_1-1)(3K_2+K_1 -2)\Big)
      \end{aligned}
	\end{equation}
	Taking the summation over $j$, and divide both sides by $\sum_{j=1}^N \gamma_j$, we got
	\begin{equation*}
	\begin{aligned}
	&  \EE \sum\limits_{j=1}^N \frac{\gamma_j}{\sum_{j=1}^N \gamma_j} \big\| \nabla F(\widetilde{\bw}_j)\big\|_2^2  \leq   \frac{2\EE[F(\widetilde{\bw}_{1}) - F^*]}{(K_2-1)\sum_{j=1}^N\gamma_j}   
	+ \sum\limits_{j=1}^N\frac{L M K_2^2 \gamma_j^2}{PB_j (K_2-1)\sum_{j=1}^N \gamma_j }  \\ 
	& + \sum\limits_{j=1}^N \frac{L^2 MK_2 \gamma^3_j}{12B_j(K_2-1)\sum_{j=1}^N\gamma_j}\Big( \frac{(K_2-K_1)(4K_2 + K_1 -3)}{S} + (K_1-1)(3K_2+K_1 -2)\Big) .
     \end{aligned}
     \end{equation*}
	\end{proof}

\subsection{Proof of Theorem \ref{theorem: K_2}}
\label{proof: theoremK2}
\begin{proof}
	Under the assumption $T=N*K_2$, we can rewrite the bound (\ref{bound:fixed}) as
	\begin{align*}
	& \frac{1}{N} \sum\limits_{n=1}^N \EE \big\| \nabla F(\widetilde{\bw}_n)\big\|_2^2  \leq   \frac{2\EE[F(\widetilde{\bw}_{1}) - F^*]K_2}{T(K_2 - \delta)\gamma}   
	+ \frac{L\gamma M K_2^2 }{PB (K_2 - \delta)}  \\ 
	& +  \frac{L^2\gamma^2 M K_2}{12B(K_2 - \delta)} \Big( \frac{(K_2-K_1)(4K_2 + K_1 -3)}{S} + (K_1-1)(3K_2+K_1 -2)\Big)
	\end{align*}
	To move on, we set 
	$$
	B(K_2) := f(K_2) * g(K_2)
	$$
	where 
	$$
	f(K_2) := \Big(\alpha+ \beta K_2+ \eta \Big( \frac{(K_2-K_1)(4K_2 + K_1 -3)}{S} + (K_1-1)(3K_2+K_1 -2)\Big)\Big)
	$$
	and 
	$$
	g(K_2) :=  \Big(\frac{K_2}{K_2- \delta}\Big),~\alpha = \frac{2[\EE F(\widetilde{\bw}_1)-F^*]}{T\gamma},~\beta = \frac{L\gamma M}{PB},~\eta= \frac{L^2\gamma^2M}{12B}.
	$$
	
	To minimize the right hand side of (\ref{bound:fixed}), it is equivalent to solve the following integer program
	$$
	K_2^* = \min\limits_{K_2 \in \NN^*} B(K_2),
	$$
	which can be very hard. Meanwhile, one should notice that $K_2^*$ depends on some unknown quantities such as $L$, $M$ and $(F(\widetilde{\bw}_1)-F^*)$. 
	Instead, we investigate the monotonicity of $B(K_2)$. 
	Firstly, we show that $f(K_2)$ is non-decreasing.
	\begin{lemma}
		Given $K_2 \geq K_1 \geq 1$, $f(K_2)$ is non-decreasing.
	\end{lemma}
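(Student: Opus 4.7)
My plan is to treat $K_2$ as a continuous real variable on $[K_1,\infty)$ and show that $\partial f/\partial K_2 \geq 0$ throughout. Since $f$ is a polynomial in $K_2$ with positive scalar factors $\alpha,\beta,\eta>0$ and $S\geq 1$, it suffices to differentiate term by term and verify that each contribution is nonnegative under the standing assumptions $K_2\geq K_1\geq 1$; the integer monotonicity then follows immediately from continuous monotonicity.

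First I would dispose of the easy pieces: the constant term $\alpha$ contributes $0$ to the derivative and the linear term $\beta K_2$ contributes $\beta>0$. The only nontrivial work is in the $\eta$-bracket. Differentiating $(K_2-K_1)(4K_2+K_1-3)/S$ yields $(8K_2-3K_1-3)/S$, and I would bound this below by substituting $K_2=K_1$ to get $(5K_1-3)/S\geq 2/S>0$ whenever $K_1\geq 1$. Differentiating $(K_1-1)(3K_2+K_1-2)$ yields $3(K_1-1)\geq 0$ since $K_1\geq 1$.

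Assembling these observations gives
\begin{equation*}
\frac{\partial f}{\partial K_2} \;\geq\; \beta + \frac{5K_1-3}{S}\,\eta + 3(K_1-1)\eta \;>\; 0,
\end{equation*}
so $f$ is strictly increasing on $[K_1,\infty)$ and in particular non-decreasing on the integer lattice $\{K_1,K_1+1,\dots\}$, which is the claim. As a sanity check I would verify the edge case $K_1=1$: the third term $3(K_1-1)$ vanishes, but $(8K_2-3K_1-3)/S=(8K_2-6)/S>0$ for $K_2\geq 1$, so monotonicity still holds.

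The main obstacle is essentially none: this is a routine sign check on a polynomial derivative, and the only mild care is in the boundary cases $K_1=1$ or $K_2=K_1$, both of which are handled by the bound above. If one prefers to remain strictly in the integer setting, an equally short alternative is to compute the forward difference $f(K_2+1)-f(K_2)$ and perform the same term-by-term sign analysis.
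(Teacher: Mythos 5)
Your proof is correct and follows essentially the same route as the paper: both reduce the claim to a sign check on $\partial f/\partial K_2$, with the only nontrivial term being $(K_2-K_1)(4K_2+K_1-3)/S$, whose derivative $(8K_2-3K_1-3)/S$ you bound below at $K_2=K_1$ while the paper equivalently notes the quadratic's vertex lies at $K_2=3(K_1+1)/8\leq K_1$. The two arguments are interchangeable and equally valid.
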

	
	\begin{proof}
		The key is to show that $(K_2-K_1)(4K_2 + K_1 -3)/S $ is non-decreasing with respect to $K_2$. It is easy to see that the quadratic function $(K_2-K_1)(4K_2 + K_1 -3)/S $ is non-decreasing with respect to $K_2$
		when $K_2 \geq 3(K_1+1)/8$, which is always true given $K_2\geq K_1 \geq 1$. Thus, $(K_2-K_1)(4K_2 + K_1 -3)/S $ is monotone increasing, so is $f(K_2)$.
	\end{proof}

	On the other hand, $g(K_2)$ is monotone decreasing for $K_2 \geq 1$. Therefore, $B(K_2)$ is a multiplication of an increasing function and a decreasing one.
	Thus, a sufficient condition for $K_2^*>1$ is that $B(2)<B(1)$, which is equivalent to
	$$
	\frac{\delta\alpha}{1-\delta} > 2\beta + \frac{12\eta}{S}.
	$$
\end{proof}

\subsection{Proof of Theorem \ref{theorem: K_1}}
\label{proof: theoremK1}
\begin{proof}
	The proof of part 2 is obvious, so we omit it here. For part 1,
	With $K_2$ fixed, it is sufficient to consider the monotonicity of $(K_2-K_1)(4K_2 + K_1 -3)/S + (K_1-1)(3K_2+K_1 -2) $ for both bounds in (\ref{bound:fixed}) and (\ref{bound:diminishing}). Set
	\begin{equation}
	f(K_1) =  \frac{(K_2-K_1)(4K_2 + K_1 -3)}{S} + (K_1-1)(3K_2+K_1 -2).
	\end{equation}
	Then 
	$$
	f'(K_1) = \frac{(S-1)(3K_2+2K_1 -3)}{S}.
	$$
	Apparently, $f(K_1)$ is monotone increasing with respect to $K_1$ when $K_1 \geq 2$ given $S>1$ and $K_2 \geq K_1$. 
\end{proof}

\subsection{Proof of Theorem \ref{theorem: comparison}}
\label{proof: theorem_comp}
\begin{proof}
	\label{proof: theorem_comparison}
	When $P$ is large enough such that $L\gamma P \gg 1$, the second term in bound (\ref{bound:fixed}) is dominanted by the third term.
	We omit the second term in (\ref{bound:fixed}) and denote it by $\mathcal{H}(K)$ and get 
	$$
	\mathcal{H}(K) := f_1(K) * g_1(K)
	$$
	where 
	$$
	f_1(K) := \Big(\alpha+  \eta \Big( \frac{ \big((1+a)K-1\big)   \big(2(1+a)K-1\big) }{4} \Big)
	$$
	and 
	$$
	g_1(K) :=  \Big(\frac{(1+a)K}{(1+a)K- \delta}\Big),~\alpha = \frac{2[\EE F(\widetilde{\bw}_1)-F^*]}{T\gamma},~\eta= \frac{L^2\gamma^2M}{6B}.
	$$
	On the other hand, we denote the similar bound of \KAVG as $\chi(K)$ by plugging in $K_2=K$, $K_1=1$, $S=1$ in (\ref{theorem: fixed}) (also see \cite{fan2018kavg}), which is 
	\begin{equation}
	\chi(K) :=  f_2(K)*g_2(K),
	\end{equation}
	where 
	$$
	f_2(K) := \alpha  + \eta (K-1)(2K-1)  ,~~g_2(K) :=  \Big(\frac{K}{K- \delta}\Big).
	$$
	In the following, we show that $\mathcal{H}(K)< \chi(K)$ uniformly for all $K\geq 2$ and $a\in [0,0.443]$. It is easy to see that $0<g_1(K) < g_2(K)$ for all $K\geq 2$.
	Next, we show that $f_1(K) < f_2(K)$ for all $K\geq 2$ and $a\in [0, 0.443]$. Set 
	\begin{equation}
		F(K) :=f_1(K) - f_2(K).
	\end{equation}
	Then $F(K)$ is a quadratic function of $K$, 
	\begin{equation}
		F'(K) = ((1+a)^2-4)K + 3 -\frac{3(1+a)}{4},
	\end{equation}
	Then it is easy to see that for all $a\in [0,0.779]$ and $F'(K)\leq 0$ for all $K\geq 2$. Meanwhile, 
	\begin{equation}
	 F(2) = 2(1+a)^2 - \frac{3}{2}(1+a) -\frac{11}{4},
	\end{equation}
	and it is easy to check that $F(2) <0$ for all $a\in [0,0.606]$.
	As a result, $F(K) < 0$ for all $K\geq 2$ and $a\in[0,0.606]$. This implies that $\mathcal{H}(K) < \chi(K)$ for all $K\geq 2$ and all $a\in [0,0.606]$.

\end{proof}

\bibliographystyle{plainnat}
\bibliography{refer}

\begin{thebibliography}{26}
\providecommand{\natexlab}[1]{#1}
\providecommand{\url}[1]{\texttt{#1}}
\expandafter\ifx\csname urlstyle\endcsname\relax
  \providecommand{\doi}[1]{doi: #1}\else
  \providecommand{\doi}{doi: \begingroup \urlstyle{rm}\Url}\fi

\bibitem[Bottou et~al.(2018)Bottou, Curtis, and
  Nocedal]{bottou2018optimization}
L{\'e}on Bottou, Frank~E Curtis, and Jorge Nocedal.
\newblock Optimization methods for large-scale machine learning.
\newblock \emph{SIAM Review}, 60\penalty0 (2):\penalty0 223--311, 2018.

\bibitem[Chen et~al.(2016)Chen, Pan, Monga, Bengio, and
  Jozefowicz]{chen2016revisiting}
Jianmin Chen, Xinghao Pan, Rajat Monga, Samy Bengio, and Rafal Jozefowicz.
\newblock Revisiting distributed synchronous sgd.
\newblock \emph{arXiv preprint arXiv:1604.00981}, 2016.

\bibitem[Dean et~al.(2012)Dean, Corrado, Monga, Chen, Devin, Mao, Senior,
  Tucker, Yang, Le, et~al.]{dean2012large}
Jeffrey Dean, Greg Corrado, Rajat Monga, Kai Chen, Matthieu Devin, Mark Mao,
  Andrew Senior, Paul Tucker, Ke~Yang, Quoc~V Le, et~al.
\newblock Large scale distributed deep networks.
\newblock In \emph{Advances in neural information processing systems}, pages
  1223--1231, 2012.

\bibitem[Dekel et~al.(2012)Dekel, Gilad-Bachrach, Shamir, and
  Xiao]{dekel2012optimal}
Ofer Dekel, Ran Gilad-Bachrach, Ohad Shamir, and Lin Xiao.
\newblock Optimal distributed online prediction using mini-batches.
\newblock \emph{Journal of Machine Learning Research}, 13\penalty0
  (Jan):\penalty0 165--202, 2012.

\bibitem[Deng et~al.(2009)Deng, Dong, Socher, Li, Li, and
  Fei-Fei]{imagenet_cvpr09}
J.~Deng, W.~Dong, R.~Socher, L.-J. Li, K.~Li, and L.~Fei-Fei.
\newblock {ImageNet: A Large-Scale Hierarchical Image Database}.
\newblock In \emph{CVPR09}, 2009.

\bibitem[Ghadimi and Lan(2013)]{ghadimi2013stochastic}
Saeed Ghadimi and Guanghui Lan.
\newblock Stochastic first-and zeroth-order methods for nonconvex stochastic
  programming.
\newblock \emph{SIAM Journal on Optimization}, 23\penalty0 (4):\penalty0
  2341--2368, 2013.

\bibitem[Hazan and Kale(2014)]{hazan2014beyond}
Elad Hazan and Satyen Kale.
\newblock Beyond the regret minimization barrier: optimal algorithms for
  stochastic strongly-convex optimization.
\newblock \emph{The Journal of Machine Learning Research}, 15\penalty0
  (1):\penalty0 2489--2512, 2014.

\bibitem[He et~al.(2016)He, Zhang, Ren, and Sun]{he2016deep}
Kaiming He, Xiangyu Zhang, Shaoqing Ren, and Jian Sun.
\newblock Deep residual learning for image recognition.
\newblock In \emph{Proceedings of the IEEE conference on computer vision and
  pattern recognition}, pages 770--778, 2016.

\bibitem[Howard et~al.(2017)Howard, Zhu, Chen, Kalenichenko, Wang, Weyand,
  Andreetto, and Adam]{howard2017mobilenets}
Andrew~G Howard, Menglong Zhu, Bo~Chen, Dmitry Kalenichenko, Weijun Wang,
  Tobias Weyand, Marco Andreetto, and Hartwig Adam.
\newblock Mobilenets: Efficient convolutional neural networks for mobile vision
  applications.
\newblock \emph{arXiv preprint arXiv:1704.04861}, 2017.

\bibitem[Johnson and Zhang(2013)]{johnson2013accelerating}
Rie Johnson and Tong Zhang.
\newblock Accelerating stochastic gradient descent using predictive variance
  reduction.
\newblock In \emph{Advances in neural information processing systems}, pages
  315--323, 2013.

\bibitem[Krizhevsky and Hinton(2009)]{krizhevsky2009learning}
Alex Krizhevsky and Geoffrey Hinton.
\newblock Learning multiple layers of features from tiny images.
\newblock 2009.

\bibitem[Li et~al.(2014)Li, Andersen, Park, Smola, Ahmed, Josifovski, Long,
  Shekita, and Su]{li2014scaling}
Mu~Li, David~G Andersen, Jun~Woo Park, Alexander~J Smola, Amr Ahmed, Vanja
  Josifovski, James Long, Eugene~J Shekita, and Bor-Yiing Su.
\newblock Scaling distributed machine learning with the parameter server.
\newblock In \emph{OSDI}, volume~1, page~3, 2014.

\bibitem[Lin et~al.(2018)Lin, Stich, and Jaggi]{lin2018don}
Tao Lin, Sebastian~U Stich, and Martin Jaggi.
\newblock Don't use large mini-batches, use local sgd.
\newblock \emph{arXiv preprint arXiv:1808.07217}, 2018.

\bibitem[Loshchilov and Hutter(2016)]{loshchilov2016sgdr}
Ilya Loshchilov and Frank Hutter.
\newblock Sgdr: stochastic gradient descent with restarts.
\newblock \emph{Learning}, 10:\penalty0 3, 2016.

\bibitem[Recht et~al.(2011)Recht, Re, Wright, and Niu]{recht2011hogwild}
Benjamin Recht, Christopher Re, Stephen Wright, and Feng Niu.
\newblock Hogwild: A lock-free approach to parallelizing stochastic gradient
  descent.
\newblock In \emph{Advances in neural information processing systems}, pages
  693--701, 2011.

\bibitem[Robbins and Monro(1951)]{robbins1951stochastic}
Herbert Robbins and Sutton Monro.
\newblock A stochastic approximation method.
\newblock \emph{The annals of mathematical statistics}, pages 400--407, 1951.

\bibitem[Simonyan and Zisserman(2014)]{simonyan2014very}
Karen Simonyan and Andrew Zisserman.
\newblock Very deep convolutional networks for large-scale image recognition.
\newblock \emph{arXiv preprint arXiv:1409.1556}, 2014.

\bibitem[Smith et~al.(2016)Smith, Forte, Ma, Tak{\'a}c, Jordan, and
  Jaggi]{smith2016cocoa}
Virginia Smith, Simone Forte, Chenxin Ma, Martin Tak{\'a}c, Michael~I Jordan,
  and Martin Jaggi.
\newblock Cocoa: A general framework for communication-efficient distributed
  optimization.
\newblock \emph{arXiv preprint arXiv:1611.02189}, 2016.

\bibitem[Stich(2018)]{stich2018local}
Sebastian~U Stich.
\newblock Local sgd converges fast and communicates little.
\newblock \emph{arXiv preprint arXiv:1805.09767}, 2018.

\bibitem[Szegedy et~al.(2015)Szegedy, Liu, Jia, Sermanet, Reed, Anguelov,
  Erhan, Vanhoucke, and Rabinovich]{szegedy2015going}
Christian Szegedy, Wei Liu, Yangqing Jia, Pierre Sermanet, Scott Reed, Dragomir
  Anguelov, Dumitru Erhan, Vincent Vanhoucke, and Andrew Rabinovich.
\newblock Going deeper with convolutions.
\newblock In \emph{Proceedings of the IEEE conference on computer vision and
  pattern recognition}, pages 1--9, 2015.

\bibitem[Wang et~al.(2017)Wang, Wang, and Srebro]{wang2017memory}
Jialei Wang, Weiran Wang, and Nathan Srebro.
\newblock Memory and communication efficient distributed stochastic
  optimization with minibatch prox.
\newblock \emph{arXiv preprint arXiv:1702.06269}, 2017.

\bibitem[Wang and Joshi(2018)]{wang2018adaptive}
Jianyu Wang and Gauri Joshi.
\newblock Adaptive communication strategies to achieve the best error-runtime
  trade-off in local-update sgd.
\newblock \emph{arXiv preprint arXiv:1810.08313}, 2018.

\bibitem[Yu et~al.(2018)Yu, Yang, and Zhu]{yu2018parallel}
Hao Yu, Sen Yang, and Shenghuo Zhu.
\newblock Parallel restarted sgd for non-convex optimization with faster
  convergence and less communication.
\newblock \emph{arXiv preprint arXiv:1807.06629}, 2018.

\bibitem[Zhang et~al.(2016)Zhang, De~Sa, Mitliagkas, and
  R{\'e}]{zhang2016parallel}
Jian Zhang, Christopher De~Sa, Ioannis Mitliagkas, and Christopher R{\'e}.
\newblock Parallel sgd: When does averaging help?
\newblock \emph{arXiv preprint arXiv:1606.07365}, 2016.

\bibitem[Zhou and Cong(2018)]{fan2018kavg}
Fan Zhou and Guojing Cong.
\newblock On the convergence properties of a k-step averaging stochastic
  gradient descent algorithm for nonconvex optimization.
\newblock In \emph{Proceedings of the Twenty-Seventh International Joint
  Conference on Artificial Intelligence, {IJCAI-18}}, pages 3219--3227, 2018.

\bibitem[Zinkevich et~al.(2010)Zinkevich, Weimer, Li, and
  Smola]{zinkevich2010parallelized}
Martin Zinkevich, Markus Weimer, Lihong Li, and Alex~J Smola.
\newblock Parallelized stochastic gradient descent.
\newblock In \emph{Advances in neural information processing systems}, pages
  2595--2603, 2010.

\end{thebibliography}

\end{document}